\documentclass[]{article}

\PassOptionsToPackage{numbers,sort,compress}{natbib}
\PassOptionsToPackage{dvipsnames}{xcolor}
\usepackage[preprint]{neurips_2026}
\usepackage[utf8]{inputenc}
\usepackage[T1]{fontenc}
\usepackage[colorlinks,citecolor=Blue,linkcolor=BrickRed,urlcolor=Blue,backref=page]{hyperref}
\usepackage{url}
\usepackage{algorithm}
\usepackage{algpseudocode}
\usepackage{booktabs}
\usepackage{amsthm,thmtools,thm-restate}
\RequirePackage{amsmath}
\usepackage{cleveref}
\RequirePackage{amssymb}
\RequirePackage{mathtools}
\ifx\proof\undefined 
\RequirePackage{amsthm}
\fi
\newtheorem{theorem}{Theorem}[section]
\newtheorem{corollary}[theorem]{Corollary}
\newtheorem{lemma}[theorem]{Lemma}
\newtheorem{proposition}[theorem]{Proposition}
\newtheorem{assumption}[theorem]{Assumption}
\theoremstyle{definition}
\newtheorem{definition}[theorem]{Definition}
\newtheorem{remark}[theorem]{Remark}


\crefname{figure}{Fig.}{Figs.}
\crefname{definition}{Defn.}{Defns.}
\crefname{corollary}{Corollary}{Corollaries}
\crefname{proposition}{Prop.}{Props.}
\crefname{theorem}{Thm.}{Thms.}
\crefname{remark}{Remark}{Remarks}
\crefname{lemma}{Lemma}{Lemmata}
\crefname{claim}{Claim}{Claims}
\crefname{table}{Table}{Tables}
\crefname{section}{\S}{\S\S}
\crefname{subsection}{\S}{\S\S}
\crefname{subsubsection}{\S}{\S\S}
\crefname{assumption}{Assm.}{Assms.}
\crefname{appendix}{Appx.}{Appxs.}
\crefname{equation}{Eq.}{Eqs.}
\crefname{example}{Ex.}{Exs.}
\usepackage{amsfonts}
\usepackage{amssymb}
\usepackage{microtype}
\usepackage{xcolor}
\usepackage{graphicx}
\usepackage{notation}
\usepackage{fontawesome}
\usepackage{multirow}
\usepackage{enumitem}
\usepackage{makecell}
\usepackage{titletoc}
\usepackage{mdframed}
\title{
Anchor PCA
}
\author{
 Benedikt Seiter\\
  ETH Zürich\\
  \texttt{bseiter@ethz.ch}\\
  \And
  Anya Fries\\
  Seminar for Statistics\\
  ETH Zürich\\
  \And
  Julius von Kügelgen \\
  Seminar for Statistics\\
  ETH Zürich\\
  \And
  Jonas Peters\\
  Seminar for Statistics\\
  ETH Zürich\\
}

\DeclareMathOperator{\Tr}{Tr}
\DeclareMathOperator*{\argmax}{arg\,max}
\DeclareMathOperator*{\argmin}{arg\,min}

\DeclareMathOperator{\Imop}{Im}

\makeatletter
\newcommand{\anchor}{\mathrel{\mathpalette\anchoraux{}}}
\newcommand{\anchoraux}[2]{%
  \ooalign{%
    \hfil\raise0.01ex\hbox{\vrule width 0.06em height 1.03ex depth 0pt}\hfil\cr
    \hfil\raise-0.24ex\hbox{$\m@th#1\smile$}\hfil\cr
  }%
}
\makeatother

\newcommand{\R}{\mathbb{R}}

\newcommand{\Spp}{\mathbb{S}_+^p}
\newcommand{\cE}{\mathcal{E}}
\newcommand{\Ok}{\mathcal{O}_{p\times k}}

\DeclareRobustCommand{\AnchorPCA}[1]{%
  \ensuremath{\text{\normalfont\ttfamily AnchorPCA}_{%
    \if\relax\detokenize{#1}\relax
      \lambda
    \else
      #1
    \fi
  }}%
}

\DeclareRobustCommand{\AnchorPCAinfty}{\AnchorPCA{\infty}}

\DeclareRobustCommand{\AnchorPCAwout}{%
  \ensuremath{\text{\normalfont\ttfamily AnchorPCA}
  }%
}

\newcommand{\poolPCA}{\textup{\texttt{poolPCA}}}
\newcommand{\fnorm}[1]{\left\lVert #1 \right\rVert_{\mathrm{F}}}
\newcommand{\opnorm}[1]{\left\lVert #1 \right\rVert_{\mathrm{op}}}
\newcommand{\barSigma}{\overline{\Sigma}}
\newcommand{\barPi}{\overline{\Pi}}
\newcommand{\hatPi}{\widehat{\Pi}}
\newcommand{\hatSigma}{\widehat{\Sigma}}
\newcommand{\barR}{\overline{R}}

\newcommand{\Sstar}{\mathcal{S}_{\star}}
\newcommand{\SstarPerp}{\mathcal{S}_{\star}^{\perp}}
\newcommand{\Sanchor}[1]{\mathcal{S}^{\anchor,\lambda}_{#1}}
\newcommand{\Pianchor}[1]{\Pi^{\anchor,\lambda}_{#1}}
\newcommand{\Wanchor}[1]{{W^{\anchor,\lambda}_{#1}}}

\newcommand{\AnchorPCABoldFont}{%
  \normalfont\fontfamily{pcr}\fontseries{b}\selectfont
}

\DeclareRobustCommand{\AnchorPCAbold}[1]{%
  \ensuremath{\text{\AnchorPCABoldFont AnchorPCA}_{%
    \boldsymbol{%
      \if\relax\detokenize{#1}\relax
        \lambda
      \else
        #1
      \fi
    }%
  }}%
}

\begin{document}

\maketitle

\begin{abstract}
\looseness-1 
Principal component analysis (PCA) is one of the most widely used unsupervised dimension reduction techniques. 
We study PCA for data from multiple related domains.
Since principal components generally differ across domains, one way to obtain a shared low-rank embedding is to perform PCA on the pooled data.
However, this approach can focus on spurious directions that exhibit high variation in only a few domains.
To find a robust embedding that still explains most variance in unseen but similar domains, we propose instead
to focus on shared directions of variation.
To this end, we introduce Anchor PCA which trades off overall explained variance with 
agreement between the shared and domain-specific low-rank embeddings.
Anchor PCA amounts to PCA on a modified target matrix and thus can be solved efficiently.
Moreover, we show that Anchor PCA recovers a maximal invariant subspace and admits a minimax reconstruction interpretation under bounded domain-specific covariance inflations.
On simulated and real-world gas sensor data with temporal drift, we
demonstrate, respectively,
that Anchor PCA recovers the maximally invariant subspace and yields embeddings that explain more variance on unseen domains than the pooling baseline and a worst-case alternative. 
Taken together, these findings establish Anchor PCA as a promising approach to robust unsupervised dimension reduction from multi-domain data.

\end{abstract}

\section{Introduction}
\label{sec:intro}

Principal component analysis (PCA) \citep{Jolliffe2002} is one of the most widely used methods for unsupervised dimensionality reduction. Given a dataset, PCA finds a low-dimensional subspace that captures as much variance as possible. It is used extensively in exploratory data analysis, preprocessing, and representation learning across the natural sciences, engineering, and machine learning \citep[e.g.,][]{novembre2008genes,migliavacca2021three,wright2004worldwide,turk1991eigenfaces}. Here, we focus on the following two use cases of PCA: 
providing interpretable directions
(that may carry physical meaning) and explaining variance. 

We assume that we have data collected across multiple related but heterogeneous settings---different geographic locations, time points or experimental conditions.
We refer to these settings as \emph{source domains}. 
If the 
principal subspaces differ substantially across the source domains, 
we may achieve neither of the two goals mentioned above. 
For example, PCA applied to the average covariance (\poolPCA{}) could be dominated by a direction that concentrates variance in only one or two domains. This direction may explain very little variance elsewhere and thus may 
not yield physically relevant directions; similarly, it may not explain a lot of variance in a target domain.

\looseness-1 In this paper, we propose to 
focus not only on
explained variance but also on finding invariant directions. 
This builds on work in multi-domain dimension reduction and invariant learning. In dimension reduction, recent approaches optimize worst-case performance across domains \citep{fries2026worstcaselowrankapproximations,wang2026stablepcadistributionallyrobustlearning} or enforce invariance of distributions \citep{norman2026unsupervised}. In supervised learning, invariance of predictive relations or representations across domains is widely studied \citep{arjovsky2019invariant,pmlr-v162-wang22x}. In contrast, we introduce a geometric notion of invariance based on agreement of domain-specific principal subspaces, and explicitly trade this against explained variance. 
Similar
in spirit is anchor regression \citep{rothenhausler2020anchor}, which introduces a soft invariance penalty in the supervised setting.
Concretely, we propose
\AnchorPCA{} (with $\lambda \in \mathbb{R}_{>0} \cup \{\infty\}$), a method that
finds a $k$-dimensional subspace 
where
the  parameter $\lambda$ 
trades off between
explaining variance and 
finding invariant directions 
(e.g., for $\lambda = 0$, \AnchorPCA{} reduces to \poolPCA{}).

We prove that \AnchorPCAwout{} satisfies two attractive theoretical properties. First, if there exists a nontrivial $m$-dimensional invariant subspace $\Sstar$, that is, a subspace that lies in the top-$k$ principal subspace of every domain, then \AnchorPCAinfty{} is guaranteed to contain it, and the leading directions of \AnchorPCA{} converge to $\Sstar$ as $\lambda \to \infty$ (\cref{thm:invariant_subspace_limit}). Second, \AnchorPCA{} admits a minimax reconstruction interpretation: it is the rank-$k$ projector that minimizes worst-case average reconstruction error over test domains that are subject to domain-specific covariance inflations (\cref{thm:minimax_reconstruction}).

We now introduce the setting and 
build intuition through the motivating example in \cref{subsec:motivating_example_4d}.
\Cref{sec:definitions}
introduces Anchor PCA,
\cref{sec:recovery_guarantees} and \cref{sec:robust_reconstruction} contain the main theoretical results, 
\cref{sec:algorithm} shows learning methods 
for finite data,
and
\cref{sec:experiments} contains 
simulated and real-world 
experiments.
\Cref{subsec:related_work},~\cref{app:further_details},~\cref{app:proofs}, and~\cref{app:details_experiments} contain  
further related work,
further details on Anchor PCA, 
proofs,
and experiment details,
respectively.

\subsection{Setting and notation} \label{sec:setting_and_notation}
We consider a set of $E$ source domains indexed by %
$e\in\cE:=\{1,\dots,E\}$. For each domain $e$, %
let $x_e\in\R^p$ be a zero-mean random (column)
vector with covariance $\Sigma_e\in\Spp$ 
where $\Spp := \{\Sigma \in \mathbb{R}^{p\times p} : \Sigma = \Sigma^\top,\ \Sigma \succeq 0\}$ is the set of symmetric positive semidefinite matrices and we further assume $\Tr(\Sigma_e)>0$.
Recall that 
PCA selects the
eigenvectors of a covariance matrix $\Sigma$ corresponding to its $k$ largest
eigenvalues, also called a top-$k$ eigenbasis.
We denote a top-$k$ eigenbasis of $\Sigma_e$ by $W_e\in\Ok$, where 
${\Ok:=\{W\in\R^{p\times k}:W^\top W=I_k\}}$ is the Stiefel manifold of $p\times k$ matrices with orthonormal columns. 
The rank-$k$ principal projection matrices 
are given by $\Pi_k^{(e)}:=W_eW_e^\top$, which can be viewed as a linear autoencoder with encoder $W_e^\top$ and decoder $W_e$.

Given the 
domain-specific
covariance
matrices $(\Sigma_e)_{e \in \cE}$ 
(or i.i.d.\ samples from the corresponding distributions), we seek to solve the following problem, which we formalize in 
\cref{sec:recovery_guarantees} and \cref{sec:robust_reconstruction}.
\begin{mdframed}[linewidth=0.8pt]
\centering
Find a shared $k$-dimensional subspace that 
is robust with
respect to changing domains.
\end{mdframed}
\vspace{-1.5mm}
\looseness-1 %
Denote the average covariance by ${\barSigma:=\frac{1}{E}\sum_{e=1}^E \Sigma_e}$ and the average projector by ${\barPi:=\frac{1}{E}\sum_{e=1}^E \Pi_k^{(e)}}$.
For general $W\in\Ok$, we write $\Pi_W:=WW^\top$ for the projector onto the space spanned by $W$.
The Frobenius %
and operator norms of a matrix $X\in\R^{p\times k}$ are defined as ${\fnorm{X}:= \sqrt{\operatorname{Tr}(X^\top X)}}$ and $\opnorm{X} := \sup_{\|v\|_2=1}\|Xv\|_2$, respectively. 
For a linear subspace $\mathcal S\le \mathbb R^p$, let $\Pi_{\mathcal S}$ denote the orthogonal projector onto $\mathcal S$; 
if $U$ has orthonormal columns spanning~$\mathcal S$, then $\Pi_{\mathcal S}=UU^\top$.

\subsection{Motivating example}
\label{subsec:motivating_example_4d}
\looseness-1 We consider a simple example in 
$\mathbb R^4$ (so $p\!=\!4$) with
canonical basis 
$c_1, c_2, c_3, c_4$, writing $a:= c_1$ and $b := c_2$,
where we seek a $k\!=\!3$-dimensional representation.
We observe $E\!=\!3$ domains. 
Table~\ref{tab:motivating_example_4d} summarizes the 
eigenvectors and
eigenvalues of the covariance matrices
(we defer the details to
\cref{app:motivating_example_4d_details}).
\begin{table}[tbp]
\caption{\textbf{Eigendecomposition and reconstruction errors for the motivating example.}
If $k=3$,
the invariant directions are
$a=c_1$ and $b=c_2$; $u$, $v$, and $w$ are domain-specific %
directions
in the $c_3$--$c_4$ plane, and $u^\perp$, $v^\perp$, $w^\perp$ are their
orthogonal complements in that plane. The domain-specific columns list eigenvectors and
eigenvalues, equivalently the explained variance of each direction. The invariant directions do
not need fixed ranks: $a$ is top-ranked in domain~2, while $b$ outranks $a$ in
domain~3. The perturbed column evaluates the same reconstruction-error curves shown later
in Fig.~\ref{fig:motivating_example_4d_perturbation_path} at the dotted vertical line.}
\label{tab:motivating_example_4d}
\centering
\small
\begingroup
\setlength{\tabcolsep}{3.5pt}
\begin{tabular}{@{}c@{\hspace{1.8em}}c@{}}
\begin{tabular}{@{}c cc cc cc@{}}
\toprule
\multirow{2}{*}{\textbf{Rank}} & \multicolumn{2}{c}{\textbf{Domain 1}} & \multicolumn{2}{c}{\textbf{Domain 2}} & \multicolumn{2}{c}{\textbf{Domain 3}} \\
\cmidrule(lr){2-3}\cmidrule(lr){4-5}\cmidrule(lr){6-7}
 & direction & value & direction & value & direction & value \\
\midrule
1 & $u$ & $220$ & $a$ & $120$ & $w$ & $320$ \\
2 & $a$ & $140$ & $v$ & $90$ & $b$ & $120$ \\
3 & $b$ & $90$ & $b$ & $70$ & $a$ & $80$ \\
4 & $u^\perp$ & $25$ & $v^\perp$ & $10$ & $w^\perp$ & $10$ \\
\bottomrule
\end{tabular}
&
\begin{tabular}{@{}lcc@{}}
\toprule
\multirow{2}{*}{\textbf{Method}} &
\multicolumn{2}{c}{\textbf{Avg. reconstr. error} ($\downarrow$)} \\
\cmidrule(l){2-3}
& \makecell{Original\\covariances}
& \makecell{Perturbed\\covariances} \\
\midrule
\poolPCA{} & $93.3$ & $243.3$ \\
\AnchorPCA{\lambda=25} & $98.5$ & $163.4$ \\
\AnchorPCAinfty{} & $113.8$ & $172.5$ \\
\bottomrule
\end{tabular}
\end{tabular}
\endgroup
\end{table}
\begin{figure}[tbp]
\centering
\includegraphics[width=\textwidth, trim={0 4mm 0 11mm}, clip]{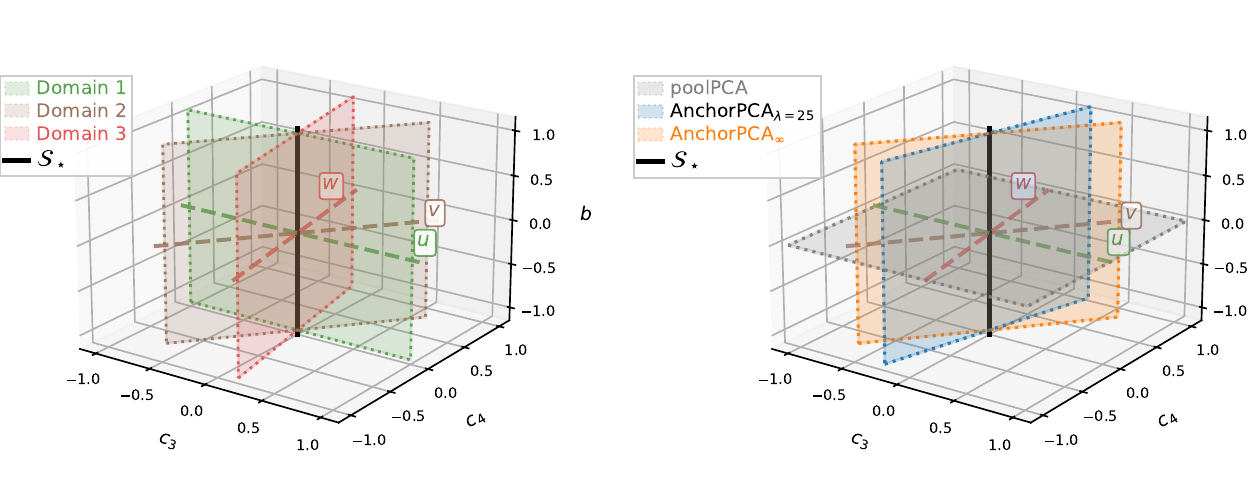}
\vspace{-1.5em}
\caption{\looseness-1 
\textbf{Geometric view of the motivating example.} Here, we ignore the direction $a$, which is chosen by all methods, yielding a reduced representation in the coordinates
$(c_3,c_4,b)$. Left: the three local
top-$3$ eigenspaces, shown as planes through the invariant line. Right: the
recovered rank-$3$ subspaces in the same coordinates.
Unlike \poolPCA{}, 
both \AnchorPCA{\lambda = 25}
and \AnchorPCAinfty{} 
contain the true invariant direction $b$ and differ
only in the third direction ($\lambda = 25$ puts more emphasis on explained variance, so \AnchorPCA{\lambda = 25} is pulled toward the higher variance direction $w$%
).
}
\label{fig:motivating_example_4d_geometry}
\end{figure}

Informally, 
we consider a direction invariant if it
helps explain variance
in every domain (the domain acts as an `anchor' specifying what we would like to be invariant against \citep[see also][]{rothenhausler2020anchor}).
In this example, 
$\operatorname{span}(a, b)$ is an
invariant space:
in all domains, $a$ and $b$ are among the top-$k$ eigenvectors.

\cref{fig:motivating_example_4d_geometry}
visualizes the subspaces that are chosen by the different methods.
The figure shows the qualitative behavior that will recur
throughout the paper. 
\poolPCA{} follows pooled variance and contains the space spanned by the 
nuisance directions: it keeps $a$ but misses the invariant direction~$b$. By
contrast, 
both $\AnchorPCA{\lambda = 25}$ and \AnchorPCAinfty{} 
recover the invariant span
exactly. The difference between 
these two lies in the third direction:
$\AnchorPCA{\lambda=25}$ 
puts a stronger focus on the explained variance and is thus pulled more strongly toward $w$\looseness-1, which is the direction with largest eigenvalue (see~\cref{tab:motivating_example_4d}).

In this paper, we focus on the subspace and not the basis vectors; that is, if in domain 3, the top-$k$ eigenspace contained $\tilde{a}, \tilde{b}$, instead of $a,b$, such that $\operatorname{span}(a,b)=\operatorname{span}(\tilde{a},\tilde{b})$, we would still regard $\operatorname{span}(a,b)$ an invariant subspace.
The other directions $u$, $v$, and $w$ can be considered nuisance directions (in the light of invariance).

\section{Maximal invariant subspace and Anchor PCA} \label{sec:definitions}
We now formally define the notion of a maximal invariant subspace and the Anchor PCA problem.
\subsection{Maximal invariant subspace} 
Given source domains $\cE$ with covariances
$(\Sigma_e)_{e \in \cE}$,
a natural choice 
is to ignore the domain structure and 
solve PCA on the pooled covariance $\barSigma$. We refer to this procedure as \poolPCA.
Formally, a
matrix $W^\star$
solves 
rank-$k$ \poolPCA{} if
$W^\star\in\argmax_{W\in\Ok}\Tr(W^\top\barSigma W)$.
Indeed, for any $W=[w_1,\dots,w_k]\in\Ok$, we have
$\Tr(W^\top\barSigma W)=\sum_{j=1}^k w_j^\top \barSigma w_j
=\Tr(\barSigma\Pi_W)$, which is the total variance explained by projecting
onto the candidate subspace $\operatorname{span}(W)$; hence, maximizing the trace recovers the
$k$ leading eigenvectors of $\barSigma$.
By definition, \poolPCA{}
focuses on the average explained variance, which may result in large differences in explained variance 
across source domains. 

\looseness-1 
In this paper, we propose to focus on 
directions of variation that are shared among all domains. 
In particular, we can consider those 
directions that appear within the top-$k$ principal directions in all domains.    
\begin{definition}[Maximal invariant subspace]
Let $k < p$ 
be fixed. The \emph{maximal invariant subspace} and its dimension %
are defined as
\begin{equation*}
\Sstar:=\bigcap_{e=1}^E \Imop\left(\Pi_k^{(e)}\right),
\qquad
m:=\dim(\Sstar).
\end{equation*}
\end{definition}
(If there are eigenvalue ties, $\Sstar$ and $\barPi$ and the \texttt{AnchorPCA} solutions---defined below---depend on a fixed choice of a rank-$k$ principal projector in each domain.)
The definition of \(\Sstar\) depends on~\(k\):
in general, the dimension $m$ of $\Sstar$ is at most $k$, but varies according to the problem at hand.
In Example~\ref{subsec:motivating_example_4d}, \(m=2\) and \(\Sstar=\operatorname{span}(a,b)\).
Equivalently, we can define $\Sstar$ as the eigenspace of $\barPi$ associated with eigenvalue 1;
indeed, for every unit vector $v$, we have 
$v^\top\barPi v = \frac1E\sum_{e=1}^E v^\top\Pi_k^{(e)}v \le 1$,
and equality holds if and only if $\Pi_k^{(e)}v=v$ for every \(e\), that is, if and only if \(v\in\Sstar\).
We will use this characterization when estimating $\Sstar$ from data (\Cref{subsec:identifying_inv_subspace}).

\subsection{Anchor PCA}
In practice, we 
may not expect to find any 
(nonzero) 
direction that is exactly invariant, in which case $\Sstar = \{0\}$ is trivial.
Furthermore, we may be interested in finding a subspace of fixed dimension~$k$, rather than one whose dimension 
can %
vary depending on the %
concrete set of
covariance matrices.   
We therefore propose selecting a rank-$k$ subspace that minimizes disagreement with the domain-specific top-$k$ eigenspaces. 
Such a space may be represented using different 
bases; we propose to choose one that maximizes pooled explained variance.
We refer to this approach as \AnchorPCAinfty{}. 
\begin{definition}[\AnchorPCAinfty{}]
\label{def:anchorpca_infty}
A matrix 
$W^{\anchor,\infty}_{k} \in\Ok$ 
solves \emph{rank-$k$ \AnchorPCAinfty{}} 
if
\begin{equation} \label{eq:anchorpca_infty_def}
W^{\anchor,\infty}_{k}
\in\argmax_{W\in\Ok
}\Tr\left(W^\top\barSigma W\right)
\quad \text{ subject to: } \quad    W%
\in \argmin_{\widetilde W\in\Ok}\sum_{e=1}^E\fnorm{\Pi_{\widetilde W}-\Pi_k^{(e)}}^2.
\end{equation}
The corresponding 
\emph{rank-$k$ \AnchorPCA{} subspace} and \emph{projection matrix} 
are given by 
$\mathcal S^{\anchor, \infty}_k := \Imop(W^{\anchor,\infty}_{k})$ and
$\Pi^{\anchor,\infty}_{k}:=W^{\anchor,\infty}_{k}(W^{\anchor,\infty}_{k})^\top$, respectively.
\end{definition}
\looseness-1 We prove in \cref{sec:robust_reconstruction} that \AnchorPCAinfty{} satisfies an asymptotic minimax-optimality guarantee for reconstruction error under additional variance inflations of arbitrary strengths in certain directions.
If we only expect mild inflations, however, \AnchorPCAinfty{} is 
overly
conservative.
We therefore also consider the following unconstrained version in which the hard invariance constraint is relaxed to a soft penalty.
\begin{definition}[\AnchorPCA{}]
\label{def:anchorpca}
For $\lambda>0$, a matrix 
$\Wanchor{k} \in\Ok$ 
solves \emph{rank-$k$ \AnchorPCA{}} 
if
\begin{equation} \label{eq:anchorPCA}
    \Wanchor{k} \in \argmax_{W\in\Ok}
\left\{
\Tr\left(W^\top\barSigma W\right)
-\lambda\sum_{e=1}^E \fnorm{\Pi_W-\Pi_k^{(e)}}^2
\right\}.
\end{equation}
The corresponding
\emph{rank-$k$ \AnchorPCA{} subspace}
and \emph{projection matrix} 
are given by 
$\Sanchor{k}:=\Imop (\Wanchor{k})$
and 
$\Pianchor{k}:=W^{\anchor, \lambda}_k(W^{\anchor, \lambda}_k)^\top$, respectively.
\end{definition}
While \AnchorPCAinfty{}
only considers maximally invariant solutions and explained variance serves as a secondary objective, \AnchorPCA{} balances the two, with the trade-off controlled by the invariance level $\lambda$.
We prove in \cref{sec:relation_methods}
that, as $\lambda$ grows, the two solutions converge to each other: under suitable assumptions, we have  
\[
\opnorm{\Pianchor{k}-\Pi^{\anchor,\infty}_{k}} \to 0 \qquad \text{as } \lambda\to\infty.
\]
\looseness-1 \Cref{thm:invariant_subspace_limit_extended}~(ii) 
studies the convergence at 
the level of directions rather than subspaces and projections.

Both Anchor PCA methods can be extended to a soft agreement score: even when \(\Sstar\) is too small or trivial, directions with larger \(\barPi\)-eigenvalues are still preferred, rather than requiring 
exact equality of eigenvalues to one.

\subsection{Projection-based formulation} \label{sec:projection_based_formulation}
The optimization problems~\eqref{eq:anchorpca_infty_def}
and~\eqref{eq:anchorPCA}
allow for an equivalent, projection-based view: 
expanding the penalty in~\eqref{eq:anchorpca_infty_def} gives 
$\sum_{e=1}^E\|{\Pi_W-\Pi_k^{(e)}}\|_\mathrm{F}^2 = \sum_{e=1}^E 2k - 2\Tr(\Pi_W\Pi_k^{(e)}) = 2Ek-2E\Tr(W^\top\barPi W)$.
Hence, the constraint is equivalent 
to maximizing $\Tr(W^\top\barPi W)$ over $\Ok$.
Similarly, for~\eqref{eq:anchorPCA}, 
solutions 
are obtained by solving a standard rank-$k$ PCA on a modified target matrix $M_\lambda$
\begin{equation}
\label{eq:anchorpca_spectral}
\max_{W\in\Ok}\Tr(W^\top M_\lambda W),
\qquad \text{where} \qquad
M_\lambda:=\barSigma+2E\lambda\,\barPi.
\end{equation}

This view 
has two advantages:
(i)~It suggests computationally feasible ways to solve the 
optimization problems of 
  \AnchorPCAinfty{} 
  and
  \AnchorPCA{} 
(see \cref{sec:algorithm}).
(ii)~It allows for a geometric interpretation: 
$\Tr(\Pi_W\Pi_k^{(e)})$ is the sum of squared cosines of the principal angles
between $\operatorname{span}(W)$ and the 
principal subspace of domain~$e$.
Thus, \AnchorPCAinfty{} first maximizes average overlap with the
domain-specific principal subspaces and then maximizes pooled explained variance
among 
those solutions.
\subsection{Choosing an ordered basis} \label{subsec:orderedbasis}
Both objectives~\eqref{eq:anchorpca_infty_def}
and~\eqref{eq:anchorPCA} depend on $W$ only through
$\Pi_W=WW^\top$, 
so any 
orthonormal basis of a solution space defined by $W$ forms another solution;
statements about individual columns requires 
choosing an ordered basis. 
We propose to use the following convention for choosing an ordered basis 
(see \cref{app:ordered_representatives}
for details). 
For \AnchorPCAinfty{}, we first order directions by
decreasing agreement,
that is, by decreasing $\barPi$-eigenvalue; this
creates blocks of equal eigenvalues (which we refer to in the algorithm and proof sections); 
we then
choose a basis that diagonalizes $\barSigma$ restricted to the $\barPi$-eigenspace corresponding to that block and order the vectors 
by
non-increasing $\barSigma$-eigenvalue, equivalently, explained variance.
For \AnchorPCA{}, we use an ordered top-$k$ eigenbasis of
$M_\lambda$, see \cref{eq:anchorpca_spectral}, giving the usual sequential PCA interpretation for the modified
covariance $M_\lambda$.
Both of these conventions are implemented in our code, see~\cref{sec:algorithm}.

We will see in \cref{sec:recovery_guarantees}
and \cref{sec:robust_reconstruction}
that 
both versions of 
\texttt{AnchorPCA} 
can find invariant directions
and 
come with improved recovery and robustness properties compared to \poolPCA{}.

\section{Recovery guarantees} \label{sec:recovery_guarantees}
We now study when 
\AnchorPCAwout{}
recovers invariant structure.
As motivated in Example~\ref{subsec:motivating_example_4d}, the subspace
target is $\Sstar=\cap_{e=1}^E \Imop(\Pi_k^{(e)})$,
the part of the local top-$k$ principal subspaces shared by all domains. Since
\AnchorPCA{} and \AnchorPCAinfty{} return rank-$k$ subspaces, there are two natural recovery
questions: whether these subspaces contain $\Sstar$, and whether their leading
$m = \dim(\Sstar)$ directions span $\Sstar$. 
We first prove that the \AnchorPCAinfty{} subspace $\mathcal{S}_k^{\anchor, \infty}$ contains $\Sstar$, and that 
\AnchorPCA{} identifies $\Sstar$ asymptotically as $\lambda\to\infty$. 

\begin{theorem}[Invariant subspace identification and asymptotic convergence]
\label{thm:invariant_subspace_limit}
Let $m=\dim(\Sstar) \le k$. Let $\Pi_k^{\anchor,\infty}$ be any rank-$k$
\AnchorPCAinfty{} projection matrix. Then
\[
\Pi_k^{\anchor,\infty}\Pi_{\Sstar}=\Pi_{\Sstar},
\qquad\text{equivalently}\qquad
\Sstar\subseteq \Imop(\Pi_k^{\anchor,\infty}).
\]

Moreover, for $\lambda>0$, choose the \AnchorPCA{} solution
$\Wanchor{k}=(w_1^{\anchor,\lambda},\dots,w_k^{\anchor,\lambda})$
to be an ordered orthonormal eigenbasis of $M_\lambda$ associated with its $k$
largest eigenvalues. Let $\Pianchor{m}$ denote the projection matrix onto
$\operatorname{span}(w_1^{\anchor,\lambda},\dots,w_m^{\anchor,\lambda})$.
Then, there exists $\lambda_0$ such that for all $\lambda > \lambda_0$,
the span of these first $m$
eigenvectors is unique and
\[
\operatorname{span}(w_1^{\anchor,\lambda},\dots,w_m^{\anchor,\lambda})
\to\Sstar,
\quad\text{i.e.,}\quad
\opnorm{\Pianchor{m}-\Pi_{\Sstar}}\to0
\quad
\text{as }\lambda\to\infty.
\]
\end{theorem}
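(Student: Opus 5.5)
The plan has two parts: a containment statement for \AnchorPCAinfty{} and a perturbation argument for \AnchorPCA{}. Both rest on the projection-based formulation of \cref{sec:projection_based_formulation}. Feasible $W$ for \eqref{eq:anchorpca_infty_def} are exactly the maximizers of $\Tr(W^\top\barPi W)$ over $\Ok$. The solutions $\Wanchor{k}$ are exactly the top-$k$ eigenbases of $M_\lambda=\barSigma+2E\lambda\barPi$.

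\textbf{Containment for \AnchorPCAinfty{}.} Let $\mu_1\ge\dots\ge\mu_p$ be the eigenvalues of $\barPi$. By the characterization after the definition of $\Sstar$, $\mu_1=\dots=\mu_m=1$ with eigenspace $\Sstar$, and $\mu_{m+1}<1$. By Ky Fan, the maximum of $\Tr(\Pi_W\barPi)$ is $\sum_{j\le k}\mu_j$. Any maximizer $\Pi_W$ must then contain the full eigenspace of every eigenvalue strictly larger than $\mu_k$, and $\Sstar$ is such a space whenever $\mu_k<1$.

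To prove this for a rank-$k$ projector $P$, I will argue as follows. If $v\in\Sstar$ and $Pv\ne v$, then some unit $u\in\Imop(P)$ has $u\perp\Sstar$ while $\Imop(P)$ is not contained in the $\mu_k$-sum. Replacing $u$ by $v$ strictly increases the trace. A cleaner route is the equality case of Ky Fan: $\Tr(P\barPi)=\sum_{j\le k}\mu_j$ iff $P$ commutes with $\barPi$ and $\Imop(P)$ is spanned by eigenvectors with eigenvalues $\mu_1,\dots,\mu_k$. Since $\Imop(P)$ then consists of $k$ eigenvectors drawn from the top eigenvalues, it must include all eigenvectors with eigenvalue strictly above $\mu_k$. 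In the edge case $\mu_k=1$ we have $m\ge k$, so $m=k$. Then the unique maximizer has $\Imop(P)=\Sstar$, and containment still holds. Finally, $\Pi_k^{\anchor,\infty}\Pi_{\Sstar}=\Pi_{\Sstar}$ is just a restatement of the containment.

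\textbf{Convergence for \AnchorPCA{}.} Divide by $2E\lambda$ and set $\varepsilon=1/(2E\lambda)$, so that $N_\varepsilon:=\barPi+\varepsilon\barSigma$ has the same eigenvectors as $M_\lambda$. The spectral gap of $\barPi$ between its eigenvalue $1$ (multiplicity $m$) and $\mu_{m+1}$ is $\delta:=1-\mu_{m+1}>0$; assume $m\ge1$, since the case $m=0$ is trivial. By Weyl, the eigenvalues of $N_\varepsilon$ move by at most $\varepsilon\opnorm{\barSigma}$. Hence for $\varepsilon\opnorm{\barSigma}<\delta/2$, i.e.\ $\lambda>\lambda_0:=\opnorm{\barSigma}/(E\delta)$, the top $m$ eigenvalues of $N_\varepsilon$ are strictly separated from the rest. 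This makes the span of the first $m$ eigenvectors unique, regardless of ties elsewhere. The Davis--Kahan $\sin\Theta$ theorem then gives
\[
\opnorm{\Pianchor{m}-\Pi_{\Sstar}}\le \frac{\varepsilon\opnorm{\barSigma}}{\delta-\varepsilon\opnorm{\barSigma}}\le \frac{2\varepsilon\opnorm{\barSigma}}{\delta}=\frac{\opnorm{\barSigma}}{E\lambda\delta}\to0 .
\]

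\textbf{Main obstacle.} The delicate step is the rigorous equality case of Ky Fan, needed to show that every maximizer contains $\Sstar$ rather than merely achieving the optimal value. I would prove it directly. Write $P=\sum_i p_i$ in the eigenbasis of $\barPi$, with diagonal weights $d_j=e_j^\top Pe_j\in[0,1]$ and $\sum_j d_j=k$. Then $\Tr(P\barPi)=\sum_j\mu_jd_j$, and optimality forces $d_j=1$ for all $j$ with $\mu_j>\mu_k$. This gives $Pe_j=e_j$ for those $j$, which covers all of $\Sstar$ when $\mu_k<1$.
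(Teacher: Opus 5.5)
Your proposal is correct and follows essentially the paper's proof. Containment comes from the fact that every maximizer of $\Tr(W^\top\barPi W)$ contains all $\barPi$-eigenspaces with eigenvalue above $\mu_k$, with the edge case $m=k$ handled separately. Convergence comes from rescaling to $\barPi+(2E\lambda)^{-1}\barSigma$, using Weyl's inequality to get uniqueness for $\lambda>\lambda_0=\opnorm{\barSigma}/(E(1-\mu_{m+1}))$, and applying the Davis--Kahan $\sin\Theta$ bound, all with the same constants as the paper.

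The only difference is that you prove the Ky Fan equality case yourself via the diagonal weights $d_j=e_j^\top P e_j$, whereas the paper cites the characterization of maximizers from its ordered-basis appendix. Your diagonal-weight argument is rigorous. Your earlier swap heuristic is not needed and would fail as stated when $k=m$, since no $u\in\Imop(P)$ orthogonal to $\Sstar$ need exist.
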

Under the ordered basis convention in~\cref{app:ordered_representatives}, also the first $m$
 \AnchorPCAinfty{} 
 directions span $\Sstar$ (see \cref{thm:invariant_subspace_limit_extended}).
Moreover, \cref{app:exact_finite_recovery} shows that under an additional block-diagonality assumption on~$\barSigma$, \AnchorPCA{} recovers $\Sstar$ already for finite values of $\lambda$ (this assumption is satisfied in the example of \cref{subsec:motivating_example_4d}).

\section{Optimal reconstruction under perturbations}
\label{sec:robust_reconstruction}

We now study robustness of the solutions under structured covariance
inflations. 
It is straightforward to see
that $\Sstar$ 
satisfies the following robustness guarantee under eigengap conditions on the 
covariances $\Sigma_e$ (see \cref{prop:findsstar_exact_robustness} for details):
$\Sstar$ is invariant to arbitrarily large admissible domain-specific
top-$k$ variance inflations; more precisely, 
it remains the maximally invariant subspace 
under perturbations 
of the form 
$\Sigma'_e=\Sigma_e+\Delta_e$ with $0\preceq\Delta_e$ and
$\Imop(\Delta_e)\subseteq \Imop(\Pi_k^{(e)})$ for all $e\in\cE$.

We now
prove that both versions of
\AnchorPCAwout{}
satisfy a similar 
guarantee
under bounded versions of the same perturbations. More precisely,
for a covariance matrix $\Sigma$, the expected reconstruction error of a rank-$k$
projector $\Pi_W$ is given by
$
\mathbb E[\|x-\Pi_Wx\|_2^2]=\Tr(\Sigma(I_p-\Pi_W))$.
We further define, for any $\lambda>0$, 
the following set of perturbed covariances
\begin{equation*}
\mathcal C_\lambda
:=
\Bigl\{
(\Sigma'_1,\dots,\Sigma'_E)\,\Big|\, \Sigma'_e=\Sigma_e+\Delta_e\in\Spp,\ \
0\preceq \Delta_e \preceq 2E\lambda\,\Pi_k^{(e)} \text{ for all } e\in\cE
\Bigr\}
\end{equation*}
and obtain the following result.
\begin{theorem}[Robust reconstruction]
\label{thm:minimax_reconstruction}
For $W\in\Ok$, we define the average reconstruction error 
over $\{\Sigma_e\}_{e\in\cE}$ as
$
\mathcal R({W};\Sigma_1, \ldots, \Sigma_E)
:=
\frac1E\sum_{e=1}^E \Tr(\Sigma_e(I_p-\Pi_W))$.
Then, (i),
\[
\Wanchor{k}\in\argmin_{W\in\Ok} \max_{(\Sigma'_1,\dots,\Sigma'_E)\in\mathcal C_\lambda} \mathcal R({W};\Sigma'_1,\dots,\Sigma'_E) \; \iff\; \Wanchor{k} \text{ solves rank-}k ~\AnchorPCA{}.
\]
and,
(ii), with 
$\mathcal{B}:=
\argmin_{\widetilde W\in\Ok}\sum_{e=1}^E \|{\Pi_{\widetilde W}-\Pi_k^{(e)}}\|_\mathrm{F}^2
$
\[
W_k^{\anchor,\infty}
\in \argmin_{W\in\mathcal{B}}
\mathcal R({W};\Sigma_1, \ldots, \Sigma_E)
\quad \iff\quad 
W_k^{\anchor,\infty} \text{ solves rank-}k~ \AnchorPCAinfty{}.
\]

\end{theorem}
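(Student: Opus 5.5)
For part (i), the plan is to compute the inner maximum in closed form and show that the resulting outer objective is an affine, decreasing function of the \AnchorPCA{} objective in \eqref{eq:anchorPCA}. Fix $W\in\Ok$ and write $Q:=I_p-\Pi_W\succeq 0$. Since $\mathcal C_\lambda$ is a product set, the maximum splits over domains:
\[
\max_{\mathcal C_\lambda}\mathcal R(W;\Sigma'_1,\dots,\Sigma'_E)=\frac1E\sum_{e=1}^E\Bigl(\Tr(\Sigma_eQ)+\max_{0\preceq\Delta_e\preceq 2E\lambda\Pi_k^{(e)}}\Tr(\Delta_eQ)\Bigr).
\]
For each $e$, the map $\Delta\mapsto\Tr(\Delta Q)$ is monotone in the Loewner order because $Q\succeq0$. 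Hence $\Delta_e\preceq 2E\lambda\Pi_k^{(e)}$ gives $\Tr(\Delta_eQ)\le 2E\lambda\Tr(\Pi_k^{(e)}Q)$, and equality holds at the feasible point $\Delta_e=2E\lambda\Pi_k^{(e)}$. The constraint $\Sigma_e+\Delta_e\in\Spp$ holds automatically.

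The worst case is therefore
\[
\Tr(\barSigma)-\Tr(\barSigma\Pi_W)+2E\lambda\bigl(k-\Tr(\barPi\Pi_W)\bigr)
=\Tr(\barSigma)+2E\lambda k-\Tr(W^\top M_\lambda W).
\]
Minimizing this over $\Ok$ is the same as maximizing $\Tr(W^\top M_\lambda W)$. By the projection-based reformulation \eqref{eq:anchorpca_spectral}, that problem has exactly the same maximizers as \eqref{eq:anchorPCA}: the identity $\sum_e\fnorm{\Pi_W-\Pi_k^{(e)}}^2=2Ek-2E\Tr(W^\top\barPi W)$ shows the two objectives differ by a constant. The argmin sets coincide, which proves both directions of the equivalence.

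For part (ii), by the same identity, $\mathcal B$ is exactly the feasible set in \eqref{eq:anchorpca_infty_def}. On $\mathcal B$, and indeed on all of $\Ok$,
\[
\mathcal R(W;\Sigma_1,\ldots,\Sigma_E)=\Tr(\barSigma)-\Tr(W^\top\barSigma W).
\]
So minimizing $\mathcal R$ over $\mathcal B$ is equivalent to maximizing $\Tr(W^\top\barSigma W)$ over $\mathcal B$, which is precisely \cref{def:anchorpca_infty}.

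The main obstacle is the inner maximization, namely justifying the Loewner-monotonicity step and its attainment. I would prove it directly: write $Q=\sum_i q_iq_i^\top$, so that $\Tr(\Delta Q)=\sum_i q_i^\top\Delta q_i$ is monotone in $\Delta$. Attainment is immediate since $2E\lambda\Pi_k^{(e)}\succeq0$ is itself feasible. Everything else is bookkeeping of constants, in particular the $\frac1E$ and $2E\lambda$ factors, which I would check so that the combined term reads exactly $2E\lambda\barPi$.
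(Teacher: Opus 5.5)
Your proposal is correct and follows the paper's proof essentially step for step. Like the paper, you split the worst case over domains, use Loewner monotonicity of $\Delta\mapsto\Tr(\Delta(I_p-\Pi_W))$ with equality at $\Delta_e=2E\lambda\Pi_k^{(e)}$, and reduce both parts to the identities $\sum_e\fnorm{\Pi_W-\Pi_k^{(e)}}^2=2Ek-2E\Tr(W^\top\barPi W)$ and $\mathcal R=\Tr(\barSigma)-\Tr(W^\top\barSigma W)$; the only cosmetic difference is that you write the worst case as $\Tr(\barSigma)+2E\lambda k-\Tr(W^\top M_\lambda W)$ rather than in the equivalent Frobenius-penalized form.
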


Thus, if we 
consider target domains that are subject to 
covariance inflations described by the set $\mathcal{C}_{\lambda}$, then 
\AnchorPCA{} minimizes the worst-case reconstruction error over such domains.
We now aim to provide intuition for part~(ii). The proof of \cref{thm:minimax_reconstruction}
shows that 
\begin{equation} \label{eq:worst_case_reconstruction_error_decomposition}
    \max_{(\Sigma_1', \ldots, \Sigma_E') \in \mathcal{C}_\lambda}
\mathcal R({W};\Sigma'_1,\dots,\Sigma'_E)
 =
 \frac1E\sum_{e=1}^E \Tr\bigl(\Sigma_e(I_p-\Pi_W)\bigr)
 +
 \lambda\sum_{e=1}^E \|{\Pi_W-\Pi_k^{(e)}}\|_\mathrm{F}^2.
\end{equation}
Equation~\eqref{eq:worst_case_reconstruction_error_decomposition} shows that the worst-case reconstruction error is affine in $\lambda$, with slope equal to the projector-disagreement penalty. \AnchorPCAinfty{} first minimizes this slope over rank-$k$ projectors and then, among projectors with minimal slope, minimizes the nominal reconstruction error.%
\footnote{
A single minimax problem over the
unbounded union $\bigcup_{\lambda>0}\mathcal C_\lambda$ is typically
degenerate, as can be seen from Equation~\eqref{eq:worst_case_reconstruction_error_decomposition}. So the constrained formulation is the natural non-asymptotic statement.
}
\cref{fig:motivating_example_4d_perturbation_path} 
visualizes parts of \cref{thm:minimax_reconstruction}. 
It shows that 
\poolPCA{} is optimal only for small perturbations, whereas
\AnchorPCA{} and \AnchorPCAinfty{} guard against medium-sized and large perturbations.
Further details are provided in \cref{subsec:perturbation_path_illustration}.
The experiment in \cref{subsec:exp_gas_sensor} shows that robustness guarantees like the ones shown in \cref{thm:minimax_reconstruction} may be of practical relevance.

\begin{figure}[tbp]
    \centering
    \includegraphics[width=0.92\linewidth, trim={0 3mm 0 4mm}, clip]{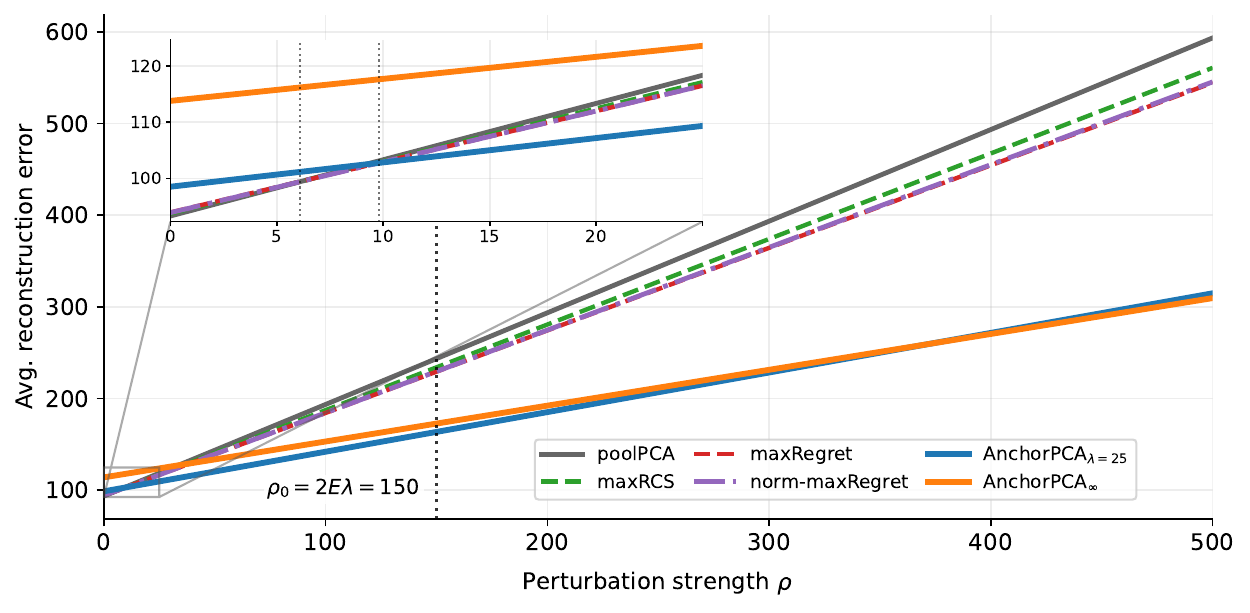}
    \caption{
    Average reconstruction error along the perturbation path
    $\Sigma_e^{(\rho)}=\Sigma_e+\rho\,\Pi_k^{(e)}$ for the example in \cref{subsec:motivating_example_4d}.
    As suggested by \cref{thm:minimax_reconstruction}, \poolPCA{} is best
    only for small perturbations ($0\le \rho \lesssim 6.08$).
    Among the wcPCA~\citep{fries2026worstcaselowrankapproximations} baselines, \texttt{maxRCS} is best only on the tiny interval
    $6.08 \lesssim \rho \lesssim 6.15$, after which
    \texttt{norm-maxRegret}
    is best until $\rho \lesssim 9.79$.
    \AnchorPCA{25} dominates on the broad middle regime
    $9.79 \lesssim \rho \lesssim 371.83$, and \AnchorPCAinfty{} performs best
    for strong perturbations $\rho \gtrsim 371.83$.
    }
    
    \label{fig:motivating_example_4d_perturbation_path}
\end{figure}

\section{Learning methods for Anchor PCA} \label{sec:algorithm}

We now describe the finite-sample methods
for both \AnchorPCAwout{} versions.
Suppose domain \(e\in\cE\) provides observations
\(x_{e,1},\dots,x_{e,n_e}\in\R^p\).
Define 
$\bar x_e:=\frac1{n_e}\sum_{i=1}^{n_e}x_{e,i}$
and
the empirical covariance
$\widehat\Sigma_e
:=
\frac{1}{n_e-1}\sum_{i=1}^{n_e}
(x_{e,i}-\bar x_e)(x_{e,i}-\bar x_e)^\top$.
Let \(\widehat W_e\in\R^{p\times k}\) be a top-\(k\) eigenbasis of
\(\widehat\Sigma_e\), and write the empirical versions of the domain-specific projection matrix as
$\widehat\Pi_k^{(e)}:=\widehat W_e\widehat W_e^\top$, 
average covariance as $\hatSigma:=\sum_{e=1}^E \widehat\Sigma_e/E$, and 
average projection matrix as
$\hatPi:=\sum_{e=1}^E \widehat\Pi_k^{(e)}/E$, where we omit the bar compared to their population-level counterparts $\barSigma, \barPi$ for readability.

\subsection{Finite-sample implementation}
\label{subsec:finite_sample_implementation}

We define empirical \AnchorPCA{} 
as follows: we compute
rank-$k$ PCA of
$\widehat M_\lambda:=\hatSigma+2E\lambda\,\hatPi$,
see
Equation~\eqref{eq:anchorpca_spectral}, 
and return an ordered top-$k$ eigenbasis
$\widehat W_k^{\anchor,\lambda}$ of $\widehat M_\lambda$.
For \AnchorPCAinfty{}, however, 
a direct
plug-in implementation replacing population quantities by its empirical counterparts 
would not yield a consistent method:
for example,
due to estimation error, 
all eigenvalues of $\hatPi$ would generally
be distinct, creating 
blocks of equal eigenvalues of size one (see \Cref{subsec:orderedbasis}).
Instead, we group consecutive $\hatPi$-eigenspaces whenever eigenvalues remain within $\mathrm{tol}_n$ of the first eigenvalue of that group, where $n:=\min_{e\in\Ecal} n_e$;
for each of these groups of eigenspaces, we compute the direct sum and denote the resulting spaces by
$\widehat{\operatorname{Eig}}_1, \widehat{\operatorname{Eig}}_2, \ldots$;
see~\Cref{alg:anchorpca_infty} for details. 
If $\Sstar\neq\{0\}$ and $\mathrm{tol}_n$ separates the empirical eigenvalue-$1$
block from the rest of the spectrum (in our experiments we use $\mathrm{tol}_n = 0.5n^{-0.4}$ as a default choice),
$\widehat{\operatorname{Eig}}_1$ consistently estimates $\Sstar$ (see \Cref{subsec:consistency}).
Thus, the algorithm also returns
estimates of the invariant subspace and its dimension. 

\begin{algorithm}[tbp]
\caption{Block-stabilized empirical $\AnchorPCAinfty{}$}
\label{alg:anchorpca_infty}
\begin{algorithmic}[1]
\Statex \textbf{Input:} $\hatPi,\hatSigma,k,\mathrm{tol}_n$
\Statex \textbf{Output:} $\widehat W_k^{\anchor,\infty}$, $\widehat\Pi_k^{\anchor,\infty}$, $\widehat{\Sstar}$, $\widehat m_\infty$

\State Compute an eigendecomposition
$\hatPi=\sum_{j=1}^p \widehat\rho_j \widehat v_j \widehat v_j^\top$
with $\widehat\rho_1\ge\cdots\ge\widehat\rho_p$.

\State Set $a\gets 1$ and $\ell\gets 1$.
\While{$a\le p$}
    \State Set $b\gets a$.
    \While{$b<p$ and $\widehat\rho_a-\widehat\rho_{b+1}\le \mathrm{tol}_n$}
        \State Set $b\gets b+1$.
    \EndWhile
    \State Set $I_\ell\gets\{a,\ldots,b\}$ and
    $\widehat{\operatorname{Eig}}_\ell
    \gets \operatorname{span}\{\widehat v_j:j\in I_\ell\}$.
    \State Set $a\gets b+1$ and $\ell\gets \ell+1$.
\EndWhile

\State Let $r$ be the first block index such that
$\sum_{\ell\le r}\dim(\widehat{\operatorname{Eig}}_\ell)\ge k$, and set
$q\gets k-\sum_{\ell<r}\dim(\widehat{\operatorname{Eig}}_\ell)$.

\State For all $\ell<r$, take an ordered eigenbasis of $\hatSigma$ within
$\widehat{\operatorname{Eig}}_\ell$.
\State 
Take the top-$q$
eigenvectors of $\hatSigma$ within $\widehat{\operatorname{Eig}}_r$. 
\State Concatenate these vectors to obtain $\widehat W_k^{\anchor,\infty}$.
\State Set
$\widehat\Pi_k^{\anchor,\infty}
\gets \widehat W_k^{\anchor,\infty}
(\widehat W_k^{\anchor,\infty})^\top$.
\State Set
$\widehat{\Sstar}\gets\widehat{\operatorname{Eig}}_1$ and
$\widehat m_\infty\gets\dim(\widehat{\operatorname{Eig}}_1)$.
\end{algorithmic}
\end{algorithm}

\subsection{\texorpdfstring{\texttt{FindS}$_\star$}{FindS*}: Identifying the invariant subspace \texorpdfstring{$\Sstar$}{S*}}\label{subsec:identifying_inv_subspace}

In some settings, the object of interest is not a full 
$k$-dimensional subspace (as estimated by \AnchorPCAwout{}), but only the maximally invariant subspace $\Sstar$. We now describe how to estimate an unordered basis of $\Sstar$ directly.

\textbf{An eigenvalue problem.}
Recall that $\Sstar$ is the eigenspace of $\barPi$ associated with eigenvalue $1$ (\cref{sec:definitions}). A natural approach would therefore be to identify eigenvalues equal to $1$. While this characterization is exact at the population level, 
it cannot be applied directly to finite samples.
Although bootstrapping projection matrices is feasible \citep[e.g.,][]{beran1985bootstrap}, the null hypothesis $
\lambda_i(\barPi)=1$ 
lies on the boundary of the parameter space, and empirical eigenvalues will have value strictly less than $1$, making the use of bootstrap-based confidence intervals difficult. Moreover, bootstrapping the null or using a parametric bootstrap is challenging, as the null is composite and depends on an unknown invariant subspace.

\textbf{Testing the dimension of $\Sstar$.}
We instead directly test the dimension of $\Sstar$. 
If $\dim(\Sstar)=m$ were known, $\Sstar$ is estimated by the eigenspace of $\hatPi$ associated with its $m$ largest eigenvalues. 
Let \(m_{\min}:=\max(0,p-E(p-k))\).
This motivates the 
sequential testing procedure 
\begin{equation}\label{eq:hypothesis-test}
    H_{0,s}: \dim(\Sstar) = s
    \quad \text{vs} \quad
    H_{A,s}: \dim(\Sstar) < s,
\end{equation}
for $s\in\{k, k-1, \dots, m_\mathrm{min}+1\}$,
stopping at the first $s$ for which the null is not rejected; if all nulls are rejected, estimate $\dim(\Sstar)$ as $m_\mathrm{min}$.
To construct such tests, we adapt ideas from partial common principal component analysis (PCPCA) subspaces \citep{schott1999}. In the PCPCA setting, it is assumed that for all $e\in\cE$, we observe a sample of i.i.d.\ Gaussian observations 
with
population
positive-definite covariance $\Sigma_e$. Additionally, $\Sigma_e$ is assumed to satisfy a strict eigengap between its $k$\textsuperscript{th} and $(k+1)$\textsuperscript{st} eigenvalues. 
Under these assumptions, Wald-type tests are available for the dimension of the union of the top-$k$ eigenspaces of $(\Sigma_e)_{e\in\cE}$.
To translate this to our setting, we consider the orthogonal complement projector matrices
$
I_p - \Pi_k^{(e)}$ and
let 
$r$ be the dimension of the union of their eigenspaces.
Since a vector belongs to $\Sstar$ if and only if it lies in the null space of $I_p-\Pi_k^{(e)}$ for every $e\in\cE$, $r = p-\dim(\Sstar)$.
We apply the PCPCA subspace test to the bottom-$(p-k)$ eigenspaces of the covariance matrices, equivalently to $\Imop(I_p-\Pi_k^{(e)})$, and
thus obtain a sequential testing procedure for~\eqref{eq:hypothesis-test}.
Combining this sequential test with the eigenspace estimator above, we define \texttt{FindS}$_\star$ as the procedure that first estimates $\dim(\Sstar)$ and then returns the eigenspace of $\hatPi$ associated with its estimated number of largest eigenvalues.
The test statistic and further details are given in \cref{app:wald-test}.

\subsection{Consistency of Anchor PCA}
\label{subsec:consistency}
Both \AnchorPCAwout{} methods are consistent under mild conditions 
(the full statement and proof are given in \cref{app:consistency}).
Assume that the minimum domain sample size $n := \min_{e \in \cE} n_e \to \infty$, that observations are centered with finite second moments, and that each domain covariance $\Sigma_e$ has a strict eigengap at rank $k$. Then $\hatSigma \to \barSigma$ and $\hatPi \to \barPi$ almost surely, and \AnchorPCA{} is consistent whenever $M_\lambda$ itself has a strict eigengap at rank $k$.
For \AnchorPCAinfty{}, consistency requires a little more care: the block tolerance used in the algorithm must shrink to zero, but slowly enough relative to the estimation error of $\hatPi$. Any tolerance of the form $\mathrm{tol}_n = c n^{-\alpha}$ with $0 < \alpha < 1/2$ works under finite fourth moments. 
Under these conditions, the full rank-$k$ projector is consistent when the population \AnchorPCAinfty{} solution projector is unique; moreover, if $\Sstar\neq\{0\}$, the first block returned by \AnchorPCAinfty{} is a consistent estimator of $\Sstar$.
For \texttt{FindS}$_\star$, Schott's asymptotic theory \citep{schott1999} 
implies that the probability of correctly estimating $\dim(\Sstar)$ converges 
to $1-\alpha$, where $\alpha$ is the nominal level of the tests.

\section{Experiments} \label{sec:experiments}

Code to reproduce all simulations and figures is publicly available
at \url{https://github.com/benesei/anchor-pca}.

\subsection{Recovering the invariant subspace \texorpdfstring{$\Sstar$}{S*}}
\label{subsec:exp_sstar_recovery}

In this experiment, all environments have equal sample size, which we call $N$ (which thus also equals $n=\min_{e\in\cE} n_e$). 
\cref{subsec:consistency}
shows that the
first grouped eigenspace, $\widehat{\mathrm{Eig}}_1$, of the
block-stabilized \AnchorPCAinfty{} construction consistently estimates
$\Sstar$ when the tolerance shrinks while dominating the empirical perturbation.
 We test this statement
 in a random-subspace model where
$\Sstar$ is known, using the default automatic block tolerance ($\mathrm{tol}_N = 0.5N^{-0.4}$).
Below we choose the value
$m=\dim(\Sstar)$ and let $\widehat m_\infty$ denote the dimension of the estimate $\widehat{\Sstar}$. 
We measure subspace error by
$\|\Pi_{\widehat S_\star}-\Pi_{\Sstar}\|_\mathrm{op}$.

\looseness-1 The 
distribution draws
are as follows. We draw a random
$m$-dimensional subspace $\Sstar\subset\R^p$ and, for each domain, a
different rank-$k$ principal subspace containing $\Sstar$. The covariance in
that domain is then constructed so that this subspace is exactly its
top-$k$ eigenspace. We consider two DGP regimes. In the easy setting, stable
and domain-specific top eigenvalues have comparable sizes; in the difficult setting, the stable directions are weaker and the domain-specific top
directions are stronger. \Cref{app:random_subspace_sstar_recovery}
provides details and two even more challenging configurations: a smaller invariant
subspace ($m=1$) and a low-domain regime with $2=E<k=5$, both of which have smaller agreement-separation gaps $\rho_{m}(\barPi)-\rho_{m+1}(\barPi)$ than the main configuration.
\begin{figure}[tbp]
    \centering
    \includegraphics[width=1\linewidth]{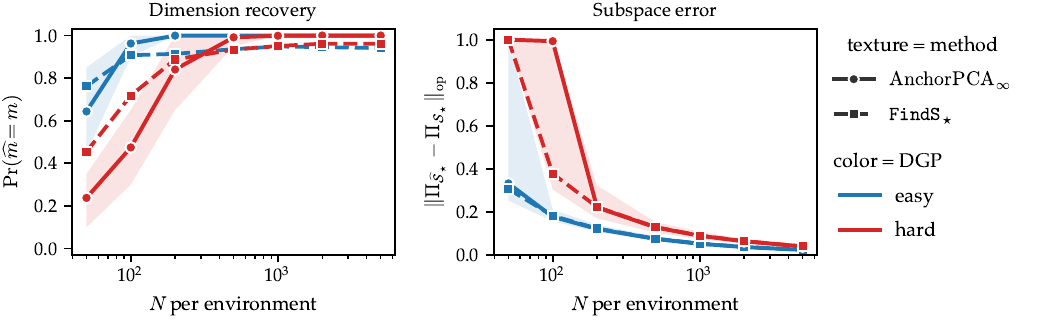}
    \caption{\looseness-1 \textbf{Recovery of $\Sstar$ by \AnchorPCAinfty{} and $\texttt{FindS}_\star$ (\cref{subsec:identifying_inv_subspace}).}
    Gaussian data with $E=5$, $p=10$, $k=5$, and $m=\dim(\Sstar)=2$.
    We show mean correct-dimension probability (left) and  
    median
    operator-norm projector error (right) against the true $\Sstar$. 
    As predicted by \cref{subsec:consistency}, 
    the first grouped eigenspace of
    \AnchorPCAinfty{} (solid) recovers $\Sstar$ with growing sample size and
     \texttt{FindS}$_\star$ (dashed)
    has dimension-recovery error close to the nominal test size of $0.05$. 
   For each sample size $N$, the curves show the average
   over $100$ independent distribution draws, 
   for each of which we  draw 
   $20$
independent datasets. Shaded bands show $10$--$90\%$ quantiles across
distribution draws (shown only for \AnchorPCAinfty{}). 
    }
    \label{fig:infty_sstar_recovery_main}
\end{figure}
\looseness-1 \Cref{fig:infty_sstar_recovery_main} is in agreement with the consistent recovery
of $\Sstar$. In the easy setting, the correct block dimension is
already recovered in about $64\%$ of runs at $N=50$ and reaches probability $1$
by $N=5000$; in the hard setting, the corresponding probability is
about $24\%$ at $N=50$ but also reaches $1$ by $N=5000$. The median projector
error decreases to about $0.02$ in the easy setting and $0.04$ in the
difficult setting at $N=5000$. 
\cref{subsec:consistency} does not assume Gaussianity. Indeed, 
\cref{app:sstar_recovery_exp}
shows the same
qualitative behavior
for Gaussian-mixture samples.
The dashed curves in \cref{fig:infty_sstar_recovery_main} show
that $\texttt{FindS}_\star$ gives a comparable subspace diagnostic in
this configuration, while its dimension-recovery error remains close to $0.05$ for large sample sizes, 
the nominal level of the individual tests, consistent with \cref{subsec:consistency}.

\subsection{Gas sensor array drift}
\label{subsec:exp_gas_sensor}
We evaluate \AnchorPCAwout{} on gas-sensor drift data
\citep{gas_sensor_array_drift,fonollosa2015chemicalgas}, consisting of
$13{,}910$ recordings from $16$ chemical sensors, summarized by $p=128$
features and grouped into ten temporal batches. We treat batches as domains:
source batches are used for fitting and later batches are held out. The data
are known to exhibit sensor drift, i.e., gradual changes in sensor response
caused by aging, poisoning, or environmental effects, so that the same gas can
induce different response distributions over time
\citep{vergara2012chemical,fonollosa2015chemicalgas}.
The hope is that the projector agreement of \AnchorPCAwout{} 
across early batches 
downweights drift-specific high-variance directions and retains directions that
remain informative under future temporal shifts. We evaluate by normalized
batch explained variance
$100\cdot\Tr(W^\top\widehat\Sigma_b W)/\Tr(\widehat\Sigma_b)$, where $W$ is fit
only on source batches.
\begin{figure}[tbp]
    \centering
    \includegraphics[width=0.89\linewidth]{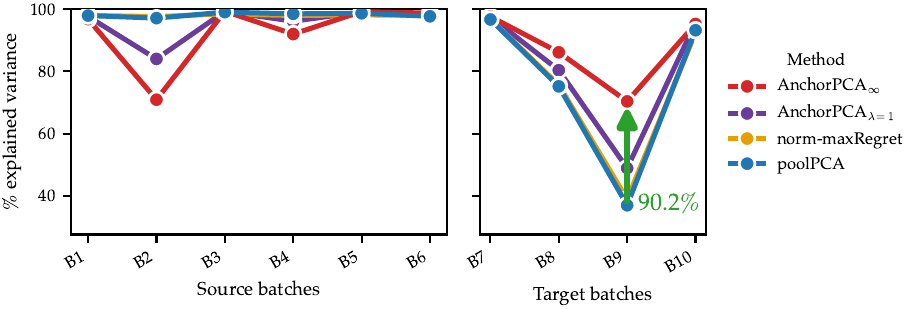}
    \caption{\looseness-1 \textbf{Gas-sensor drift, source B1--B6 and target B7--B10.}
All methods use $k=20$ and source-only preprocessing/fitting;
\texttt{norm-maxRegret} is the normalized regret wcPCA baseline
\citep{fries2026worstcaselowrankapproximations}. The finite-penalty method
\AnchorPCA{\lambda=1} exhibits the source--target compromise: it retains more
source explained variance than \AnchorPCAinfty{} but yields smaller average
target gains. \AnchorPCAinfty{} gives up more source explained variance,
achieves the highest mean target EV, and improves over \poolPCA{} on every held-out target batch; the arrow marks its largest relative
target gain over \poolPCA{} (B9, $90.2\%$).}
    \label{fig:gas_sensor_b1_b6_k20}
\end{figure}

\Cref{fig:gas_sensor_b1_b6_k20} shows the results for the split B1--B6 $\to$ B7--B10 for
$k=20$. The finite-penalty method $\texttt{AnchorPCA}_{\lambda=1}$ 
illustrates
the trade-off:
its mean source explained variance of $95.7\%$ lies between those of \poolPCA{} ($98.1\%$) and
\AnchorPCAinfty{} ($92.8\%$); but its mean target explained variance ($80.6\%$) outperforms
\poolPCA{} ($75.5\%$). 
\AnchorPCAinfty{} has the weakest source fit (mainly due to B2)
but 
yields an even higher target explained variance ($87.3\%$)
and improves
over \poolPCA{} on every held-out target batch. 
In \cref{app:gas_sensor_drift_details}, we consider alternative splits and values of $k$: in only two cases (eight source domains and $k= 20,30$) \poolPCA{} seems to have sufficient information to perform well on one of the target domains; in all other settings, \AnchorPCAwout{} outperforms \poolPCA{}.   

\section{Discussion and future work} \label{sec:disc_and_future_work}
We introduced Anchor PCA, a novel framework for PCA on multiple domains. We prove recovery guarantees and robustness results and show its effectiveness on simulated and real data.
Our invariance notion (membership in the local top-$k$ principal subspace of every domain) depends on the choice of~$k$ and can be sensitive to near eigenvalue ties. The minimax guarantee in \cref{thm:minimax_reconstruction} covers bounded variance inflations inside local top-$k$ eigenspaces, but not mean shifts, variance deflations, or rotations of the principal subspaces. 
Finally, $\lambda$ needs to be chosen in practice but we believe that $\lambda = \infty$ is a reasonable choice.
While we have focused on robust linear dimension reduction based on PCA, we discuss possible nonlinear extensions of \AnchorPCA{} based on autoencoders in~\cref{app:nonlinear_extensions}.

\section*{Acknowledgments}
The authors used GPT-5.4 (xhigh) and Claude Opus 4.6 for assistance with manuscript drafting, proofreading of human-written text and mathematical proofs, and development of simulations, figures, and diagrams, but take full responsibility for the content.
JvK is supported by The
Branco Weiss Fellowship—Society in Science.
\bibliographystyle{plainnat}
\bibliography{references}

@article{shen2024distributional,
  title={Distributional Principal Autoencoders},
  author={Shen, Xinwei and Meinshausen, Nicolai},
  journal={arXiv preprint arXiv:2404.13649},
  year={2024}
}

@article{hinton2006reducing,
  title   = {Reducing the Dimensionality of Data with Neural Networks},
  author  = {Hinton, Geoffrey E. and Salakhutdinov, Ruslan R.},
  journal = {Science},
  volume  = {313},
  number  = {5786},
  pages   = {504--507},
  year    = {2006},
  doi     = {10.1126/science.1127647}
}

@article{freni2025maximum,
  title={Maximum Risk Minimization with Random Forests},
  author={Freni, Francesco and Fries, Anya and K{\"u}hne, Linus and Reichstein, Markus and Peters, Jonas},
  journal={arXiv preprint arXiv:2512.10445},
  year={2025}
}

@inproceedings{agarwal2022minimax,
  title     = {Minimax Regret Optimization for Robust Machine Learning under Distribution Shift},
  author    = {Agarwal, Alekh and Zhang, Tong},
  booktitle = {Proceedings of the 35th Conference on Learning Theory},
  pages     = {2704--2729},
  year      = {2022},
  volume    = {178},
  series    = {Proceedings of Machine Learning Research},
  publisher = {PMLR}
}

@article{robbins1951stochastic,
  title   = {A Stochastic Approximation Method},
  author  = {Robbins, Herbert and Monro, Sutton},
  journal = {The Annals of Mathematical Statistics},
  volume  = {22},
  number  = {3},
  pages   = {400--407},
  year    = {1951},
  doi     = {10.1214/aoms/1177729586}
}

@article{rumelhart1986learning,
  title   = {Learning Representations by Back-Propagating Errors},
  author  = {Rumelhart, David E. and Hinton, Geoffrey E. and Williams, Ronald J.},
  journal = {Nature},
  volume  = {323},
  number  = {6088},
  pages   = {533--536},
  year    = {1986},
  doi     = {10.1038/323533a0}
}

@article{gneiting2007strictly,
  title   = {Strictly Proper Scoring Rules, Prediction, and Estimation},
  author  = {Gneiting, Tilmann and Raftery, Adrian E.},
  journal = {Journal of the American Statistical Association},
  volume  = {102},
  number  = {477},
  pages   = {359--378},
  year    = {2007},
  doi     = {10.1198/016214506000001437}
}

@book{horn2012matrix,
  title     = {Matrix Analysis},
  author    = {Horn, Roger A. and Johnson, Charles R.},
  edition   = {2},
  publisher = {Cambridge University Press},
  year      = {2012},
  doi       = {10.1017/CBO9780511810817}
}

@article{rothenhausler2020anchor,
  title   = {Anchor Regression: Heterogeneous Data Meet Causality},
  author  = {Rothenh{\"a}usler, Dominik and Meinshausen, Nicolai and B{\"u}hlmann, Peter and Peters, Jonas},
  journal = {Journal of the Royal Statistical Society: Series B},
  volume  = {83},
  number  = {2},
  pages   = {215--246},
  year    = {2021},
  doi     = {10.1111/rssb.12398}
}

@article{yu2015DKSinTheta,
  title   = {A Useful Variant of the {Davis--Kahan} Theorem for Statisticians},
  author  = {Yu, Yi and Wang, Tengyao and Samworth, Richard J.},
  journal = {Biometrika},
  volume  = {102},
  number  = {2},
  pages   = {315--323},
  year    = {2015},
  doi     = {10.1093/biomet/asv008}
}

@inproceedings{norman2026unsupervised,
  title     = {Unsupervised Representation Learning -- an Invariant Risk Minimization Perspective},
  author    = {Norman, Yotam and Meir, Ron},
  booktitle = {International Conference on Learning Representations},
  year      = {2026}
}

@article{JMLR:v26:24-0699,
  title   = {Invariant Subspace Decomposition},
  author  = {Lazzaretto, Margherita and Peters, Jonas and Pfister, Niklas},
  journal = {Journal of Machine Learning Research},
  volume  = {26},
  number  = {95},
  pages   = {1--56},
  year    = {2025}
}

@article{pan2011domain,
  title   = {Domain Adaptation via Transfer Component Analysis},
  author  = {Pan, Sinno Jialin and Tsang, Ivor W. and Kwok, James T. and Yang, Qiang},
  journal = {IEEE Transactions on Neural Networks},
  volume  = {22},
  number  = {2},
  pages   = {199--210},
  year    = {2011},
  doi     = {10.1109/TNN.2010.2091281}
}

@InProceedings{pmlr-v162-wang22x,
  title = 	 {Provable Domain Generalization via Invariant-Feature Subspace Recovery},
  author =       {Wang, Haoxiang and Si, Haozhe and Li, Bo and Zhao, Han},
  booktitle = 	 {Proceedings of the 39th International Conference on Machine Learning},
  pages = 	 {23018--23033},
  year = 	 {2022},
  editor = 	 {Chaudhuri, Kamalika and Jegelka, Stefanie and Song, Le and Szepesvari, Csaba and Niu, Gang and Sabato, Sivan},
  volume = 	 {162},
  series = 	 {Proceedings of Machine Learning Research},
  month = 	 {17--23 Jul},
  publisher =    {PMLR}
}

@article{migliavacca2021three,
  title   = {The Three Major Axes of Terrestrial Ecosystem Function},
  author  = {Migliavacca, Mirco and Musavi, Talie and Mahecha, Miguel D. and Nelson, Jacob A. and Knauer, J{\"u}rgen and Baldocchi, Dennis D. and Perez-Priego, Oscar and Christiansen, Rune and Peters, Jonas and others},
  journal = {Nature},
  volume  = {598},
  number  = {7881},
  pages   = {468--472},
  year    = {2021},
  doi     = {10.1038/s41586-021-03939-9}
}

@book{Jolliffe2002,
  title     = {Principal Component Analysis},
  author    = {Jolliffe, Ian T.},
  edition   = {2},
  series    = {Springer Series in Statistics},
  publisher = {Springer},
  address   = {New York},
  year      = {2002},
  doi       = {10.1007/b98835}
}

@article{flury1984common,
  title   = {Common Principal Components in K Groups},
  author  = {Flury, Bernhard N.},
  journal = {Journal of the American Statistical Association},
  volume  = {79},
  number  = {388},
  pages   = {892--898},
  year    = {1984},
  doi     = {10.1080/01621459.1984.10477108}
}

@article{fries2026worstcaselowrankapproximations,
  title   = {Worst-Case Low-Rank Approximations},
  author  = {Fries, Anya and Reichstein, Markus and Blei, David and Peters, Jonas},
  journal = {arXiv preprint arXiv:2603.11304},
  year    = {2026}
}

@article{wang2026stablepcadistributionallyrobustlearning,
  title   = {{StablePCA}: Distributionally Robust Learning of Shared Representations from Multi-Source Data},
  author  = {Wang, Zhenyu and Liu, Molei and Lei, Jing and Bach, Francis and Guo, Zijian},
  journal = {arXiv preprint arXiv:2505.00940},
  year    = {2025}
}

@article{schott1999,
  title   = {Partial Common Principal Component Subspaces},
  author  = {Schott, James R.},
  journal = {Biometrika},
  volume  = {86},
  number  = {4},
  pages   = {899--908},
  year    = {1999},
  doi     = {10.1093/biomet/86.4.899}
}

@article{beran1985bootstrap,
  title   = {Bootstrap Tests and Confidence Regions for Functions of a Covariance Matrix},
  author  = {Beran, Rudolf and Srivastava, Muni S.},
  journal = {The Annals of Statistics},
  volume  = {13},
  number  = {1},
  pages   = {95--115},
  year    = {1985},
  doi     = {10.1214/aos/1176346579}
}

@book{athreya_lahiri_2006,
  author    = {Athreya, Krishna B. and Lahiri, Soumendra N.},
  title     = {Measure Theory and Probability Theory},
  series    = {Springer Texts in Statistics},
  publisher = {Springer},
  address   = {New York},
  year      = {2006},
  doi       = {10.1007/978-0-387-35434-7}
}

@misc{gas_sensor_array_drift,
  title        = {Gas Sensor Array Drift at Different Concentrations},
  author       = {Vergara, Alexander},
  year         = {2012},
  howpublished = {UCI Machine Learning Repository},
  note         = {Dataset},
  doi          = {10.24432/C5MK6M}
}

@article{fonollosa2015chemicalgas,
  author  = {Fonollosa, Jordi and Rodr{\'i}guez-Luj{\'a}n, Irene and Huerta, Ram{\'o}n},
  title   = {Chemical gas sensor array dataset},
  journal = {Data in Brief},
  volume  = {3},
  pages   = {85--89},
  year    = {2015},
  doi     = {10.1016/j.dib.2015.01.003}
}

@article{vergara2012chemical,
  author  = {Vergara, Alexander and Vembu, Shankar and Ayhan, Tuba and Ryan, Margie A. and Homer, Margie L. and Huerta, Ram{\'o}n},
  title   = {Chemical gas sensor drift compensation using classifier ensembles},
  journal = {Sensors and Actuators B: Chemical},
  volume  = {166--167},
  pages   = {320--329},
  year    = {2012},
  doi     = {10.1016/j.snb.2012.01.074}
}

@inproceedings{samadi2018price,
  title     = {The Price of Fair {PCA}: One Extra Dimension},
  author    = {Samadi, Samira and Tantipongpipat, Uthaipon and Morgenstern, Jamie and Singh, Mohit and Vempala, Santosh},
  booktitle = {Advances in Neural Information Processing Systems},
  volume    = {31},
  year      = {2018}
}

@inproceedings{tantipongpipat2019multi,
  title     = {Multi-Criteria Dimensionality Reduction with Applications to Fairness},
  author    = {Tantipongpipat, Uthaipon and Samadi, Samira and Singh, Mohit and Morgenstern, Jamie and Vempala, Santosh},
  booktitle = {Advances in Neural Information Processing Systems},
  volume    = {32},
  year      = {2019}
}

@inproceedings{muandet2013domain,
  title     = {Domain Generalization via Invariant Feature Representation},
  author    = {Muandet, Krikamol and Balduzzi, David and Sch{\"o}lkopf, Bernhard},
  booktitle = {Proceedings of the 30th International Conference on Machine Learning},
  pages     = {10--18},
  year      = {2013},
  editor    = {Dasgupta, Sanjoy and McAllester, David},
  volume    = {28},
  series    = {Proceedings of Machine Learning Research},
  publisher = {PMLR}
}

@article{flury1986algorithm,
  title   = {An Algorithm for Simultaneous Orthogonal Transformation of Several Positive Definite Symmetric Matrices to Nearly Diagonal Form},
  author  = {Flury, Bernhard N. and Gautschi, Walter},
  journal = {SIAM Journal on Scientific and Statistical Computing},
  volume  = {7},
  number  = {1},
  pages   = {169--184},
  year    = {1986},
  doi     = {10.1137/0907013}
}

@article{wang2021semiparametric,
  title   = {Semiparametric Partial Common Principal Component Analysis for Covariance Matrices},
  author  = {Wang, Bingkai and Luo, Xi and Zhao, Yi and Caffo, Brian},
  journal = {Biometrics},
  volume  = {77},
  number  = {4},
  pages   = {1175--1186},
  year    = {2021},
  doi     = {10.1111/biom.13369}
}

@article{lock2013joint,
  title   = {Joint and Individual Variation Explained ({JIVE}) for Integrated Analysis of Multiple Data Types},
  author  = {Lock, Eric F. and Hoadley, Katherine A. and Marron, J. S. and Nobel, Andrew B.},
  journal = {The Annals of Applied Statistics},
  volume  = {7},
  number  = {1},
  pages   = {523--542},
  year    = {2013},
  doi     = {10.1214/12-AOAS597}
}

@article{feng2018angle,
  title   = {Angle-Based Joint and Individual Variation Explained},
  author  = {Feng, Qing and Jiang, Meilei and Hannig, Jan and Marron, J. S.},
  journal = {Journal of Multivariate Analysis},
  volume  = {166},
  pages   = {241--265},
  year    = {2018},
  doi     = {10.1016/j.jmva.2018.03.008}
}

@article{abid2018exploring,
  title   = {Exploring Patterns Enriched in a Dataset with Contrastive Principal Component Analysis},
  author  = {Abid, Abubakar and Zhang, Martin J. and Bagaria, Vivek K. and Zou, James},
  journal = {Nature Communications},
  volume  = {9},
  number  = {1},
  pages   = {2134},
  year    = {2018},
  doi     = {10.1038/s41467-018-04608-8}
}

@article{haavelmo1943statistical,
  title   = {The Statistical Implications of a System of Simultaneous Equations},
  author  = {Haavelmo, Trygve},
  journal = {Econometrica},
  volume  = {11},
  number  = {1},
  pages   = {1--12},
  year    = {1943},
  doi     = {10.2307/1905714}
}

@book{pearl2009causality,
  title     = {Causality: Models, Reasoning, and Inference},
  author    = {Pearl, Judea},
  edition   = {2},
  publisher = {Cambridge University Press},
  year      = {2009},
  doi       = {10.1017/CBO9780511803161}
}

@inproceedings{scholkopf2012causal,
  title     = {On Causal and Anticausal Learning},
  author    = {Sch{\"o}lkopf, Bernhard and Janzing, Dominik and Peters, Jonas and Sgouritsa, Eleni and Zhang, Kun and Mooij, Joris M.},
  booktitle = {Proceedings of the 29th International Conference on Machine Learning},
  pages     = {1255--1262},
  year      = {2012},
  publisher = {Omnipress}
}

@article{peters2016causal,
  title   = {Causal Inference by Using Invariant Prediction: Identification and Confidence Intervals},
  author  = {Peters, Jonas and B{\"u}hlmann, Peter and Meinshausen, Nicolai},
  journal = {Journal of the Royal Statistical Society: Series B},
  volume  = {78},
  number  = {5},
  pages   = {947--1012},
  year    = {2016},
  doi     = {10.1111/rssb.12167}
}

@article{rojascarulla2018invariant,
  title   = {Invariant Models for Causal Transfer Learning},
  author  = {Rojas-Carulla, Mateo and Sch{\"o}lkopf, Bernhard and Turner, Richard and Peters, Jonas},
  journal = {Journal of Machine Learning Research},
  volume  = {19},
  number  = {36},
  pages   = {1--34},
  year    = {2018}
}

@article{arjovsky2019invariant,
  title   = {Invariant Risk Minimization},
  author  = {Arjovsky, Martin and Bottou, L{\'e}on and Gulrajani, Ishaan and Lopez-Paz, David},
  journal = {arXiv preprint arXiv:1907.02893},
  year    = {2019}
}

@inproceedings{lu2022invariant,
  title     = {Invariant Causal Representation Learning for Out-of-Distribution Generalization},
  author    = {Lu, Chaochao and Wu, Yuhuai and Hern{\'a}ndez-Lobato, Jos{\'e} Miguel and Sch{\"o}lkopf, Bernhard},
  booktitle = {International Conference on Learning Representations},
  year      = {2022}
}

@article{christiansen2022causal,
  title   = {A Causal Framework for Distribution Generalization},
  author  = {Christiansen, Rune and Pfister, Niklas and Jakobsen, Martin Emil and Gnecco, Nicola and Peters, Jonas},
  journal = {IEEE Transactions on Pattern Analysis and Machine Intelligence},
  volume  = {44},
  number  = {10},
  pages   = {6614--6630},
  year    = {2022},
  doi     = {10.1109/TPAMI.2021.3094760}
}

@article{novembre2008genes,
  title   = {Genes Mirror Geography within Europe},
  author  = {Novembre, John and Johnson, Toby and Bryc, Katarzyna and Kutalik, Zolt{\'a}n and Boyko, Adam R. and Auton, Adam and Indap, Amit and King, Karen S. and Bergmann, Sven and Nelson, Matthew R. and Stephens, Matthew and Bustamante, Carlos D.},
  journal = {Nature},
  volume  = {456},
  number  = {7218},
  pages   = {98--101},
  year    = {2008},
  doi     = {10.1038/nature07331}
}

@article{wright2004worldwide,
  title   = {The Worldwide Leaf Economics Spectrum},
  author  = {Wright, Ian J. and Reich, Peter B. and Westoby, Mark and Ackerly, David D. and Baruch, Zdravko and others},
  journal = {Nature},
  volume  = {428},
  number  = {6985},
  pages   = {821--827},
  year    = {2004},
  doi     = {10.1038/nature02403}
}

@article{turk1991eigenfaces,
  title   = {Eigenfaces for Recognition},
  author  = {Turk, Matthew and Pentland, Alex},
  journal = {Journal of Cognitive Neuroscience},
  volume  = {3},
  number  = {1},
  pages   = {71--86},
  year    = {1991},
  doi     = {10.1162/jocn.1991.3.1.71}
}

@book{bhatia1997matrix,
  title     = {Matrix Analysis},
  author    = {Bhatia, Rajendra},
  series    = {Graduate Texts in Mathematics},
  volume    = {169},
  publisher = {Springer},
  address   = {New York},
  year      = {1997},
  doi       = {10.1007/978-1-4612-0653-8}
}

@book{boyd2004convex,
  author    = {Boyd, Stephen and Vandenberghe, Lieven},
  title     = {Convex Optimization},
  publisher = {Cambridge University Press},
  year      = {2004},
  doi       = {10.1017/CBO9780511804441}
}

@article{drineas2019lowrank,
  title   = {Low-Rank Matrix Approximations Do Not Need a Singular Value Gap},
  author  = {Drineas, Petros and Ipsen, Ilse C. F.},
  journal = {SIAM Journal on Matrix Analysis and Applications},
  volume  = {40},
  number  = {1},
  pages   = {299--319},
  year    = {2019},
  doi     = {10.1137/18M1163658}
}

\newpage
\appendix

\startcontents[appendices]

\section*{Contents of the Appendix}
{
\hypersetup{linkcolor=black}
\printcontents[appendices] %
  {l}                      %
  {1}                      %
  {\setcounter{tocdepth}{2}} %
}
\vspace{10mm}

\section{Further related work} \label{subsec:related_work}

\AnchorPCAwout{} is a dimensionality reduction over several source domains that builds on ideas of invariance. We summarize work in multi-domain dimension reduction and related ideas in prediction that use invariance.

\subsection{Multi-domain dimension reduction}
Multi-domain dimension reduction can be separated into three groups: worst-case methods, invariance-based methods, and other methods (e.g., those which do not relate to generalization, but consider multiple groups and PCA).

\paragraph{Worst-case methods.} Worst-case methods replace pooled explained variance or reconstruction by a max-min (or min-max) objective, which seeks to optimize performance on the worst domain, group, or criterion. 
FairPCA \citep{samadi2018price} minimizes the maximum increase in reconstruction error incurred by using a shared projection instead of the domain-specific optimal projection.
Multi-criteria dimensionality reduction \citep{tantipongpipat2019multi} generalizes this idea to several criteria over multiple groups. 
However, neither provides a robustness perspective.
In contrast, wcPCA~\citep{fries2026worstcaselowrankapproximations}
includes out-of-sample guarantees for target covariances in the convex hull of the observed source covariances. 
Along similar lines, StablePCA~\citep{wang2026stablepcadistributionallyrobustlearning} 
casts multi-source PCA as a group-DRO problem.
\AnchorPCA{} is also worst-case optimal, but over a different uncertainty set of perturbed covariances (\cref{thm:minimax_reconstruction}). Importantly, it does not only optimize a worst-case objective: it also recovers the maximal invariant subspace $\Sstar$ (\cref{thm:invariant_subspace_limit}), a target none of the methods above consider.

\paragraph{Invariance-based methods.} 
Invariance-based dimension reduction methods seek representations whose distributions, moments, or predictive relations are stable across domains, often with the goal of improving downstream prediction for a target \(Y\).
Transfer Component Analysis (TCA)~\citep{pan2011domain} aligns source and target distributions through MMD-type criteria while preserving variance; Domain-Invariant Component Analysis (DICA)~\citep{muandet2013domain} learns transformations that reduce dissimilarity across training domains while preserving predictive structure.
In contrast, Principal Invariant Component Analysis (PICA) \citep{norman2026unsupervised} does not require information about $Y$ 
and finds linear components whose projections have the same distribution across domains. 
\AnchorPCA{} does not require information about a target $Y$ either and uses a geometric notion of invariance: agreement between the domain-specific top-$k$ principal projectors. This is weaker than what PICA aims for (since directions can be invariant even when their variances or ranks differ); 
We trade off invariance with explained variance, which ensures our solution is always non-empty. 
While DICA provides generalization guarantees for downstream prediction,
TCA, DICA, and PICA
do not
provide unsupervised PCA-style robustness guarantees for explained variance or reconstruction of the kind studied here.

\paragraph{Other methods.} Other multi-group dimensionality reduction methods model shared structure for statistical or exploratory purposes rather than for OOD robustness. 
Common principal component analysis assumes simultaneous diagonalizability of the covariance matrices \citep{flury1984common,flury1986algorithm}; partial common principal component methods estimate or test for shared eigendirections or shared principal subspaces \citep{schott1999,wang2021semiparametric}. 
Our \texttt{FindS}$_\star$ procedure builds on this testing tradition, but the population target is different: $\Sstar$ is the intersection of local top-$k$ eigenspaces, not a set of eigenvectors that are shared as individual axes or ordered in the same way across domains. 
Multi-block methods such as JIVE~\citep{lock2013joint} and AJIVE~\citep{feng2018angle} decompose data sources into joint and individual low-rank variation, whereas contrastive PCA~\citep{abid2018exploring} searches for directions enriched in one dataset relative to a background dataset. These methods provide useful decompositions, but they do not couple shared-structure estimation to the projector-agreement and structured-perturbation guarantees studied here. Nonlinear PCA and autoencoder variants provide another axis of generalization \citep{shen2024distributional}; \cref{app:nonlinear_extensions} discusses how the same agreement principle could be ported to autoencoder-based representations. 

\subsection{Invariance 
for prediction}
In supervised domain generalization, invariance is usually formulated as stability of a conditional law, a prediction rule, or a risk across domains. This principle is rooted in causal invariance \citep{haavelmo1943statistical,pearl2009causality,scholkopf2012causal,peters2016causal} and appears in invariant models for causal transfer learning, invariant risk minimization, invariant-feature subspace recovery, invariant causal representation learning, and invariant subspace decomposition for time-varying prediction problems \citep{rojascarulla2018invariant,arjovsky2019invariant,pmlr-v162-wang22x,lu2022invariant,JMLR:v26:24-0699}.
Several prediction methods relax strict invariance rather than enforcing it exactly: 
for example, anchor regression interpolates between ordinary least squares and an invariant solution through a penalty parameter~\citep{rothenhausler2020anchor,christiansen2022causal}.
\AnchorPCA{} imports this soft-invariance logic into unsupervised dimension reduction: finite $\lambda$ penalizes projector disagreement, while \AnchorPCAinfty{} recovers the hard agreement target and, when it exists, the maximal invariant subspace $\Sstar$.

\section{Further details on Anchor PCA} \label{app:further_details}

\subsection{Ordered bases for direction-level statements}
\label{app:ordered_representatives}

The optimization problems defining \AnchorPCA{} (\cref{def:anchorpca}) and \AnchorPCAinfty{} (\cref{def:anchorpca_infty}) depend on
$W$ only through the projection matrix $\Pi_W=WW^\top$. Hence, the canonical
population objects are solution subspaces or solution projectors. 
However, the projection-based formulation 
allows us to fix an ordered representation, see \Cref{sec:projection_based_formulation}. We now state their precise construction.

\subsubsection{\texorpdfstring{\AnchorPCA{}}{AnchorPCA}} 

Solutions of \AnchorPCA{} can be obtained by solving rank-$k$ PCA on $M_\lambda=\barSigma+2E\lambda\,\barPi$ (\cref{sec:projection_based_formulation}). Hence, we use the standard PCA convention and choose
$W_k^{\anchor,\lambda}=(w_1^{\anchor,\lambda},\dots,w_k^{\anchor,\lambda})$ as
an ordered orthonormal eigenbasis of $M_\lambda$ 
associated with its $k$ largest eigenvalues. 
Thus,
$w_1^{\anchor,\lambda}$ maximizes $w^\top M_\lambda w$ over unit vectors,
$w_2^{\anchor,\lambda}$ maximizes the same quantity subject to being orthogonal
to $w_1^{\anchor,\lambda}$, and so on. 
If $M_\lambda$ has eigenvalue ties, the ordered basis is not unique and we
choose any orthonormal basis inside tied eigenspaces. If all ties occur within the selected top-$k$ spectrum, this affects only the solution
basis, not the rank-$k$ solution subspace and projector. If there is no eigengap at rank-$k$, i.e. $\lambda_k(M_\lambda)=\lambda_{k+1}(M_\lambda)$, then the
rank-$k$ solution projector is non-unique, and any maximizing rank-$k$
solution projector may be used.

\subsubsection{\texorpdfstring{\AnchorPCAinfty{}}{AnchorPCA-infty}} \label{app:anchor_pca_infty_basis_convention}

We now define an analogous ordered basis for \AnchorPCAinfty{} in a two-stage construction.
First, we determine the solution subspace: directions are ordered by
agreement with the domain-specific top-$k$ principal subspaces
(equivalently, by decreasing eigenvalue of $\barPi$). 
Second, within each eigenspace, we choose a basis by iteratively maximizing the pooled explained variance over directions in that block. 

\smallskip
\noindent
\textbf{Step 1a: Ordering by agreement.}
The agreement
constraint of \AnchorPCAinfty{} in~\eqref{eq:anchorpca_infty_def} is equivalent
to maximizing 
\begin{equation}\label{eq:trace_constraint}
    \Tr(W^\top\barPi W) = \sum_{i=1}^k w_i^\top \barPi w_i
\end{equation}
over $W = [w_1, \ldots, w_k]\in\Ok$ (\cref{sec:projection_based_formulation}). We now characterize this solution.
Let
$
\rho_1>\rho_2>\cdots>\rho_L
$
be the distinct eigenvalues of $\barPi$, and let
\[
{\operatorname{Eig}}_\ell:=\ker(\barPi-\rho_\ell I_p),
\qquad
d_\ell:=\dim({\operatorname{Eig}}_\ell).
\]
${\operatorname{Eig}}_\ell$ is the eigenspace of $\barPi$ corresponding to the eigenvalue $\rho_\ell$ and $d_\ell$ is its dimension. 
Larger $\rho_\ell$ implies larger average agreement with
the domain-specific top-$k$ principal subspaces.
Then, let $r$ be the unique index satisfying
\[
\sum_{\ell<r} d_\ell < k
\le
\sum_{\ell\le r} d_\ell.
\]
By Ky Fan's maximum principle \citep[Problem~I.6.15]{bhatia1997matrix}, the trace
in~\eqref{eq:trace_constraint} is maximized precisely when the columns of $W$
are drawn from the leading eigenspaces of $\barPi$: every maximizer spans a
$k$-dimensional subspace of the form
\[
\operatorname{Eig}_1
\oplus\cdots\oplus
\operatorname{Eig}_{r-1}
\oplus~
\mathcal U_r,
\]
where $\mathcal U_r\le \operatorname{Eig}_r$ 
is an arbitrary
subspace of dimension $q:=k-\sum_{\ell<r}d_\ell$. 

\smallskip
\noindent
\textbf{Step 1b: Determining $\mathcal U_r$.}
The eigenspaces
$\operatorname{Eig}_1,\ldots,\operatorname{Eig}_{r-1}$
are selected completely, 
and
only the final eigenspace $\mathcal U_r$ can be determined via the secondary $\barSigma$-criterion of~\eqref{def:anchorpca_infty}, which chooses $\mathcal U_r$ to maximize
pooled explained variance
among all $q$-dimensional subspaces of ${\operatorname{Eig}}_r$. Equivalently, inside 
${\operatorname{Eig}}_r$, the subspace
$\mathcal U_r$ is chosen by the top-$q$ eigenspace of
$\Pi_{\operatorname{Eig}_r}\barSigma\Pi_{\operatorname{Eig}_r}$. 

\textbf{Step 2: Choosing an ordered basis.}
Within each fully selected eigenspace
${\operatorname{Eig}}_1,\dots,{\operatorname{Eig}}_{r-1}$, we choose an orthonormal eigenbasis ordered by 
non-increasing
eigenvalue 
of $\barSigma$, and concatenate these bases in decreasing order of
$\rho_\ell$. We then append an ordered top-$q$ eigenbasis of $\barSigma$ within
${\operatorname{Eig}}_r$.
With this convention, \AnchorPCAinfty{} directions are ordered first by
invariance, that is, by decreasing $\barPi$-eigenvalue. Inside a fixed
eigenspace, they are ordered by pooled explained variance. 

\begin{remark}[Handling non-uniqueness.]
If $\barSigma$ has eigenvalue ties in one of the fully selected 
eigenspaces
$\operatorname{Eig}_1, \ldots,
\operatorname{Eig}_{r-1}$, the basis within the tied eigenspace is not unique; we make an arbitrary choice, and this affects only direction-level quantities, not the solution subspace or projector. 
If instead $\Pi_{\operatorname{Eig}_r} \barSigma \Pi_{\operatorname{Eig}_r}$
has ties among its top $q$
eigenvalues, the subspace $\mathcal U_r$
itself is not unique, and the rank-$k$ \AnchorPCAinfty{} solution subspace and projector are likewise non-unique; the uniqueness statements elsewhere in the paper (e.g., \cref{thm:consistency_empirical_methods}(ii)) exclude this case by assumption.
\end{remark}

\subsection{Additional theoretical results} \label{app:details}

\paragraph{Notation.}
Throughout the appendix, all matrices are represented with respect to the
standard basis of \(\R^p\), unless otherwise stated. For pairwise orthogonal
subspaces \(\mathcal S_1,\dots,\mathcal S_j\subseteq\R^p\), we write
\(\mathcal S_1\oplus\cdots\oplus\mathcal S_j\) for their orthogonal direct sum.
Suppose that a symmetric matrix \(M\in\R^{p\times p}\) preserves each
\(\mathcal S_i\), that is, \(Mv\in\mathcal S_i\) whenever \(v\in\mathcal S_i\).
Then we can block-diagonalize $M$ with respect to \((\mathcal S_i)_{i=1}^j\) as follows.
Let \(B_i\in\R^{p\times d_i}\) be an orthonormal basis matrix for
\(\mathcal S_i\), where \(d_i:=\dim(\mathcal S_i)\), and let
\(Q:=[B_1\ \cdots\ B_j]\). In this aligned basis, $M$ is block diagonal, that is
\[
Q^\top M Q
=
\operatorname{diag}(A_1,\ldots,A_j)
=
\begin{bmatrix}
A_1 &        & 0 \\
    & \ddots &   \\
0   &        & A_j
\end{bmatrix},
\qquad
A_i := B_i^\top M B_i .
\]
We abbreviate this block-diagonal representation by writing
\(M=A_1\oplus\cdots\oplus A_j\) with respect to the decomposition
\(\mathcal S_1\oplus\cdots\oplus\mathcal S_j\).

\subsubsection{\texorpdfstring{\texttt{AnchorPCA}$_{\infty}$ as a limit of \texttt{AnchorPCA}$_{\lambda}$}{AnchorPCA infinity as a limit of AnchorPCA lambda}} \label{sec:relation_methods}

Intuitively, as the penalty $\lambda$ grows, solutions of \AnchorPCA{} approach the
hard-agreement solutions of \AnchorPCAinfty{}. The following proposition states this
at the level of projection matrices. 
\cref{thm:invariant_subspace_limit_extended} (ii) strengthens this at the level of individual directions rather than subspaces and relates the limit to the invariant subspace $\Sstar$.

\begin{proposition}[Large-penalty limit]
\label{prop:asymptotic_recovery}
Let $\Pianchor{k}$ be 
any rank-$k$ \AnchorPCA{} solution projector.
If $\lambda_n\to\infty$, every accumulation point of $(\Pi^{\anchor,\lambda_n}_{k})$ is a rank-$k$ \AnchorPCAinfty{} solution projector.

If, in addition, the population rank-$k$ \AnchorPCAinfty{} solution projector is
unique, denote it by $\Pi^{\anchor,\infty}_{k}$, then
\[
\opnorm{\Pianchor{k}-\Pi^{\anchor,\infty}_{k}} \to 0
\qquad
\text{as }\lambda\to\infty.
\]
\end{proposition}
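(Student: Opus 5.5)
The plan is to work entirely at the level of projectors and use the projection-based reformulation from \cref{sec:projection_based_formulation}. Write $\mathcal P_k:=\{\Pi_W: W\in\Ok\}$, which is compact. Set $f(\Pi):=\Tr(\barSigma\Pi)$ and $g(\Pi):=\Tr(\barPi\Pi)$; both are continuous. By the expansion of the penalty, $\Pianchor{k}$ maximizes $f+2E\lambda g$ over $\mathcal P_k$. Likewise, $\Pi^{\anchor,\infty}_k$ maximizes $f$ over the set $\mathcal G:=\argmax_{\mathcal P_k} g$, which is nonempty and closed.

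\textbf{Step 1 (accumulation points are $g$-optimal).} Let $g^\star:=\max_{\mathcal P_k} g$ and $f^\star:=\max_{\mathcal P_k} f$. Write $\Pi_n:=\Pi^{\anchor,\lambda_n}_k$ and fix any $\Pi^\circ\in\mathcal G$. Optimality of $\Pi_n$ gives
\[
f(\Pi_n)+2E\lambda_n g(\Pi_n)\ge f(\Pi^\circ)+2E\lambda_n g^\star .
\]
Since $|f|\le \Tr(\barSigma)$ on $\mathcal P_k$, this yields
\[
0\le g^\star-g(\Pi_n)\le \frac{f(\Pi_n)-f(\Pi^\circ)}{2E\lambda_n}\le\frac{2\Tr(\barSigma)}{2E\lambda_n}\to0 .
\]
Hence every accumulation point $\Pi'$ satisfies $g(\Pi')=g^\star$ by continuity, i.e.\ $\Pi'\in\mathcal G$.

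\textbf{Step 2 (accumulation points are $f$-optimal within $\mathcal G$).} Take $\Pi^\circ$ to be an \AnchorPCAinfty{} solution, so that $\Pi^\circ$ maximizes $f$ over $\mathcal G$. In the same inequality, $g(\Pi_n)\le g^\star=g(\Pi^\circ)$ makes the penalty terms favour $\Pi^\circ$, which gives $f(\Pi_n)\ge f(\Pi^\circ)$ for all $n$. Passing to a convergent subsequence $\Pi_{n_j}\to\Pi'$, continuity gives $f(\Pi')\ge f(\Pi^\circ)=\max_{\mathcal G}f$. Together with $\Pi'\in\mathcal G$ from Step 1, this shows that $\Pi'$ is a rank-$k$ \AnchorPCAinfty{} solution projector. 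Note that $\mathcal P_k$ is closed, so $\Pi'$ is indeed a rank-$k$ projector. This proves the first claim.

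\textbf{Step 3 (uniqueness implies convergence).} Suppose the \AnchorPCAinfty{} projector $\Pi^{\anchor,\infty}_k$ is unique, and suppose for contradiction that $\opnorm{\Pianchor{k}-\Pi^{\anchor,\infty}_k}\not\to0$. Then there exist $\varepsilon>0$, a sequence $\lambda_n\to\infty$, and solution projectors $\Pi_n$ (any choice, since ties are allowed) with $\opnorm{\Pi_n-\Pi^{\anchor,\infty}_k}\ge\varepsilon$. By compactness of $\mathcal P_k$, a subsequence converges. By Steps 1–2 its limit equals $\Pi^{\anchor,\infty}_k$, which is a contradiction.

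\textbf{Main obstacle.} The only delicate point is Step 2. It requires choosing the comparison point $\Pi^\circ$ to be the \AnchorPCAinfty{} solution itself rather than an arbitrary element of $\mathcal G$, so that the sign of the penalty difference can be discarded. The rest is routine compactness and continuity.
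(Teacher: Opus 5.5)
Your proof is correct and follows the same route as the paper: compactness of the set of rank-$k$ projectors; comparing the penalized objective against a maximizer of $\Tr(\barPi\,\cdot)$ to show that accumulation points maximize agreement; comparing against an \AnchorPCAinfty{} solution to show they maximize variance within that set; and the subsequence argument under uniqueness. The only difference is that you obtain Steps 1--2 by direct inequalities where the paper argues by contradiction with a slack $\delta$; as a small bonus, your version gives the rate $g^\star-g(\Pi_n)=O(1/\lambda_n)$.
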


\begin{proof}
Let
\[
\mathcal P_k
:=
\{X\in \mathbb R^{p \times p}: X = WW^\top,\ W \in \Ok\}
\]
denote the set of rank-$k$ orthogonal projectors. Since $\mathcal P_k$ is closed and bounded in $\mathbb R^{p\times p}$, it is compact.

For $X\in\mathcal P_k$, define
\[
H(X):=\Tr(\barSigma X),
\qquad
G(X):=\Tr(\barPi X),
\qquad
F_\lambda(X):=H(X)+2E\lambda\, G(X).
\]
Since both \AnchorPCAwout{}
objectives depend on $W$ only through $WW^\top$, rank-$k$ \AnchorPCA{} is the problem of maximizing $F_\lambda$ over $\mathcal P_k$, while rank-$k$ \AnchorPCAinfty{} is the problem of first maximizing $G$ over $\mathcal P_k$ and then maximizing $H$ among the maximizers of $G$.

Fix a sequence
$\lambda_n\to\infty$, and for each $n$, let $\Pi^{\anchor,\lambda_n}_{k}\in\mathcal P_k$ be a rank-$k$ \AnchorPCA{} solution projector at penalty $\lambda_n$. 
By compactness of $\mathcal P_k$, the sequence $(\Pi^{\anchor,\lambda_n}_{k})$ has accumulation points in $\mathcal P_k$. Let $\Pi_\star$ be any such accumulation point;
passing to a subsequence (and relabeling), $\opnorm{ \Pi^{\anchor,\lambda_n}_{k} - \Pi_\star} \to 0$.

\smallskip
\noindent
\textbf{Step 1: $\Pi_\star$ maximizes $G$ over $\mathcal P_k$.}
By contradiction, assume that $\Pi_\star$ does not maximize $G$ and that there exist $Y\in\mathcal P_k$ and $\delta>0$ such that
$
G(Y)\ge G(\Pi_\star)+2\delta.
$
By continuity of $G$, 
\[
G(\Pi^{\anchor,\lambda_n}_{k})\le G(\Pi_\star)+\delta
\]
for all sufficiently large $n$.
Further, for all $X\in\mathcal P_k$,
$
|H(X)|=|\Tr(\barSigma X)|\le k\opnorm{\barSigma}
$
(because $X$ is a rank-$k$ orthogonal projector), and hence
$
H(Y)-H(\Pi^{\anchor,\lambda_n}_{k})\ge -2k\opnorm{\barSigma}.
$
Therefore, for all sufficiently large $n$,
\begin{align*}
    F_{\lambda_n}(Y)-F_{\lambda_n}(\Pi^{\anchor,\lambda_n}_{k})
    &= \bigl(H(Y)-H(\Pi^{\anchor,\lambda_n}_{k})\bigr)+2E\lambda_n\bigl(G(Y)-G(\Pi^{\anchor,\lambda_n}_{k})\bigr) \\
    &\ge -2k\opnorm{\barSigma}+2E\lambda_n\delta.
\end{align*}
where $G(Y)-G(\Pi^{\anchor,\lambda_n}_{k}) \ge G(\Pi_\star)+2\delta - (G(\Pi_\star)+\delta) = 2\delta-\delta=\delta$.
Since $\lambda_n\to\infty$, the right-hand side eventually is positive, contradicting the optimality of $\Pi^{\anchor,\lambda_n}_{k}$. Thus, $\Pi_\star$ maximizes $G$ over $\mathcal P_k$.

\smallskip
\noindent
\textbf{Step 2: $\Pi_\star$ maximizes $H$ among the maximizers of $G$.}
By contradiction, assume that there exists $Y\in\mathcal P_k$ and $\delta>0$ such that $Y$ maximizes $G$ and
$
H(Y)\ge H(\Pi_\star)+2\delta.
$
Since $Y$ maximizes $G$ over $\mathcal P_k$, $G(Y)\ge G(\Pi^{\anchor,\lambda_n}_{k})$ for all $n$. Further,
by continuity of $H$,
\[
H(\Pi^{\anchor,\lambda_n}_{k})\le H(\Pi_\star)+\delta
\]
for all sufficiently large $n$. 
Hence, for all sufficiently large $n$,
\[
F_{\lambda_n}(Y)-F_{\lambda_n}(\Pi^{\anchor,\lambda_n}_{k})
=\bigl(H(Y)-H(\Pi^{\anchor,\lambda_n}_{k})\bigr)+2E\lambda_n\bigl(G(Y)-G(\Pi^{\anchor,\lambda_n}_{k})\bigr)
\ge \delta,
\]
which contradicts the optimality of $\Pi^{\anchor,\lambda_n}_{k}$. Therefore, $\Pi_\star$ is a rank-$k$ \AnchorPCAinfty{} solution projector.

\smallskip
\noindent
\textbf{Step 3: full convergence under uniqueness.}
Assume now that the rank-$k$ \AnchorPCAinfty{} solution projector is unique and equals $\Pi^{\anchor,\infty}_{k}$. By contradiction, assume that there exist $\varepsilon>0$ and a sequence $\lambda_n\to\infty$ such that
\[
\opnorm{\Pi^{\anchor,\lambda_n}_{k}-\Pi^{\anchor,\infty}_{k}}\ge \varepsilon
\qquad
\text{for all }n.
\]
By compactness of $\mathcal P_k$, the sequence $(\Pi^{\anchor,\lambda_n}_{k})$ has an accumulation point $\Pi_\star$. By the first part of the proposition, $\Pi_\star$ is a rank-$k$ \AnchorPCAinfty{} solution projector, hence $\Pi_\star=\Pi^{\anchor,\infty}_{k}$, a contradiction. Therefore
\[
\opnorm{\Pianchor{k} -\Pi^{\anchor,\infty}_{k}} \to 0.
\]
\end{proof}

\subsubsection{Recovery of \texorpdfstring{$\Sstar$}{S*} by \texorpdfstring{$\AnchorPCA{}$}{AnchorPCA} with finite penalty \texorpdfstring{$\lambda$}{}}
\label{app:exact_finite_recovery}

The following proposition shows that \AnchorPCA{} recovers the invariant subspace at two levels:
if $\lambda>\lambda^\star$, every rank-$k$ solution subspace contains $\Sstar$, and if
$\lambda>\lambda^{\star\star}$, 
the leading rank-$m$ \AnchorPCA{} directions (in the convention of \cref{app:ordered_representatives}) span~$\Sstar$.

\begin{proposition}[Finite-penalty recovery]
\label{prop:exact_recovery}
Assume $1\le m\le k<p$ and that $\barSigma$ is block-diagonal with respect to
$\Sstar\oplus\SstarPerp$. Let
$\barSigma_{\Sstar}:=\Pi_{\Sstar}\barSigma\Pi_{\Sstar}$ and
$\barSigma_{\SstarPerp}:=\Pi_{\SstarPerp}\barSigma\Pi_{\SstarPerp}$
be the diagonal blocks of $\barSigma$ with respect to $\Sstar\oplus\SstarPerp$ 
and let
$\barR:=\Pi_{\SstarPerp}\barPi\Pi_{\SstarPerp}$.
If~ $\Pianchor{k}$ is a rank-$k$ \AnchorPCA{} solution projector, then the following statements hold.
\begin{enumerate}[leftmargin=*]
\item[(i)] If
$\lambda>
\lambda^\star:=
{\bigl(\lambda_1(\barSigma_{\SstarPerp})-\lambda_m(\barSigma_{\Sstar})\bigr)}/
{2E\bigl(1-\lambda_{k-m+1}(\barR)\bigr)}$,
then
$\Pianchor{k}\Pi_{\Sstar}=\Pi_{\Sstar}$, equivalently
$\Sstar\subseteq\Sanchor{k}$.

\item[(ii)] If
$\lambda>
\lambda^{\star\star}:=
{\bigl(\lambda_1(\barSigma_{\SstarPerp})-\lambda_m(\barSigma_{\Sstar})\bigr)}/
{2E\bigl(1-\lambda_1(\barR)\bigr)}$,
then the top $m$-dimensional eigenspace of
$M_\lambda=\barSigma+2E\lambda\barPi$ is $\Sstar$
.
\end{enumerate}
\end{proposition}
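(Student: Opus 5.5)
The plan is to use the invariance of both $\barPi$ and $\barSigma$ under the splitting $\Sstar\oplus\SstarPerp$, so that $M_\lambda$ becomes block diagonal. We then compare the two blocks' spectra by Weyl's inequality and read off the top eigenspaces.

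\textbf{Step 1: block structure of $\barPi$.} Recall from \cref{sec:definitions} that $\Sstar$ is exactly the eigenspace of $\barPi$ for eigenvalue $1$, and that $v^\top\barPi v<1$ for unit vectors $v\notin\Sstar$. Since $\barPi$ is symmetric, it preserves both $\Sstar$ and $\SstarPerp$. Hence $\barPi = I_{\Sstar}\oplus \barR$ with respect to $\Sstar\oplus\SstarPerp$. Moreover, $0\preceq\barR$, and every eigenvalue of $\barR$ restricted to $\SstarPerp$ is strictly less than $1$, since the eigenvalue-$1$ eigenspace is exactly $\Sstar$. Viewed as a $p\times p$ matrix, $\barR$ additionally has $m$ zero eigenvalues on $\Sstar$. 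Because $\barR\succeq0$, its ordered eigenvalues $\lambda_j(\barR)$ are the eigenvalues on $\SstarPerp$, padded by zeros. Therefore $1-\lambda_j(\barR)>0$ for all $j$, and the thresholds $\lambda^\star,\lambda^{\star\star}$ are well defined (possibly negative, in which case every $\lambda>0$ qualifies).

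\textbf{Step 2: block structure of $M_\lambda$ and Weyl bounds.} By assumption $\barSigma=\barSigma_{\Sstar}\oplus\barSigma_{\SstarPerp}$, so
\[
M_\lambda=\bigl(\barSigma_{\Sstar}+2E\lambda I_{\Sstar}\bigr)\oplus\bigl(\barSigma_{\SstarPerp}+2E\lambda\barR\bigr).
\]
The spectrum of $M_\lambda$ is therefore the union of the spectra of the two blocks, with eigenvectors lying in $\Sstar$ and in $\SstarPerp$ respectively. The eigenvalues of the $\Sstar$-block are all at least $\mu_{\min}:=\lambda_m(\barSigma_{\Sstar})+2E\lambda$. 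By Weyl's inequality, the $j$-th largest eigenvalue of the $\SstarPerp$-block is at most $\lambda_1(\barSigma_{\SstarPerp})+2E\lambda\,\lambda_j(\barR)$.

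\textbf{Step 3: part (ii).} If $\lambda>\lambda^{\star\star}$, then every $\SstarPerp$-eigenvalue satisfies
\[
\lambda_1(\barSigma_{\SstarPerp})+2E\lambda\lambda_1(\barR)<\mu_{\min},
\]
which is just a rearrangement of the threshold condition. So all $m$ eigenvalues of the $\Sstar$-block strictly exceed all other eigenvalues of $M_\lambda$. Hence the top-$m$ eigenspace of $M_\lambda$ is unique and equals $\Sstar$.

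\textbf{Step 4: part (i).} If $\lambda>\lambda^\star$, the same rearrangement with $j=k-m+1$ shows that the $(k-m+1)$-th, and hence every later, $\SstarPerp$-eigenvalue is strictly below $\mu_{\min}$. Thus at most $k-m$ eigenvalues of the $\SstarPerp$-block are $\ge\mu_{\min}$. The eigenvalues of $M_\lambda$ that are $\ge\mu_{\min}$ are then the $m$ eigenvalues from $\Sstar$ plus at most $k-m$ others, so at most $k$ in total. This gives $\lambda_{k+1}(M_\lambda)<\mu_{\min}$.

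Finally, we invoke Ky Fan's characterization of maximizers of $\Tr(W^\top M_\lambda W)$ (as in \cref{app:ordered_representatives}): every maximizing rank-$k$ subspace contains the sum of all eigenspaces with eigenvalue strictly larger than $\lambda_{k+1}(M_\lambda)$. That sum contains $\Sstar$, so $\Pianchor{k}\Pi_{\Sstar}=\Pi_{\Sstar}$.

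The main point needing care is this last step: handling possible eigenvalue ties at rank $k$, so that the conclusion holds for \emph{every} solution projector. The strict inequality $\lambda_{k+1}(M_\lambda)<\mu_{\min}$ is exactly what handles it. A secondary detail is the zero-padding of $\barR$'s eigenvalues when $k-m+1>p-m$; in that case the bound only becomes easier.
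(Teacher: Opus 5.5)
Your proof is correct and follows the paper's argument. Both block-diagonalize $M_\lambda$ along $\Sstar\oplus\SstarPerp$, bound the $\SstarPerp$-block eigenvalues by Weyl's inequality at indices $(1,k-m+1)$ for part (i) and $(1,1)$ for part (ii), and compare them with $\lambda_m(\barSigma_{\Sstar})+2E\lambda$. Your explicit Ky Fan step, deducing $\lambda_{k+1}(M_\lambda)<\lambda_m(\barSigma_{\Sstar})+2E\lambda$ so that every maximizer contains $\Sstar$, rigorously covers the tie-handling the paper asserts in one line; your zero-padding caveat is vacuous, since $k<p$ forces $k-m+1\le p-m$.
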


\begin{proof}
Since rank-$k$ \AnchorPCA{} is rank-$k$ PCA on $M_\lambda=\barSigma+2E\lambda\,\barPi$, the proof proceeds by block-diagonalizing $M_\lambda$ along $\Sstar\oplus\SstarPerp$ and then showing that, once $\lambda$ is large enough, the $\Sstar$-block eigenvalues dominate the relevant $\SstarPerp$-block eigenvalues.

\smallskip
\noindent\textbf{Step 1: block-diagonalize $M_\lambda$ along $\Sstar\oplus\SstarPerp$.}
Let $S\in\mathcal O_{p\times m}$ and $Z\in \mathcal O_{p\times (p-m)}$ have orthonormal columns respectively spanning $\Sstar$ and $\SstarPerp$.
Let
\[
A:=S^\top\barSigma S,\qquad
B:=Z^\top\barSigma Z,\qquad
R:=Z^\top\barPi Z
\]
Then, in the orthonormal coordinates $(S,Z)$,
\[
(S,Z)^\top M_\lambda(S,Z)
=
\begin{pmatrix}
A+2E\lambda I_m & 0\\
0 & B+2E\lambda R
\end{pmatrix}
\]
because $S^\top\barSigma Z=0$ (because, by assumption, $\barSigma$ is block-diagonal with respect to
$\Sstar\oplus\SstarPerp$) and 
$S^\top\barPi S=I_m$ and $S^\top\barPi Z=0$ (as $\Sstar$ is the eigenspace of $\barPi$ associated with eigenvalue $1$). 

\smallskip
\noindent\textbf{Step 2: compare the top eigenvalues across the two blocks.} First, observe that
\[
\barSigma_{\Sstar}=SAS^\top,\qquad
\barSigma_{\SstarPerp}=ZBZ^\top,\qquad
\barR=ZRZ^\top.
\]
Hence the eigenvalues of $A, B$ and $R$ that we will require are as follows
\[
\lambda_m(A)=\lambda_m(\barSigma_{\Sstar}),\qquad
\lambda_1(B)=\lambda_1(\barSigma_{\SstarPerp}),\qquad
\lambda_{i}(R)=\lambda_{i}(\barR),\ i\in\{1, \ldots, p-m\}.
\]

Furthermore,
$\lambda_1(R)<1$: to prove this, assume, by contradiction that $Rv=v$ for some nonzero $v\in\R^{p-m}$. Then, as $R = Z^\top\barPi Z$,
$z:=Zv\in\SstarPerp$ satisfies $\barPi z=z$, which implies
$\Pi_k^{(e)}z=z$ for every $e$, hence $z\in\Sstar$, a contradiction.

The smallest eigenvalue of the invariant block and the eigenvalues of the non-invariant block are thus
\[
a_m(\lambda):=\lambda_m(A)+2E\lambda,
\qquad
b_j(\lambda):=\lambda_j(B+2E\lambda R) \ \mathrm{for}\ j\in\{1,\ldots, p-m\}.
\]

\smallskip
\noindent\textbf{Step 3: show inclusion of $\Sstar$ for $\lambda>\lambda^\star$ (part~(i)).}
By Weyl's inequality \citep[Theorem~4.3.7]{horn2012matrix} used in decreasing-order notation, applied to
$B,~2E\lambda R$ with indices $(1,k-m+1)$ (recall that we are using decreasing order notation of eigenvalues, whereas Thm.~4.3.7 in~\citep{horn2012matrix} uses increasing order),
\[
b_{k-m+1}(\lambda)
\le
\lambda_1(B)+2E\lambda\,\lambda_{k-m+1}(R).
\]
Since
$
\lambda^\star
=
\frac{\lambda_1(B)-\lambda_m(A)}
{2E\{1-\lambda_{k-m+1}(R)\}},
$
the condition $\lambda>\lambda^\star$ is equivalent to
\[
2E\lambda\{1-\lambda_{k-m+1}(R)\}
>
\lambda_1(B)-\lambda_m(A).
\]
and therefore
\[
a_m(\lambda) = \lambda_m(A)+2E\lambda
>
\lambda_1(B)+2E\lambda\,\lambda_{k-m+1}(R)
\ge
b_{k-m+1}(\lambda).
\]
Hence at most $k-m$ eigenvalues from the
non-invariant coordinate block are at least $a_m(\lambda)$, whereas all $m$
eigenvalues from the invariant coordinate block are at least $a_m(\lambda)$.
Therefore every top-$k$ eigenspace of $M_\lambda$ contains $\Sstar$, that is,
$
\Pianchor{k}\Pi_{\Sstar}=\Pi_{\Sstar}.
$

\smallskip
\noindent\textbf{Step 4: show that the leading $m$ directions span $S^\star$ for $\lambda > \lambda^{\star\star}$ (part~(ii)).}
Again by Weyl's inequality \citep[Theorem~4.3.7]{horn2012matrix} used in decreasing-order notation, with indices $(1,1)$,
\[
b_1(\lambda)
\le
\lambda_1(B)+2E\lambda\,\lambda_1(R).
\]
Since
$
\lambda^{\star\star}
=
\frac{\lambda_1(B)-\lambda_m(A)}
{2E\{1-\lambda_1(R)\}},
$
the condition $\lambda>\lambda^{\star\star}$ is equivalent to
\[
2E\lambda\{1-\lambda_1(R)\}
>
\lambda_1(B)-\lambda_m(A),
\]
and therefore
\[
a_m(\lambda)
=
\lambda_m(A)+2E\lambda
>
\lambda_1(B)+2E\lambda\,\lambda_1(R)
\ge
b_1(\lambda).
\]
Thus every eigenvalue of the invariant block strictly exceeds every eigenvalue
of the non-invariant block. Consequently, the leading $m$-dimensional
eigenspace of $M_\lambda$ is $\Sstar$.

\end{proof}

\begin{remark}[Basis-level interpretation]
Part~(ii) has a basis-level interpretation: the first $m$ directions of any ordered \AnchorPCA{} representative basis (in the sense of \cref{app:ordered_representatives}) span $\Sstar$ and diagonalize $\barSigma$ within $\Sstar$.
Under the weaker threshold in part~(i), only the subspace-level statement $\Sstar\subseteq\Sanchor{k}$ is guaranteed. If the eigenvalues of an invariant block and a non-invariant block of $M_\lambda$ tie, the ordered eigenbasis is determined only up to a rotation inside the tied eigenspace. Hence, the $k$ ordered directions need not contain $m$ vectors spanning $\Sstar$. Specifically, one can always choose a basis whose first $m$ columns span $\Sstar$, but this need not hold for every ordered basis.
\end{remark}

\subsubsection{Invariance of \texorpdfstring{$S_{\star}$}{Sstar} to variance inflation}

\begin{proposition} \label{prop:findsstar_exact_robustness}
    Assume $\lambda_k(\Sigma_e)>\lambda_{k+1}(\Sigma_e)$ for every $e\in\cE$, and
    let $\Sstar=\bigcap_{e\in\cE}\Imop(\Pi_k^{(e)})$ be the maximal invariant
    subspace of $(\Sigma_e)_{e\in\cE}$. Then $\Sstar$ is also the maximal invariant
    subspace of $(\Sigma_e+\Delta_e)_{e\in\cE}$ for every choice of symmetric
    matrices $\Delta_e$ with $0\preceq\Delta_e$ and
    $\Imop(\Delta_e)\subseteq\Imop(\Pi_k^{(e)})$, for all $e\in\cE$. 
\end{proposition}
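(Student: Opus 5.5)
The plan is to show that, for each domain $e$, the top-$k$ eigenspace of $\Sigma_e'=\Sigma_e+\Delta_e$ coincides with $\Imop(\Pi_k^{(e)})$. The rank-$k$ principal projector is then unchanged in every domain, so the intersection defining the maximal invariant subspace is unchanged as well. Since the eigengap assumption makes each $\Pi_k^{(e)}$ unique, there is no ambiguity in the choice of projector for the original covariances. We will also show that an eigengap persists for $\Sigma_e'$, so the perturbed projector is unique too.

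\textbf{Step 1: block structure.} Write $\mathcal U_e:=\Imop(\Pi_k^{(e)})$. Since $\Pi_k^{(e)}$ is the spectral projector of $\Sigma_e$, the matrix $\Sigma_e$ is block-diagonal with respect to $\mathcal U_e\oplus\mathcal U_e^\perp$. Its blocks are $A_e$, with eigenvalues $\lambda_1(\Sigma_e)\ge\dots\ge\lambda_k(\Sigma_e)$, and $B_e$, with eigenvalues $\lambda_{k+1}(\Sigma_e)\ge\dots\ge\lambda_p(\Sigma_e)$. Because $\Delta_e$ is symmetric with $\Imop(\Delta_e)\subseteq\mathcal U_e$, we have $\ker(\Delta_e)=\Imop(\Delta_e)^\perp\supseteq\mathcal U_e^\perp$. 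Hence $\Delta_e=\Pi_k^{(e)}\Delta_e\Pi_k^{(e)}$, so $\Delta_e$ acts only on the $\mathcal U_e$-block. It follows that $\Sigma_e'$ is block-diagonal with blocks $A_e+D_e$ and $B_e$, where $D_e\succeq0$ is the restriction of $\Delta_e$ to $\mathcal U_e$.

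\textbf{Step 2: eigenvalue comparison.} By Weyl's inequality (monotonicity under PSD perturbation), $\lambda_{\min}(A_e+D_e)\ge\lambda_{\min}(A_e)=\lambda_k(\Sigma_e)$. By assumption this is strictly larger than $\lambda_{k+1}(\Sigma_e)=\lambda_{\max}(B_e)$. So every eigenvalue of the $\mathcal U_e$-block strictly exceeds every eigenvalue of the $\mathcal U_e^\perp$-block. Consequently the top-$k$ eigenspace of $\Sigma_e'$ is exactly $\mathcal U_e$, and $\lambda_k(\Sigma_e')>\lambda_{k+1}(\Sigma_e')$, which gives uniqueness. This is the same block-comparison argument used in Step~4 of the proof of \cref{prop:exact_recovery}.

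\textbf{Step 3: conclude.} The perturbed principal projectors equal $\Pi_k^{(e)}$ for all $e$, so $\bigcap_e\Imop(\Pi_k'^{(e)})=\Sstar$.

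The only delicate point is the reduction in Step 1, namely that $\Imop(\Delta_e)\subseteq\mathcal U_e$ together with symmetry forces $\Delta_e$ to vanish on $\mathcal U_e^\perp$ and to have no off-diagonal blocks. This follows from $\ker\Delta_e=\Imop(\Delta_e)^\perp$. Everything else is routine.
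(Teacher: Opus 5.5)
Your proposal is correct and follows essentially the same argument as the paper: both use $\ker\Delta_e\supseteq\Imop(\Pi_k^{(e)})^\perp$ to block-diagonalize $\Sigma_e+\Delta_e$ along $\Imop(\Pi_k^{(e)})\oplus\Imop(\Pi_k^{(e)})^\perp$, then use $\Delta_e\succeq 0$ and the gap $\lambda_k(\Sigma_e)>\lambda_{k+1}(\Sigma_e)$ to show each top-$k$ eigenspace is unchanged, so the intersection is still $\Sstar$. Your explicit remark that the eigengap persists for $\Sigma_e+\Delta_e$, which makes the perturbed projector unique, is a small point the paper leaves implicit.
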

\begin{proof}
We first show that each admissible perturbation leaves the rank-$k$
principal subspaces unchanged. Intersecting these unchanged subspaces then
shows that $\Sstar$ is unchanged. 
Fix $e\in\cE$ and write $\mathcal S_e:=\Imop(\Pi_k^{(e)})$. Since $\Delta_e$ is
symmetric with $\Imop(\Delta_e)\subseteq\mathcal S_e$, its kernel contains
$\mathcal S_e^\perp$, so both $\Sigma_e$ and $\Delta_e$ are block-diagonal
with respect to $\R^p=\mathcal S_e\oplus\mathcal S_e^\perp$. On
$\mathcal S_e^\perp$, the perturbed covariance $\Sigma_e+\Delta_e$ agrees
with $\Sigma_e$ and so has largest eigenvalue $\lambda_{k+1}(\Sigma_e)$.
On $\mathcal S_e$, since $\Delta_e\succeq 0$, the smallest eigenvalue of
$\Sigma_e+\Delta_e$ is at least $\lambda_k(\Sigma_e)$. The gap
$\lambda_k(\Sigma_e)>\lambda_{k+1}(\Sigma_e)$ then forces the rank-$k$
principal subspace of $\Sigma_e+\Delta_e$ to be exactly $\mathcal S_e$,
so its rank-$k$ principal projector, 
which we denote by $\Pi_{k,\Delta}^{(e)}$ equals $\Pi_k^{(e)}$. Intersecting over
$e$ gives that the maximal invariant subspace of the perturbed covariances
is $\Sstar$, and averaging gives 
$E^{-1}\sum_{e=1}^E\Pi_{k,\Delta}^{(e)}=\barPi$.

\end{proof}

\subsection{Test statistic of \texttt{FindS}\texorpdfstring{$_\star$}{ S*}}\label{app:wald-test}

We now state the Wald-type test statistic used in the sequential procedure for estimating \(\dim(\Sstar)\). Since our construction is adapted from Schott's partial common principal component subspace test \citep{schott1999}, we first recall the standard version in our notation and then explain the modification for \(\Sstar\). We present only the reduced representation, since this is the form used in
computation.
To simplify notation, throughout this appendix section we hide the dependence on $k$. For example, we let \(\Pi_e := \Pi_k^{(e)}\) be the projector onto the top-\(k\) eigenspace in domain \(e\).

\subsubsection{Standard construction.}
Suppose that, for each \(e\in\{1,\dots,E\}\), we observe 
an i.i.d.\ Gaussian sample
with population covariance matrix \(\Sigma_e\), and assume that the \(k\)th and \((k+1)\)st eigenvalues of \(\Sigma_e\) are separated by a strict eigengap. Let \(\Pi_e\) denote the projector onto the top-\(k\) eigenspace of \(\Sigma_e\), and define
$r$ as the dimension of the union
of the top-k eigenspaces of $(\Sigma_e)_{e\in\cE}$, that is, $r := \dim\!\left(\operatorname{span}\bigl(\Imop(\Pi_1)\cup\cdots\cup\Imop(\Pi_E)\bigr)\right).$
Schott considers the hypotheses
\[
H_{0,s}: r=s
\qquad\text{versus}\qquad
H_{A,s}: r>s.
\]
We now show how the test statistic is constructed. Let \(\widehat\Pi_e\) be the sample projector onto the top-\(k\) eigenspace of the sample covariance matrix \(\widehat\Sigma_e\), and let \(\widehat\Pi_0\) be the projector onto the span of the top \(s\) eigenvectors of \(\widehat\Pi_1+\cdots+\widehat\Pi_E\). 
Let \(\widehat C_e\in\mathbb{R}^{p\times k}\) be an orthonormal basis of \(\mathrm{Im}(\widehat\Pi_e)\), and let \(\widehat C_0\in\mathbb{R}^{p\times (p-s)}\) be an orthonormal basis of \(\mathrm{Im}(I_p-\widehat\Pi_0)\). 
The statistic depends on three objects: the deviation vector \(v^\star\), its estimated asymptotic covariance matrix \(\widehat W^\star\), and the projector \(\widehat F\) onto the eigenspace of \(\widehat W^\star\) corresponding to its \(n_s=(Ek-s)(p-s)\) largest eigenvalues.

\paragraph{Deviation vector \(v^\star\).} Define
\[
v^\star
:=
\begin{pmatrix}
\mathrm{vec}(\widehat C_0^\top \widehat C_1)\\
\vdots\\
\mathrm{vec}(\widehat C_0^\top \widehat C_E)
\end{pmatrix}.
\]

\paragraph{Covariance matrix \(\widehat W^\star\).} Here, we need several intermediate components. First, let
\[
\widehat\Pi_\star
:=
\sum_{j=1}^s \widehat\lambda_j \widehat q_j \widehat q_j^\top,
\qquad
\widehat\Pi_\star^+
:=
\sum_{j=1}^s \widehat\lambda_j^{-1} \widehat q_j \widehat q_j^\top,
\]
where \(\widehat\lambda_1\ge\cdots\ge\widehat\lambda_s\) and \(\widehat q_1,\dots,\widehat q_s\) are the top-$s$ eigenvalues and corresponding eigenvectors of \(\widehat\Pi_1+\cdots+\widehat\Pi_E\).
Further, for all \(e\in\{1,\dots,E\}\), let \(\widehat\lambda_{e,1} \ge \cdots \ge \widehat\lambda_{e,p}\) and \(\widehat q_{e,1},\dots,\widehat q_{e,p}\) denote the eigenvalues and corresponding eigenvectors of \(\widehat\Sigma_e\), in decreasing order.
Then, for all $e\in\{1,\dots,E\}$, let
\[
\widehat Y_e^\star
:=
\sum_{j=1}^k \sum_{\ell=k+1}^p
\frac{n\,\widehat\lambda_{e,j}\widehat\lambda_{e,\ell}}
     {n_e(\widehat\lambda_{e,j}-\widehat\lambda_{e,\ell})^2}
\left(
c_j c_j^\top \otimes
\widehat C_0^\top \widehat q_{e,\ell}\widehat q_{e,\ell}^\top \widehat C_0
\right),
\]
where \(c_j\) is the \(j\)th standard basis vector in \(\mathbb{R}^k\), \(n:=\sum_{e=1}^E n_e\), and $\otimes$ denotes the Kronecker product. The eigengap enters here through the factors \((\widehat\lambda_{e,j}-\widehat\lambda_{e,\ell})^{-2}\): if the gap is small, the estimated eigenspace becomes unstable.
Now, for $h,i \in \{1,\ldots, E\}$, define the $k(p-s)\times k(p-s)$ matrix
\[
\widehat V_{hi}
:=
\sum_{f=1}^E
(\widehat C_h^\top \widehat\Pi_\star^+ \widehat C_f \otimes I_{p-s})
\widehat Y_f^\star
(\widehat C_f^\top \widehat\Pi_\star^+ \widehat C_i \otimes I_{p-s})
-
(\widehat C_h^\top \widehat\Pi_\star^+ \widehat C_i \otimes I_{p-s})
\widehat Y_i^\star
-
\widehat Y_h^\star
(\widehat C_h^\top \widehat\Pi_\star^+ \widehat C_i \otimes I_{p-s})
\]
and let the \(Ek(p-s)\times Ek(p-s)\) partitioned matrix whose \((h,i)\)th block is \(\widehat V_{hi}\) be denoted by \(\widehat V\). Finally, let
\[
\widehat W^\star
:=
\operatorname{diag}(\widehat Y_1^\star,\dots,\widehat Y_E^\star)+\widehat V.
\]

\paragraph{Projector onto the leading eigenspace.}
Let \(\widehat F\) be the eigenprojection of \(\widehat W^\star\) corresponding to its
$n_s=(Ek-s)(p-s)$
largest eigenvalues. 

\paragraph{Reduced Wald statistic.}
Schott’s reduced Wald statistic is then
\[
T_s
:=
n\,(v^\star)^\top
(\widehat F \widehat W^\star \widehat F)^+
v^\star,
\]
where \((\cdot)^+\) denotes the Moore--Penrose pseudoinverse. Under the Gaussian assumptions and the eigengap condition, \(T_s\) is asymptotically \(\chi^2_{n_s}\) under \(H_{0,s}\).

\subsubsection{Adaptation to \texorpdfstring{$\Sstar$}{S*}}

For \texttt{FindS}$_\star$, we apply the same construction not to the top-\(k\) projectors \(\Pi_e\), but to their orthogonal complements
\[
\Pi_e^\perp := I_p-\Pi_e.
\]
A vector belongs to \(\Sstar\) if and only if it lies in the null space of every \(\Pi_e^\perp\). Hence \(\Sstar\) is the common null space of \(\Pi_1^\perp,\dots,\Pi_E^\perp\), and estimating \(\dim(\Sstar)\) is equivalent to estimating the dimension of the span of these complementary projectors.

In practice, this means that Schott’s statistic is applied exactly as above, but with the selected eigenspaces replaced by the \((p-k)\)-dimensional complementary eigenspaces, that is, by $\Pi^\perp_e$, and for $t=p-k,\dots,p-m_\mathrm{min}-1$, test
\[
H_{0,t}: r= t
\quad\text{versus}\quad
H_{A,t}: r>t,
\qquad
r:=p-\dim(\Sstar).
\]
Let $\widehat r$ be the first $t$ for which we do not reject. The estimated invariant dimension is then $\widehat m=p-\widehat r$.
This is the same procedure as in the main text under the change of variables $s=p-t$.
For the complementary-space test, $\widehat C_e$ is formed from the empirical eigenvectors
$\widehat q_{e,k+1},\dots,\widehat q_{e,p}$. Accordingly, in the covariance formula
the selected indices range over $j=k+1,\dots,p$ and the non-selected indices over
$\ell=1,\dots,k$; equivalently, one may apply Schott's formula after reordering the
spectrum so that the complementary eigenspace is treated as the selected subspace.

Under the complement parametrization, the degrees of freedom become $n_t = (E(p-k) - t)(p-t)$, so $T_t$ is asymptotically $\chi^2_{n_t}$ under $H_{0,t}$. The eigengap assumption $\lambda_k(\Sigma_e) > \lambda_{k+1}(\Sigma_e)$ separates the top-$k$ and bottom-$(p-k)$ eigenspaces symmetrically, so Schott's regularity conditions apply verbatim to the complementary eigenspaces.

\section{Proofs} \label{app:proofs}

\subsection{Additional intermediate results}
\label{app:additional_intermediate_results}
Here we collect additional results used in the proofs of the main text results. We first introduce principal angles and the corresponding projector-distance identity, then restate the Davis--Kahan theorem, and finally record the three lemmas used in the consistency proof \cref{app:consistency}.

\begin{definition}[Principal angles]
Let $U,V\in\Ok$ and let $\sigma_1\ge\cdots\ge \sigma_d$ be the singular values of $U^\top V$. The \emph{principal angles} between $U$ and $V$ are
\[
\cos^{-1}(\sigma_1),\dots,\cos^{-1}(\sigma_d).
\]
We write $\Theta(U,V)$ for the diagonal matrix of principal angles and $\sin\Theta(U,V)$ for its entrywise sine.
\end{definition}

\begin{proposition}[Projector distance and principal angles]
\label{prop:projector_distance_principal_angles}
Let $\mathcal U,\mathcal V\subseteq\R^p$ be subspaces of equal dimension
$d$, and let $U,V\in\R^{p\times d}$ have orthonormal columns spanning
$\mathcal U$ and $\mathcal V$, respectively. Then
\begin{equation}
    \label{eq:pi_distance_to_sin_distance}
    \opnorm{\Pi_{\mathcal U}-\Pi_{\mathcal V}}
    =
    \opnorm{\sin\Theta(U,V)}.
\end{equation}
\end{proposition}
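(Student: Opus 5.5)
We prove the identity by reducing both sides to the singular values of a single small matrix. Set $P:=\Pi_{\mathcal U}=UU^\top$ and $Q:=\Pi_{\mathcal V}=VV^\top$. Let $\sigma_1\ge\cdots\ge\sigma_d$ be the singular values of $U^\top V$, so that $\cos\theta_i=\sigma_i$ and $\opnorm{\sin\Theta(U,V)}=\max_i\sqrt{1-\sigma_i^2}=\sqrt{1-\sigma_d^2}$. It therefore suffices to show $\opnorm{P-Q}^2=1-\sigma_d^2$.

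The first step uses the algebraic identity $(P-Q)^2=P+Q-PQ-QP$ together with the fact that $P-Q$ is symmetric. This gives $\opnorm{P-Q}^2=\opnorm{(P-Q)^2}$. For any unit vector $x$ we have
\[
x^\top(P-Q)^2x=\|(I-Q)Px\|^2+\|(I-P)Qx\|^2,
\]
which holds because $(P-Q)^2=P(I-Q)P+(I-P)Q(I-P)$, as one checks by expanding. Both terms commute with $P$, so $(P-Q)^2$ is block diagonal with respect to $\mathcal U\oplus\mathcal U^\perp$. Hence
\[
\opnorm{P-Q}^2=\max\bigl\{\opnorm{P(I-Q)P},\ \opnorm{(I-P)Q(I-P)}\bigr\}.
\]

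The next step evaluates each block. On $\mathcal U$, $P(I-Q)P=U(I_d-U^\top VV^\top U)U^\top$. Its eigenvalues are $1-\sigma_i^2$, so its norm is $1-\sigma_d^2$. On $\mathcal U^\perp$, $(I-P)Q(I-P)=(I-P)V\,V^\top(I-P)$. Its nonzero eigenvalues coincide with those of $V^\top(I-P)V=I_d-V^\top UU^\top V$, which are again $1-\sigma_i^2$, so its norm is also $1-\sigma_d^2$. Taking the maximum and then square roots gives $\opnorm{P-Q}=\sqrt{1-\sigma_d^2}=\opnorm{\sin\Theta(U,V)}$.

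The main obstacle is the block-decomposition step, namely verifying $(P-Q)^2=P(I-Q)P+(I-P)Q(I-P)$. Expanding the right-hand side gives $PQP\cdot(-1)+P+Q-PQ-QP+PQP$, which equals $P+Q-PQ-QP$ after cancellation. Everything else is the standard fact that $AB$ and $BA$ share their nonzero eigenvalues, together with the definition of the principal angles. The equal-dimension assumption enters because both blocks then have the same spectrum $\{1-\sigma_i^2\}$. If $d>p/2$, the matrix $U^\top V$ has all $\sigma_i>0$, but the formula is unaffected.
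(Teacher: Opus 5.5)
Your proof is correct. It takes a different route only in the sense that the paper gives no derivation: it cites the identity as the standard projector-distance result \citep[Thm.~2.1]{drineas2019lowrank} and just records $\Pi_{\mathcal U}=UU^\top$, $\Pi_{\mathcal V}=VV^\top$.

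Your argument is self-contained. You split $(P-Q)^2=P(I-Q)P+(I-P)Q(I-P)$ into blocks on $\mathcal U$ and $\mathcal U^\perp$. You then read off the spectrum $\{1-\sigma_i^2\}$ of each block from $U(I_d-U^\top VV^\top U)U^\top$ and from the $AA^\top$ versus $A^\top A$ fact.

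This buys several things. It makes explicit where equal dimension is needed: otherwise the $\mathcal U^\perp$ block acquires eigenvalue $1$. It also gives the whole spectrum of $(P-Q)^2$, not just its norm. In particular it yields $\fnorm{P-Q}^2=2\sum_i\sin^2\theta_i=2d-2\Tr(PQ)$, which is exactly the identity the paper uses when it expands the agreement penalty and interprets $\Tr(\Pi_W\Pi_k^{(e)})$ via principal angles. The citation, by contrast, buys brevity.

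There are two small slips, neither of which affects your proof.
\begin{itemize}
\item Your displayed quadratic form is wrong as written. Since $x^\top(I-P)Q(I-P)x=\|Q(I-P)x\|^2$, the correct identity is $x^\top(P-Q)^2x=\|(I-Q)Px\|^2+\|Q(I-P)x\|^2$. Your version with $\|(I-P)Qx\|^2$ adds a spurious positive term for generic $x\in\mathcal U$. You never use this display, so nothing breaks.
\item Your closing remark is false. The condition $d>p/2$ does not force all $\sigma_i>0$. Take the canonical basis $c_1,c_2,c_3$ of $\R^3$ with $\mathcal U=\operatorname{span}(c_1,c_2)$ and $\mathcal V=\operatorname{span}(c_1,c_3)$; then $U^\top V=\mathrm{diag}(1,0)$, so $\sigma_2=0$. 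What $d>p/2$ does force is $\dim(\mathcal U\cap\mathcal V)\ge 2d-p$, that is, $\sigma_1=\cdots=\sigma_{2d-p}=1$.
\end{itemize}
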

\begin{proof}
This is the standard projector-distance identity for equal-dimensional
subspaces; see, for example, \citep[Thm.~2.1]{drineas2019lowrank}. Since
$U$ and $V$ have orthonormal columns, their orthogonal projectors are
$\Pi_{\mathcal U}=UU^\top$ and $\Pi_{\mathcal V}=VV^\top$.
\end{proof}

\begin{theorem}[Davis--Kahan $\sin\Theta$ theorem \citep{yu2015DKSinTheta}]
\label{thm:davis_kahan}
Let $\Sigma,\widehat{\Sigma}\in\R^{p\times p}$ be symmetric, with eigenvalues $\lambda_1\ge\cdots\ge\lambda_p$ and $\widehat{\lambda}_1\ge\cdots\ge\widehat{\lambda}_p$
and let $r, s \in \{1,\ldots, p\}$ such that $r \le s$.
Let 
$V$ and $\widehat{V}$ contain orthonormal eigenvectors associated with the eigenvalues 
$\lambda_r,\dots,\lambda_s$ and $\widehat{\lambda}_r,\dots,\widehat{\lambda}_s$, respectively. Let
\[
\delta
:=
\inf\{|\widehat{\lambda}-\lambda|:\lambda\in[\lambda_s,\lambda_r],\ \widehat{\lambda}\in(-\infty,\widehat{\lambda}_{s+1}]\cup[\widehat{\lambda}_{r-1},\infty)\},
\]
with $\widehat{\lambda}_0:=\infty$ and $\widehat{\lambda}_{p+1}:=-\infty$. If $\delta>0$, then
\begin{equation}
\label{eq:davis_kahan}
\opnorm{\sin\Theta(\widehat{V},V)}
\le
\frac{\opnorm{\widehat{\Sigma}-\Sigma}}{\delta}.
\end{equation}
\end{theorem}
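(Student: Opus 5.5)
The plan is the classical Sylvester-equation argument, with the operator-norm separation handled by centering at the midpoint of the interval $[\lambda_s,\lambda_r]$. Write $d:=s-r+1$. Let $\widehat V_\perp\in\R^{p\times(p-d)}$ collect orthonormal eigenvectors of $\widehat\Sigma$ for the remaining eigenvalues $\widehat\lambda_j$, $j\notin\{r,\dots,s\}$, so that $[\widehat V\ \widehat V_\perp]$ is orthogonal. Put $\Lambda:=\operatorname{diag}(\lambda_r,\dots,\lambda_s)$ and let $\widehat\Lambda_\perp$ be the diagonal matrix of the complementary eigenvalues of $\widehat\Sigma$.

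\textbf{Step 1: reduce $\sin\Theta$ to a cross term.} Set $X:=\widehat V_\perp^\top V$. The singular values of $\widehat V^\top V$ are the cosines of the principal angles. Using $\widehat V\widehat V^\top+\widehat V_\perp\widehat V_\perp^\top=I_p$, we get
\[
X^\top X = V^\top V - V^\top\widehat V\widehat V^\top V = I_d-(\widehat V^\top V)^\top(\widehat V^\top V).
\]
Hence the singular values of $X$ are the sines of the principal angles, and $\opnorm{\sin\Theta(\widehat V,V)}=\opnorm{X}$.

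\textbf{Step 2: Sylvester identity.} From $\Sigma V=V\Lambda$ and $\widehat V_\perp^\top\widehat\Sigma=\widehat\Lambda_\perp\widehat V_\perp^\top$ we obtain
\[
\widehat V_\perp^\top(\widehat\Sigma-\Sigma)V=\widehat\Lambda_\perp X-X\Lambda .
\]
Since $\widehat V_\perp$ and $V$ have orthonormal columns, the left-hand side has operator norm at most $\opnorm{\widehat\Sigma-\Sigma}$. It therefore suffices to show $\opnorm{\widehat\Lambda_\perp X-X\Lambda}\ge\delta\opnorm{X}$.

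\textbf{Step 3: separation lower bound (the main point).} For operator norms, the general separated-spectra bound carries a constant larger than $1$. Here, however, the spectrum of $\Lambda$ lies in an interval and the spectrum of $\widehat\Lambda_\perp$ lies outside an enlargement of it, which gives constant $1$. Let $c:=(\lambda_r+\lambda_s)/2$ and $h:=(\lambda_r-\lambda_s)/2$, so $\opnorm{\Lambda-cI}\le h$. Every diagonal entry $\widehat\lambda$ of $\widehat\Lambda_\perp$ lies in $(-\infty,\widehat\lambda_{s+1}]\cup[\widehat\lambda_{r-1},\infty)$. By the definition of $\delta$, such a $\widehat\lambda$ has distance at least $\delta$ from all of $[\lambda_s,\lambda_r]$, hence $|\widehat\lambda-c|\ge h+\delta$. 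Because $\widehat\Lambda_\perp-cI$ is diagonal, this gives $\opnorm{(\widehat\Lambda_\perp-cI)X}\ge(h+\delta)\opnorm{X}$. Writing $\widehat\Lambda_\perp X-X\Lambda=(\widehat\Lambda_\perp-cI)X-X(\Lambda-cI)$ and applying the triangle inequality,
\[
\opnorm{\widehat\Lambda_\perp X-X\Lambda}\ge(h+\delta)\opnorm{X}-h\opnorm{X}=\delta\opnorm{X}.
\]

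Combining Steps 1--3 yields \eqref{eq:davis_kahan}. The only delicate point is Step 3: one must use the interval geometry via centering rather than a generic spectral-separation bound. The conventions $\widehat\lambda_0=\infty$ and $\widehat\lambda_{p+1}=-\infty$ cover the boundary cases $r=1$ or $s=p$, where one of the two outer ranges is empty and the argument is unchanged.
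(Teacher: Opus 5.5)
Your proof is correct. The identification $\opnorm{\sin\Theta(\widehat V,V)}=\opnorm{\widehat V_\perp^\top V}$, the Sylvester identity and the midpoint-centering bound are all sound. The centering is exactly what exploits the asymmetric ``interval versus outside'' form of $\delta$: the eigenvalues of $\Sigma$ attached to $V$ lie in $[\lambda_s,\lambda_r]$, while every complementary eigenvalue $\widehat\lambda_j$ with $j<r$ or $j>s$ lies in $[\widehat\lambda_{r-1},\infty)\cup(-\infty,\widehat\lambda_{s+1}]$. One step is left implicit: that $\widehat V$ extends to a full orthonormal eigenbasis of $\widehat\Sigma$ whose added columns carry precisely the eigenvalues $\widehat\lambda_j$, $j\notin\{r,\dots,s\}$. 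This follows by completing the columns of $\widehat V$ inside each eigenspace of $\widehat\Sigma$.

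The paper itself gives no argument. It quotes the classical Davis--Kahan theorem, in the form recorded by Yu, Wang and Samworth, which holds for every unitarily invariant norm. Your self-contained route shows why the constant is $1$ here: a mere $\delta$-separation of the two spectra, without the interval structure, gives the operator-norm bound only up to the sharp factor $\pi/2$. Your argument also extends verbatim to all unitarily invariant norms. With $D:=\widehat\Lambda_\perp-cI$, the inequalities $\|X\|\le\opnorm{D^{-1}}\,\|DX\|$ and $\|X(\Lambda-cI)\|\le\opnorm{\Lambda-cI}\,\|X\|$ hold for any such norm. The citation simply buys brevity. The paper only ever uses the operator-norm case (in \cref{lem:top_projector_perturbation}, in the proof of \cref{thm:invariant_subspace_limit_extended}, and in the proof of \cref{thm:consistency_empirical_methods}), so your argument covers everything it needs.
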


\begin{lemma}[Top-$d$ eigenspace perturbation bound]
\label{lem:top_projector_perturbation}
Let \(A,\widehat A\) 
be symmetric \(p \times p\) matrices, and let $d\in\{1,\ldots,p-1\}$. Let \(\Pi_d(A)\) and
\(\Pi_d(\widehat A)\) denote the 
orthogonal projector
onto the spans of the top \(d\) eigenvectors
of
\(A\) and \(\widehat A\), respectively. If
$
\gamma_d(A):=\lambda_d(A)-\lambda_{d+1}(A)>0
$
and
$
\opnorm{\widehat A-A}<\frac{\gamma_d(A)}{2},
$
then
\[
\opnorm{\Pi_d(\widehat A)-\Pi_d(A)}
\le
\frac{2\opnorm{\widehat A-A}}{\gamma_d(A)}.
\]
\end{lemma}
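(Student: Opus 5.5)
The plan is to reduce the statement to the Davis--Kahan $\sin\Theta$ bound (\cref{thm:davis_kahan}) combined with the projector-distance identity of \cref{prop:projector_distance_principal_angles}. Write $\varepsilon:=\opnorm{\widehat A-A}$ and $\gamma:=\gamma_d(A)$. We apply \cref{thm:davis_kahan} with $\Sigma=A$, $\widehat\Sigma=\widehat A$, $r=1$ and $s=d$. Then $V$ spans the top-$d$ eigenspace of $A$, $\widehat V$ spans that of $\widehat A$, and their projectors are $\Pi_d(A)=VV^\top$ and $\Pi_d(\widehat A)=\widehat V\widehat V^\top$. Both subspaces have dimension $d$, so \cref{prop:projector_distance_principal_angles} gives
\[
\opnorm{\Pi_d(\widehat A)-\Pi_d(A)}=\opnorm{\sin\Theta(\widehat V,V)}\le \frac{\varepsilon}{\delta},
\]
and it remains to show $\delta\ge\gamma/2$.

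To bound $\delta$, note that $\widehat\lambda_0=\infty$ when $r=1$, so the set $[\widehat\lambda_{r-1},\infty)$ is empty or irrelevant. Hence
\[
\delta=\inf\bigl\{|\widehat\lambda-\lambda|:\lambda\in[\lambda_d,\lambda_1],\ \widehat\lambda\le\widehat\lambda_{d+1}\bigr\}.
\]
By Weyl's inequality \citep[Theorem~4.3.7]{horn2012matrix}, $\widehat\lambda_{d+1}\le\lambda_{d+1}+\varepsilon$. Therefore, for any admissible pair,
\[
\lambda-\widehat\lambda\ \ge\ \lambda_d-\lambda_{d+1}-\varepsilon=\gamma-\varepsilon>\gamma/2,
\]
using the hypothesis $\varepsilon<\gamma/2$. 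So $\delta\ge\gamma-\varepsilon>\gamma/2>0$. Since $\delta>0$, \cref{thm:davis_kahan} applies, and we obtain $\varepsilon/\delta\le 2\varepsilon/\gamma$.

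One technical point needs care. If $\widehat A$ has a tie between $\widehat\lambda_d$ and $\widehat\lambda_{d+1}$, then $\Pi_d(\widehat A)$ would not be well defined. Weyl's inequality rules this out: $\widehat\lambda_d\ge\lambda_d-\varepsilon$ and $\widehat\lambda_{d+1}\le\lambda_{d+1}+\varepsilon$, so $\widehat\lambda_d-\widehat\lambda_{d+1}\ge\gamma-2\varepsilon>0$. Hence both projectors are unique. Beyond this, the only step requiring attention is the convention for $\delta$ in \cref{thm:davis_kahan}, specifically that the eigenvalue index $r-1=0$ contributes nothing. I expect no real obstacle.
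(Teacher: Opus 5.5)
Your proposal is correct and follows essentially the same route as the paper: Weyl's inequality gives $\lambda_{d+1}(\widehat A)\le\lambda_{d+1}(A)+\varepsilon$. Hence the Davis--Kahan separation with $r=1$, $s=d$ satisfies $\delta\ge\gamma_d(A)-\varepsilon>\gamma_d(A)/2$, and the projector-distance identity turns the $\sin\Theta$ bound into the stated operator-norm bound. Your extra observation that $\widehat\lambda_d-\widehat\lambda_{d+1}\ge\gamma_d(A)-2\varepsilon>0$, so that $\Pi_d(\widehat A)$ is well defined, is correct; the paper leaves it implicit.
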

\begin{proof}
Let
$
\varepsilon:=\opnorm{\widehat A-A}.
$
By Weyl's inequality \citep[Theorem~4.3.7]{horn2012matrix} in decreasing-order notation with indices $(d+1,1)$ applied to $(A, \widehat A - A)$,
$
\lambda_{d+1}(\widehat A)\le \lambda_{d+1}(A)+\varepsilon,
$
so
\[
\lambda_d(A)-\lambda_{d+1}(\widehat A)
\ge
\lambda_d(A)-\lambda_{d+1}(A)-\varepsilon
=
\gamma_d(A)-\varepsilon
> 0,
\]
where the last inequality uses \(\varepsilon < \gamma_d(A)/2\) which holds by assumption.
Let $V$ and $\widehat V$ have orthonormal columns spanning the top-$d$
eigenspaces of $A$ and $\widehat A$, respectively. Applying
\cref{thm:davis_kahan} with $\Sigma=A$, $\widehat\Sigma=\widehat A$, $r=1$,
and $s=d$. The separation quantity is \(\delta=\inf\{|\widehat\lambda-\lambda|:\lambda\in[\lambda_d(A),\lambda_1(A)],\,\widehat\lambda\in(-\infty,\lambda_{d+1}(\widehat A)]\}=\lambda_d(A)-\lambda_{d+1}(\widehat A)\ge\gamma_d(A)-\varepsilon>\gamma_d(A)/2\).
Therefore,
\[
\opnorm{\Pi_d(\widehat A)-\Pi_d(A)}
=
\opnorm{\sin\Theta(\widehat V,V)}
\le
\frac{\varepsilon}{\delta}
\le
\frac{2\varepsilon}{\gamma_d(A)},
\]
where the first equality follows from~\eqref{eq:pi_distance_to_sin_distance}.
\end{proof}

Recall the following asymptotic notation. Let $(b_N)_{N\ge1}$ be a deterministic
positive sequence and let $(R_N)_{N\ge1}$ be a sequence of random scalars or
matrices. Fix a norm $\|\cdot\|$ on the relevant space; for scalars this is the
absolute value. 
We write
$R_N=O(b_N)$ almost surely if, 
there exists
an event of 
probability one on which, for each realization, there exist a finite constant
$C< \infty$ and index $N_0 \in \mathbb{N}_+$ (both allowed to depend on the realization) such that
$\|R_N\|\le Cb_N$ for all $N\ge N_0$. We write $R_N=o(b_N)$ almost surely if
$\|R_N\|/b_N\to0$ almost surely. 
When the
notation is applied entrywise, it is understood to hold for each fixed entry separately.

\begin{lemma}[Polynomial rate for the averaged projectors]
\label{lem:local_projector_polynomial_rate}
Assume that, for all $e\in\cE$, $\EE[x_e]=0$, $\EE\|x_e\|_2^4<\infty$,
and $\lambda_k(\Sigma_e)>\lambda_{k+1}(\Sigma_e)$. Then, for all
$\alpha\in(0,1/2)$ and all $e\in\cE$,
\[
\opnorm{\widehat\Sigma_e-\Sigma_e}=o(n_e^{-\alpha})
\qquad\text{and}\qquad
\opnorm{\widehat\Pi_k^{(e)}-\Pi_k^{(e)}}=o(n_e^{-\alpha})
\]
almost surely. Consequently, with
$n:=\min_{e\in\cE}n_e$,
\[
\opnorm{\hatSigma-\barSigma}=o(n^{-\alpha})
\qquad\text{and}\qquad
\opnorm{\hatPi-\barPi}=o(n^{-\alpha})
\]
almost surely.
\end{lemma}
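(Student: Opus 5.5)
We argue in two stages: first the sample covariances, then transfer the rate to the projectors through \cref{lem:top_projector_perturbation}. Throughout, fix $\alpha\in(0,1/2)$ and a domain $e$, and write $N:=n_e$ to simplify notation.

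\textbf{Step 1 (covariance rate).} Write
\[
\widehat\Sigma_e=\tfrac{N}{N-1}\Bigl(\tfrac1N\sum_i x_{e,i}x_{e,i}^\top-\bar x_e\bar x_e^\top\Bigr).
\]
Since all norms on $\R^{p\times p}$ are equivalent, it suffices to control each entry separately. Each entry of $\tfrac1N\sum_i x_{e,i}x_{e,i}^\top-\Sigma_e$ is an average of i.i.d.\ centered variables $Z_i$ with finite variance, since $\EE\|x_e\|^4<\infty$.

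\begin{itemize}
\item \emph{Second-moment term.} We need the almost-sure rate $\tfrac1N\sum_{i\le N}Z_i=o(N^{-\alpha})$. We would invoke the Marcinkiewicz--Zygmund strong law with exponent $r=1/(1-\alpha)\in(1,2)$, using $\EE|Z|^r<\infty$: it gives $\sum_{i\le N} Z_i=o(N^{1/r})=o(N^{1-\alpha})$ almost surely. If a self-contained argument is preferred, the law of the iterated logarithm gives the stronger bound $O(\sqrt{\log\log N/N})$.
\item \emph{Mean term.} By the same argument with only second moments, $\bar x_e=o(N^{-\alpha})$, so $\bar x_e\bar x_e^\top=o(N^{-2\alpha})$.
\item \emph{Factor $\tfrac{N}{N-1}$.} This contributes only $O(N^{-1})$.
\end{itemize}

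Combining these gives $\opnorm{\widehat\Sigma_e-\Sigma_e}=o(N^{-\alpha})$ almost surely. The main point to get right is citing the right strong law: the usual SLLN gives only $o(1)$, and the polynomial rate genuinely needs the fourth moment of $x_e$, i.e.\ a second moment of $Z$.

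\textbf{Step 2 (projector rate).} Set $\gamma:=\lambda_k(\Sigma_e)-\lambda_{k+1}(\Sigma_e)>0$. On the probability-one event from Step 1, eventually $\opnorm{\widehat\Sigma_e-\Sigma_e}<\gamma/2$. There the eigengap of $\widehat\Sigma_e$ is positive, so $\widehat\Pi_k^{(e)}$ is uniquely defined, and \cref{lem:top_projector_perturbation} with $d=k$ gives
\[
\opnorm{\widehat\Pi_k^{(e)}-\Pi_k^{(e)}}\le 2\opnorm{\widehat\Sigma_e-\Sigma_e}/\gamma=o(N^{-\alpha}).
\]

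\textbf{Step 3 (averaging).} Intersect the finitely many probability-one events over $e\in\cE$. Since $n_e\ge n$, we have $n_e^{-\alpha}\le n^{-\alpha}$. Because the $o(n_e^{-\alpha})$ statement for a fixed $e$ is a statement along any sequence with $n_e\to\infty$, and $n\to\infty$ forces every $n_e\to\infty$, each domain's error is $o(n^{-\alpha})$. The triangle inequality then yields
\[
\opnorm{\hatSigma-\barSigma}\le\tfrac1E\sum_e\opnorm{\widehat\Sigma_e-\Sigma_e}=o(n^{-\alpha}),
\]
and likewise $\opnorm{\hatPi-\barPi}=o(n^{-\alpha})$.
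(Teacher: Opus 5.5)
Your proposal is correct and follows the paper's proof essentially step for step: an entrywise Marcinkiewicz--Zygmund strong law for $\widehat\Sigma_e$ (including the mean term and the $N/(N-1)$ correction), then \cref{lem:top_projector_perturbation} for the projectors, then the triangle inequality with $n_e\ge n$ for the averages. The only cosmetic difference is that you set the exponent to $r=1/(1-\alpha)$ and use the vanishing form of the strong law directly, whereas the paper picks some $s\in(1,2)$ with $\alpha<1-1/s$ and uses only that the normalized partial sums are bounded.
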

\begin{proof}

\smallskip
\noindent
\textbf{Step 1: convergence of coordinates in each domain.}
Let $e\in\cE$ and $\alpha\in(0,1/2)$. In this proof only, let $N:=n_e$ and
$X_i:=x_{e,i}$. Choose $s\in(1,2)$ such that
$\alpha<1-1/s$.
For all $a,b\in\{1,\ldots,p\}$ and all $i\in\{1,\ldots,N\}$, let
$Y_i^{ab}:=(X_i)_a(X_i)_b$ (where $(X_i)_a$ denotes the $a$-th coordinate of $X_i$). Since $s<2$ and $\EE\|x_e\|_2^4<\infty$, we have
\[
\EE |Y_i^{ab}|^s
=\EE\!\left[|(X_i)_a|^s|(X_i)_b|^s\right]
\le
\EE\|X_i\|_2^{2s}
<\infty 
\]
where the final bound uses $2s<4$.
By the Marcinkiewicz--Zygmund strong law of large numbers
\citep[Theorem~8.4.4, statement~(ii)]{athreya_lahiri_2006},
\[
\frac{1}{N^{1/s}}\sum_{i=1}^{N}\left(Y_i^{ab}-\EE Y_i^{ab}\right)
\]
converges almost surely,
and is therefore bounded almost surely. Dividing by
$N^{1-1/s}$ gives
\[
\frac1N\sum_{i=1}^{N}Y_i^{ab}-\EE Y_i^{ab}
=
O\!\left(N^{-(1-1/s)}\right)
=
o(N^{-\alpha})
\qquad\text{almost surely,}
\]
where the last equality uses $\alpha<1-1/s$.
Applying the same strong law to the coordinate variables $(X_i)_a$ and using
$\EE(X_i)_a=0$, gives
$\overline{X}_a:=\frac{1}{N}\sum_{i=1}^N (X_i)_a=O(N^{-(1-1/s)})$ almost surely, and hence
$\overline{X}_a\overline{X}_b=O(N^{-2(1-1/s)})=o(N^{-\alpha})$ almost surely, since $2(1-1/s) > \alpha$.

\smallskip
\noindent
\textbf{Step 2: convergence of covariances in each domain.}
Let $A_N:=N^{-1}\sum_{i=1}^{N}X_iX_i^\top$. Since
$\widehat\Sigma_e
=
\frac{N}{N-1}\left(A_N-\overline{X}~\overline{X}^\top\right)$,
we have
\[
\widehat\Sigma_e-\Sigma_e
=
(A_N-\Sigma_e)-\overline{X}~\overline{X}^\top
+
\frac1{N-1}\left(A_N-\overline{X}~\overline{X}^\top\right).
\]
The first two terms are entrywise $o(N^{-\alpha})$ almost surely by the bounds of Step~1: for $A_N-\Sigma_e$ this uses $\EE[X_iX_i^\top]=\Sigma_e$, which holds
since $\EE[X_i]=0$, so that the entrywise bound on $A_N$ from Step~1 is a bound
on $A_N-\Sigma_e$. The last term is entrywise $O(N^{-1})=o(N^{-\alpha})$ almost surely,
because $A_N-\overline{X}~\overline{X}^\top=O(1)$ entrywise almost surely and
$\alpha<1$. Since $p$ is fixed, entrywise convergence at rate
$o(N^{-\alpha})$ implies operator-norm convergence at the same rate. Therefore
\begin{equation} \label{eq:conv_cov}
    \opnorm{\widehat\Sigma_e-\Sigma_e}=o(n_e^{-\alpha})
    \qquad\text{almost surely.}
\end{equation}

\smallskip
\noindent
\textbf{Step 3: convergence of projection matrices in each domain.}
Let $\gamma_k(\Sigma_e):=\lambda_k(\Sigma_e)-\lambda_{k+1}(\Sigma_e)>0$, by assumption. By~\eqref{eq:conv_cov}, for all
sufficiently large $n_e$,
$\opnorm{\widehat\Sigma_e-\Sigma_e}<\gamma_k(\Sigma_e)/2$ almost surely. Applying
\cref{lem:top_projector_perturbation} with $A=\Sigma_e$,
$\widehat A=\widehat\Sigma_e$, and $d=k$ gives
\[
\opnorm{\widehat\Pi_k^{(e)}-\Pi_k^{(e)}}
\le
\frac{2}{\gamma_k(\Sigma_e)}\opnorm{\widehat\Sigma_e-\Sigma_e}
=
o(n_e^{-\alpha})
\qquad\text{almost surely,}
\]
where the last equality follows from~\eqref{eq:conv_cov}.

\smallskip
\noindent
\textbf{Step 4: convergence of the average covariance and average projection matrix.}
Since $E$ is fixed and $n_e\ge n$ for all $e\in\cE$, averaging over
$e\in\cE$ yields
\[
\opnorm{\hatSigma-\barSigma}
\le
\frac1E\sum_{e=1}^E\opnorm{\widehat\Sigma_e-\Sigma_e}
=
o(n^{-\alpha})
\]
and
\[
\opnorm{\hatPi-\barPi}
\le
\frac1E\sum_{e=1}^E\opnorm{\widehat\Pi_k^{(e)}-\Pi_k^{(e)}}
=
o(n^{-\alpha})
\]
almost surely.
\end{proof}

\begin{lemma}[Recovery of the 
eigenvalue
blocks of $\barPi$] \label{lem:barPi_block_recovery}
Let $1\ge \rho_1>\cdots>\rho_L\ge0$ be the distinct eigenvalues of $\barPi$, and let $\Pi_\ell$ denote the orthogonal projector onto the eigenspace of 
$\rho_\ell$,
with multiplicity $d_\ell:=\operatorname{rank}(\Pi_\ell)$. Listing the
eigenvalues of $\barPi$ in nonincreasing order with multiplicity, $\rho_\ell$
occupies a block of $d_\ell$ consecutive indices which we denote $I_\ell$, so that
$I_1,\dots,I_L$ partition $\{1,\dots,p\}$.
Let $\gamma_{\barPi}$ be the smallest gap between the distinct population eigenvalues, 
\[
\gamma_{\barPi}
:=
\begin{cases}
\min\{\rho-\rho' : \rho,\rho'\in\{\rho_1,\ldots,\rho_L\},\ \rho>\rho'\},
& L\ge 2,\\[0.3em]
+\infty, & L=1.
\end{cases}
\]
Let $\widehat\rho_1\ge\cdots\ge\widehat\rho_p$ be the eigenvalues of $\hatPi$ (including duplicates). 
Suppose that $\| \hatPi-\barPi\|_\mathrm{op}<\mathrm{tol}_n/4$ and $\mathrm{tol}_n<\gamma_{\barPi}/2$. 
Then the block-stabilized construction of \AnchorPCAinfty{} in \cref{subsec:finite_sample_implementation} 
partitions the indices
$1,\dots,p$ into empirical blocks that coincide with $I_1,\dots,I_L$.
\end{lemma}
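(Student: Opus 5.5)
Write $\varepsilon:=\opnorm{\hatPi-\barPi}<\mathrm{tol}_n/4$. The plan is to compare each empirical eigenvalue with its population counterpart using Weyl's inequality, and then follow the greedy grouping loop of \cref{alg:anchorpca_infty} block by block. By Weyl's inequality \citep[Theorem~4.3.7]{horn2012matrix}, $|\widehat\rho_j-\lambda_j(\barPi)|\le\varepsilon$ for every $j\in\{1,\dots,p\}$. Hence, for $j\in I_\ell$ we have $\widehat\rho_j\in[\rho_\ell-\varepsilon,\rho_\ell+\varepsilon]$. In words, the empirical eigenvalues indexed by a population block cluster in an interval of width $2\varepsilon<\mathrm{tol}_n/2$.

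Next I collect two inequalities. If $i,j\in I_\ell$, then $|\widehat\rho_i-\widehat\rho_j|\le 2\varepsilon<\mathrm{tol}_n$. If instead $i\in I_\ell$ and $j\in I_{\ell'}$ with $\ell<\ell'$, then
\[
\widehat\rho_i-\widehat\rho_j\ge\rho_\ell-\rho_{\ell'}-2\varepsilon\ge\gamma_{\barPi}-2\varepsilon>2\,\mathrm{tol}_n-\mathrm{tol}_n/2>\mathrm{tol}_n,
\]
where the middle step uses $\gamma_{\barPi}>2\,\mathrm{tol}_n$. So the gap between any two indices in different population blocks strictly exceeds the tolerance, while the gap within a block stays below it.

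I then argue by induction over the outer loop of the algorithm. The claim is that at the start of iteration $\ell$ we have $a=\min I_\ell$. This holds for $\ell=1$ because $a=1=\min I_1$. Assume it holds for $\ell$. The inner loop extends $b$ from $a$ as long as $\widehat\rho_a-\widehat\rho_{b+1}\le\mathrm{tol}_n$. By the within-block inequality, this condition holds for every $b+1\in I_\ell$, so $b$ reaches $\max I_\ell$. If $\max I_\ell<p$, the next index $b+1=\min I_{\ell+1}$ lies in a different block, so the between-block inequality gives $\widehat\rho_a-\widehat\rho_{b+1}>\mathrm{tol}_n$ and the loop stops there. The empirical block is therefore exactly $I_\ell$, and the next iteration starts at $a=\min I_{\ell+1}$. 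When $L=1$ no cross-block comparison ever arises, and the loop simply stops at $b=p$.

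The main point needing care is that the algorithm compares against the \emph{first} eigenvalue of the current group rather than against consecutive eigenvalues. Both the within-block bound and the between-block bound are therefore stated relative to the anchor $\widehat\rho_a$, and the margin $\mathrm{tol}_n/4$ is what keeps the within-block spread $2\varepsilon$ strictly below $\mathrm{tol}_n$. Apart from this, the argument is routine once Weyl's inequality is in place.
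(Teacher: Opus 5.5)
Your proof is correct and follows essentially the same route as the paper: Weyl's inequality places each $\widehat\rho_j$ within $\varepsilon$ of its population eigenvalue. This gives a within-block spread of at most $2\varepsilon<\mathrm{tol}_n$ relative to the block's first eigenvalue, and a gap of at least $\gamma_{\barPi}-2\varepsilon>\mathrm{tol}_n$ at each block boundary. Your explicit induction over the outer loop simply formalizes the paper's closing ``scan from largest to smallest'' step.
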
 

\begin{proof} 
Let $c_\ell:=\sum_{j<\ell}d_j$, so that $I_\ell=\{c_\ell+1,\dots,c_\ell+d_\ell\}$,
and let $\varepsilon_n:=\|{\hatPi-\barPi}\|_\mathrm{op}$.
Recall that the construction in \cref{subsec:finite_sample_implementation} scans the ordered eigenvalues of $\hatPi$ from largest to smallest to group them into empirical 
blocks: each block starts at an eigenvalue and includes subsequent eigenvalues as long as they are at most $\mathrm{tol}_n$ smaller than the first eigenvalue of the block
Let $\varepsilon_n:=\|\hatPi-\barPi\|_\mathrm{op}$ and let $\ell\in\{1,\dots,L-1\}$. 

\smallskip
\noindent
\textbf{Step 1: no block $I_\ell$ is split.}
By Weyl's inequality \citep[Theorem~4.3.1]{horn2012matrix} in decreasing-order notation applied to the two symmetric matrices $\barPi$ and $\hatPi-\barPi$, we have that the eigenvalues of
$\hatPi$ and $\barPi$, matched by rank order and counted with multiplicity,
satisfy
$|\lambda_j(\hatPi)-\lambda_j(\barPi)|\le\|\hatPi-\barPi\|_\mathrm{op} =\varepsilon_n$ for all $j\in\{1,\dots,p\}$. 
Since $\lambda_j(\barPi)=\rho_\ell$ for all $j\in I_\ell$, it follows that $|\widehat\rho_j-\rho_\ell|\le\varepsilon_n$ for all $j\in I_\ell$.
Hence, for all 
$j,j'\in I_\ell$,
\[
|\widehat\rho_j-\widehat\rho_{j'}|\le |\widehat\rho_j-\rho_\ell|+|\widehat\rho_{j'}-\rho_\ell| \le2\varepsilon_n<\mathrm{tol}_n/2<\mathrm{tol}_n,
\]
where the third inequality follows by the assumption $\varepsilon_n = \| \hatPi-\barPi \|_\mathrm{op} < \mathrm{tol}_n / 4$.
In particular, every eigenvalue in $I_\ell$ lies within $\mathrm{tol}_n$ of the
first eigenvalue of the block, $\widehat\rho_{c_\ell+1}$, so the rule keeps all indices in $I_\ell$
in the same empirical block.

\smallskip
\noindent
\textbf{Step 2: adjacent blocks $I_\ell$ and $I_{\ell+1}$ are not merged.} 
Applying Weyl's inequality \citep[Theorem~4.3.1]{horn2012matrix} in decreasing-order notation to $\barPi$ and $\hatPi - \barPi$, for any index \(j\) we have \(\lambda_j(\overline{\Pi})+\lambda_p(\hatPi - \barPi)\le \lambda_j(\widehat{\Pi})\le \lambda_j(\overline{\Pi})+\lambda_1(\hatPi - \barPi)\), and hence \(\lambda_j(\overline{\Pi})-\varepsilon_n\le \lambda_j(\widehat{\Pi})\le \lambda_j(\overline{\Pi})+\varepsilon_n\). Taking \(j=c_\ell+1\) and using \(\lambda_{c_\ell+1}(\overline{\Pi})=\rho_\ell\) gives \(\widehat{\rho}_{c_\ell+1}=\lambda_{c_\ell+1}(\widehat{\Pi})\ge \rho_\ell-\varepsilon_n\). Similarly, taking \(j=c_\ell+d_\ell+1\) and using \(\lambda_{c_\ell+d_\ell+1}(\overline{\Pi})=\rho_{\ell+1}\) gives \(\widehat{\rho}_{c_\ell+d_\ell+1}=\lambda_{c_\ell+d_\ell+1}(\widehat{\Pi})\le \rho_{\ell+1}+\varepsilon_n\).
Therefore, 
\[ 
\widehat\rho_{c_\ell+1}-\widehat\rho_{c_\ell+d_\ell+1} \ge \rho_\ell-\rho_{\ell+1}-2\varepsilon_n \ge \gamma_{\barPi}-2\varepsilon_n > \gamma_{\barPi}-\frac{\mathrm{tol}_n}{2} > \mathrm{tol}_n, 
\] 
where the second last inequality uses that 
$\varepsilon_n<\mathrm{tol}_n/4$, and the final inequality uses $\mathrm{tol}_n<\gamma_{\barPi}/2$ (both hold by assumption). 
Thus the first eigenvalue past $I_\ell$ lies more than $\mathrm{tol}_n$ below
$\widehat\rho_{c_\ell+1}$, the first eigenvalue of the current block, so the rule
closes that block at index $c_\ell+d_\ell$ and starts a new one.

Going through the eigenvalues from largest to smallest, Steps 1 and 2 show that
each block starts at index $c_\ell+1$ and ends at index $c_\ell+d_\ell$: Step 1
keeps a block from ending too early, and Step 2 forces it to end at the right
place. The blocks the construction produces are therefore exactly
$I_1,\dots,I_L$.
\end{proof}

\subsection{Proof of \texorpdfstring{\cref{thm:invariant_subspace_limit}}{Theorem~\ref{thm:invariant_subspace_limit}}}

\cref{thm:invariant_subspace_limit}
is a projector-level statement. We prove the following
stronger version, which uses the ordered
convention of \cref{app:ordered_representatives} and additionally records what happens to the ordered
directions. 
Part~\textup{(i)} concerns \AnchorPCAinfty{}, and
Part~\textup{(ii)} concerns \AnchorPCA{} as $\lambda\to\infty$; both are stated first at the
projector level and then at the direction level under a simple-spectrum \cref{ass:simple_sstar_block}. Together, the two parts imply
\cref{thm:invariant_subspace_limit}. If $m=0$, the statements below are
vacuous, so we assume $m\ge 1$.

\begin{assumption}[Simple spectrum on the invariant block]
\label{ass:simple_sstar_block}
The restriction of $\barSigma_{\Sstar}$ to $\Sstar$ (i.e., $\barSigma_{\Sstar}:=\Pi_{\Sstar}\barSigma\Pi_{\Sstar}$)
has simple spectrum within $\Sstar$: there exist
$\mu_1>\cdots>\mu_m$ and an orthonormal basis
$s_1,\dots,s_m\in\Sstar$ such that
$\barSigma_{\Sstar}s_i=\mu_i s_i$ for all $i\in\{1,\dots,m\}$.
\end{assumption}

\begin{theorem}[
\cref{thm:invariant_subspace_limit}, extended version]
\label{thm:invariant_subspace_limit_extended}
Assume $1\le m:=\dim(\Sstar)\le k<p$, and let
$
\barSigma_{\Sstar}=\Pi_{\Sstar}\barSigma\Pi_{\Sstar}.
$ 
Let $\Pi_k^{\anchor,\infty}$ and $\Pianchor{k}$ be any rank-$k$ solution
projectors of \AnchorPCAinfty{} and \AnchorPCA{}, respectively, and let
\[
W_k^{\anchor,\infty}=(u_1,\dots,u_k),
\qquad
\Wanchor{k}=(w_1^\lambda,\dots,w_k^\lambda)
\]
be corresponding ordered bases
(\cref{app:ordered_representatives}).

\begin{enumerate}
\item[(i)] \emph{(\AnchorPCAinfty.)} 
The first $m$ vectors $u_1,\dots,u_m$ 
form an
ordered orthonormal basis of $\Sstar$ that diagonalizes $\barSigma_{\Sstar}$ within $\Sstar$;
in particular,
$
\operatorname{span}(u_1,\dots,u_m)=\Sstar
$ and $\Pi_k^{\anchor,\infty}\Pi_{\Sstar}=\Pi_{\Sstar}$. If additionally \cref{ass:simple_sstar_block} holds, then $u_i\in\{+s_i, -s_i\}$ for all $i\in\{1,\dots,m\}$.

\item[(ii)] \emph{(\AnchorPCA, $\lambda\to\infty$.)}
Let $\Pianchor{m}$ denote the projector onto
$\operatorname{span}(w_1^\lambda,\dots,w_m^\lambda)$, that is, the leading rank-$m$ spectral projector of $M_\lambda$. Then,
\[
\opnorm{\Pianchor{m}-\Pi_{\Sstar}}\to0
\qquad
\text{as }\lambda\to\infty.
\]
If additionally \cref{ass:simple_sstar_block} holds,
then, with a suitable choice of
signs, as $\lambda\to\infty$,
\[
\lambda_i(M_\lambda)-2E\lambda\to \mu_i,
\qquad
w_i^\lambda\to s_i
\quad
\text{for all } i\in\{1,\dots,m\}.
\]
\end{enumerate}
\end{theorem}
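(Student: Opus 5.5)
Part (i) follows from the Ky Fan description of \AnchorPCAinfty{} solutions in \cref{app:ordered_representatives}. Since $v^\top\barPi v\le 1$ for unit $v$, with equality exactly on $\Sstar$, the largest eigenvalue of $\barPi$ is $\rho_1=1$ and $\operatorname{Eig}_1=\Sstar$ with $d_1=m$ (when $m\ge1$). Two cases arise. If $m<k$, then $r\ge2$ and $\operatorname{Eig}_1=\Sstar$ is fully selected. If $m=k$, then $r=1$ and $q=k=d_1$, so $\mathcal U_1=\Sstar$ is forced. In either case the solution subspace contains $\Sstar$, which gives $\Pi_k^{\anchor,\infty}\Pi_{\Sstar}=\Pi_{\Sstar}$.

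By the Step~2 convention, the first $m$ columns are an ordered eigenbasis of $\barSigma$ restricted to $\operatorname{Eig}_1=\Sstar$. This restriction is the compression $\Pi_{\Sstar}\barSigma\Pi_{\Sstar}=\barSigma_{\Sstar}$, so $u_1,\dots,u_m$ diagonalize $\barSigma_{\Sstar}$ and span $\Sstar$. Under \cref{ass:simple_sstar_block}, each eigenspace of $\barSigma_{\Sstar}$ within $\Sstar$ is one-dimensional and the ordering is strict. Hence $u_i=\pm s_i$.

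For part (ii), divide by $2E\lambda$ and set $t:=1/(2E\lambda)$, so that $M_\lambda/(2E\lambda)=\barPi+t\barSigma$ is a perturbation of $\barPi$ of operator size $t\opnorm{\barSigma}\to0$. The matrix $\barPi$ has a gap $\gamma:=1-\rho_2>0$ between its $m$th and $(m+1)$st eigenvalues; when $m=p$ there is nothing to prove, and $m\le k<p$ anyway. Once $t\opnorm{\barSigma}<\gamma/2$, \cref{lem:top_projector_perturbation} gives three things. First, the top-$m$ eigengap of $M_\lambda$ is positive, so the span of $w_1^\lambda,\dots,w_m^\lambda$ is unique; this gives the threshold $\lambda_0$ in \cref{thm:invariant_subspace_limit}. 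Second,
\[
\opnorm{\Pianchor{m}-\Pi_{\Sstar}}\le \frac{2t\opnorm{\barSigma}}{\gamma}=O(1/\lambda)\to0 .
\]
Third, $\Pianchor{m}$ agrees with the leading $m$ eigenvectors of $M_\lambda$, since scaling does not change eigenvectors. Together with part (i), this proves \cref{thm:invariant_subspace_limit}.

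The direction-level claims in (ii) are the main obstacle, because they require second-order perturbation theory for the degenerate eigenvalue $1$ of $\barPi$. I plan to use a Schur-complement reduction.

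\textbf{Step 1 (block coordinates).} Work in coordinates $(S,Z)$ as in the proof of \cref{prop:exact_recovery}, but without assuming block-diagonality. Then
\[
M_\lambda=\begin{pmatrix}A+2E\lambda I_m & C\\ C^\top & B+2E\lambda R\end{pmatrix},
\]
with $A=S^\top\barSigma S$, $C=S^\top\barSigma Z$, and $\lambda_1(R)<1$.

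\textbf{Step 2 (reduced matrix).} For an eigenvalue $\nu=2E\lambda+\eta$ with $\eta$ bounded, the lower-right block $\nu I-B-2E\lambda R$ has smallest eigenvalue at least $2E\lambda(1-\lambda_1(R))-O(1)$. So it is invertible with inverse of norm $O(1/\lambda)$. The Schur complement then shows that $\eta$ is an eigenvalue of $A+O(1/\lambda)$.

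\textbf{Step 3 (eigenvalue limits).} By Weyl's inequality, the $m$ top eigenvalues satisfy $\lambda_i(M_\lambda)-2E\lambda\to\lambda_i(A)=\mu_i$. A counting argument, using the projector convergence above, ensures that these are exactly the top $m$ eigenvalues.

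\textbf{Step 4 (eigenvector limits).} The $S$-component of $w_i^\lambda$ is an eigenvector of the perturbed $m\times m$ matrix $A+O(1/\lambda)$, and the $Z$-component is $O(1/\lambda)$. Since the $\mu_i$ are simple, Davis--Kahan (\cref{thm:davis_kahan}) applied to $m\times m$ matrices with gap $\min_i(\mu_i-\mu_{i+1})>0$ gives convergence of the $S$-component to the coordinates of $s_i$ up to sign. Fixing signs then yields $w_i^\lambda\to s_i$.
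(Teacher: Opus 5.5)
Your part (i) and the subspace-level half of (ii) are essentially the paper's argument: Davis--Kahan comparing $\barPi$ with $\barPi+(2E\lambda)^{-1}\barSigma$, with gap $1-\lambda_1(R)$. Your block and Schur-complement setup is also the paper's Steps 3--4. The gap is in your Step 3, which is the one nontrivial step of the direction-level claim. Each $\eta_i(\lambda)=\lambda_i(M_\lambda)-2E\lambda$ is an eigenvalue of its \emph{own} reduced matrix $A+CL_i(\lambda)^{-1}C^\top$, because $L_i$ depends on $\eta_i$. Weyl therefore only tells you that each $\eta_i$ lies within $O(1/\lambda)$ of \emph{some} $\mu_{j(i)}$. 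It does not show that $i\mapsto j(i)$ is a bijection, let alone the identity; from what you have written, two top eigenvalues could both sit near $\mu_1$ and none near $\mu_m$. Your ``counting argument using the projector convergence'' only counts how many eigenvalues lie in the cluster $2E\lambda+O(1)$. That number is $m$, which Weyl already gives; it says nothing about how they are distributed among the $\mu_j$. Step 4 inherits the problem: to apply Davis--Kahan to the reduced $m\times m$ matrix, you must already know that $\eta_i$ is its $i$-th eigenvalue.

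The paper closes this with a compactness argument.
\begin{itemize}
\item Along subsequences, $\eta_i\to\eta_i^\star$ and $u_i/\|u_i\|_2\to\widetilde u_i=\pm e_{j_i}$.
\item Orthogonality, $0=\langle u_i,u_\ell\rangle+\langle v_i,v_\ell\rangle$ with $v_i\to0$, forces $\widetilde u_i\perp\widetilde u_\ell$, so $j$ is a permutation.
\item The ordering $\eta_1\ge\cdots\ge\eta_m$ together with $\mu_1>\cdots>\mu_m$ then forces $j_i=i$.
\end{itemize}
Alternatively, you can make your projector-rate idea work directly by compressing $M_\lambda$ onto its own top-$m$ eigenspace. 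Write $W=(w_1^\lambda,\dots,w_m^\lambda)=SU+ZV$ with $S=(s_1,\dots,s_m)$ and $\opnorm{V}=O(1/\lambda)$. Then $W^\top\barPi W=I_m-V^\top(I_{p-m}-R)V$ and $W^\top\barSigma W=U^\top AU+O(1/\lambda)$, hence $\operatorname{diag}(\eta_1,\dots,\eta_m)=U^\top AU+O(1/\lambda)$. Since $U^\top U=I_m-V^\top V$, the matrix $U$ is orthogonal up to $O(\lambda^{-2})$. Weyl and Davis--Kahan on this $m\times m$ identity then give $|\eta_i-\mu_i|=O(1/\lambda)$ and $w_i^\lambda=\pm s_i+O(1/\lambda)$ with the correct indices. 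This route also yields a quantitative rate, which the paper's subsequence argument does not provide.
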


\begin{proof}

\smallskip
\noindent
\textbf{Part~\textup{(i)} $\AnchorPCAinfty{}$.}
Since $\Sstar$ is the eigenspace of $\barPi$ corresponding to its largest
eigenvalue $1$, and $\dim(\Sstar)=m\le k$, the ordering convention of
\cref{app:ordered_representatives} draws the first $m$ directions
$u_1,\dots,u_m$ entirely from $\Sstar$. Hence
\[
\operatorname{span}(u_1,\dots,u_m)=\Sstar,
\qquad\text{so}\qquad
\Pi_k^{\anchor,\infty}\Pi_{\Sstar}=\Pi_{\Sstar}.
\]
Within this block, the convention orders directions by diagonalizing
$\barSigma$, equivalently $\barSigma_{\Sstar}$, within $\Sstar$. Thus $u_1,\dots,u_m$ is an
orthonormal eigenbasis of $\barSigma_{\Sstar}$ on $\Sstar$, ordered by
decreasing eigenvalue. Under \cref{ass:simple_sstar_block}, this eigenbasis
is unique up to sign, so $u_i\in\{+s_i,-s_i\}$ for each $i\in\{1,\dots,m\}$.

\smallskip
\noindent
\textbf{Part~\textup{(ii)} \AnchorPCA{}.}
We first show the claimed subspace convergence in steps 1--2 invoking the Davis--Kahan theorem (\cref{thm:davis_kahan}). The direction-level convergence under \cref{ass:simple_sstar_block} is proved in steps 3--6.

\smallskip
\noindent
\textbf{Step 1: Uniqueness of the top-$m$ eigenspace of $(2E\lambda)^{-1}M_\lambda$.}
Let $S\in\R^{p\times m}$ have orthonormal columns spanning $\Sstar$, and
complete it to an orthogonal matrix $(S,Z)$ with
$Z\in\R^{p\times(p-m)}$ spanning $\SstarPerp$. Since $\Sstar$ is the
eigenvalue-$1$ eigenspace of $\barPi$, we have
\[
(S,Z)^\top\barPi(S,Z)
=
\begin{pmatrix}
I_m & 0\\
0 & R
\end{pmatrix},
\qquad
R:=Z^\top\barPi Z.
\]

Let
$
\widetilde M_\lambda
:=
\frac{1}{2E\lambda}M_\lambda
=
\barPi+\frac{1}{2E\lambda}\barSigma.
$
By Weyl's inequality \citep[Theorem 4.3.1]{horn2012matrix} in decreasing-order notation applied to $(\barPi, (2E\lambda)^{-1}\barSigma)$, we have, for each index \(j\), \(\lambda_j(\overline{\Pi})-\|(2E\lambda)^{-1}\overline{\Sigma}\|_{\mathrm{op}}\le \lambda_j(\widetilde M_\lambda)\le \lambda_j(\overline{\Pi})+\|(2E\lambda)^{-1}\overline{\Sigma}\|_{\mathrm{op}}\). Since \(\|(2E\lambda)^{-1}\overline{\Sigma}\|_{\mathrm{op}}=(2E\lambda)^{-1}\|\overline{\Sigma}\|_{\mathrm{op}}\), applying this with \(j=m\) and using \(\lambda_m(\overline{\Pi})=1\) gives \(\lambda_m(\widetilde M_\lambda)\ge 1-(2E\lambda)^{-1}\|\overline{\Sigma}\|_{\mathrm{op}}\). Applying the same bound with \(j=m+1\) and using \(\lambda_{m+1}(\overline{\Pi})=\lambda_1(R)\) gives \(\lambda_{m+1}(\widetilde M_\lambda)\le \lambda_1(R)+(2E\lambda)^{-1}\|\overline{\Sigma}\|_{\mathrm{op}}\).
Hence, 
for all $\lambda > :=\frac{\opnorm{\barSigma}}{E(1-\lambda_1(R))} =: \lambda_0,$, we have 
$\lambda_m(\widetilde M_\lambda)>\lambda_{m+1}(\widetilde M_\lambda)$.
Thus, the leading rank-$m$ spectral projector of $M_\lambda$ is unique for all $\lambda > \lambda_0$.

\smallskip
\noindent
\textbf{Step 2: Apply Davis--Kahan to obtain subspace convergence.}
We have that
\begin{equation} \label{eq:delta_bound}
1-\lambda_{m+1}(\widetilde M_\lambda)
\ge
1-\lambda_1(R)-\frac{1}{2E\lambda}\opnorm{\barSigma}
>0,
\end{equation}
where the first inequality follows from step 1 and the second holds for all $\lambda > \lambda_0$ as $\lambda_1(R) < 1$ (shown in step 2 in the proof of~\cref{prop:exact_recovery}).
We apply \cref{thm:davis_kahan} to
$\Sigma:=\barPi$ and $\widehat\Sigma:=\widetilde M_\lambda$
with $r=1$ and $s=m$. Hence, $V = S$, $\widehat V = (w_1^\lambda,\ldots,w_m^\lambda) =: W_m^\lambda$. 
With these choices, the
separation parameter
$
\delta
=
\inf\{|\widehat\lambda-\lambda|:
\lambda\in[\lambda_m(\barPi)=1,\lambda_s(\barPi)=1],\
\widehat\lambda\in(-\infty,\lambda_{m+1}(\widetilde M_\lambda)]\}
=
|1-\lambda_{m+1}(\widetilde M_\lambda)|,$
so the theorem states that
\begin{align*}
    \opnorm{\sin\Theta(W_m^\lambda,S)}
&\le
\frac{\opnorm{\barPi - \widetilde M_\lambda}}{|1-\lambda_{m+1}(\widetilde M_\lambda)|}. 
\end{align*}
We apply~\eqref{eq:delta_bound} for all $\lambda > \lambda_0$ and then consider $\lambda \to \infty$, which gives
\begin{align*}
\opnorm{\sin\Theta(W_m^\lambda,S)}
&\le
\frac{(2E\lambda)^{-1}\opnorm{\barSigma}}
{1-\lambda_1(R)-(2E\lambda)^{-1}\opnorm{\barSigma}}
\to 0.
\end{align*}
Since $\Pianchor{m} = W_m^\lambda (W_m^{\lambda})^{\top}$ and $\Pi_{\Sstar} = SS^\top$, by~\eqref{eq:pi_distance_to_sin_distance}, as $\lambda \to\infty$,
\[
\opnorm{\Pianchor{m}-\Pi_{\Sstar}}\to0.
\]

\smallskip
\noindent
\textbf{Step 3: Write the eigenvalue equation in $\Sstar\oplus\SstarPerp$ coordinates.}
Assume now, in addition, that \cref{ass:simple_sstar_block} holds. 
For the remainder of the proof, set
$S:=(s_1,\dots,s_m)$ and complete it to an orthogonal matrix $(S,Z)$.
In the basis $(S,Z)$,
\[
(S,Z)^\top M_\lambda(S,Z)
=
\begin{pmatrix}
2E\lambda I_m+A & C\\
C^\top & 2E\lambda R+B
\end{pmatrix},
\]
where 
\[
A:=S^\top\barSigma_{\Sstar}S
  =S^\top\barSigma S
  =\operatorname{diag}(\mu_1,\dots,\mu_m),
\qquad
B:=Z^\top\barSigma Z,
\qquad
C:=S^\top\barSigma Z.
\]
The equality $S^\top\barSigma_{\Sstar}S=S^\top\barSigma S$ follows from
$\Pi_{\Sstar}S=S$.

Let $i\in\{1,\dots,m\}$. Recall that $M_\lambda w_i^\lambda = \lambda_i(M_\lambda)w_i^\lambda$, i.e. $w_i^\lambda$ is the $i$th leading eigenvector of $M_\lambda$. 
Since the columns of $(S,Z)$ form an orthonormal basis of $\R^p$, the vector
$w_i^\lambda$ has unique coordinates $u_i^\lambda\in\R^m$ and
$v_i^\lambda\in\R^{p-m}$ in this basis, so that
\[
w_i^\lambda=Su_i^\lambda+Zv_i^\lambda.
\]
We also let $\eta_i(\lambda):=\lambda_i(M_\lambda)-2E\lambda$.
The eigenvalue equation
$M_\lambda w_i^{\lambda}
=\lambda_i(M_\lambda)w_i^{\lambda}$ then is equivalent to
\begin{align}
(A-\eta_i(\lambda)I_m)u_i^\lambda+Cv_i^\lambda&=0, \label{eq:anchor_block_1}\\
C^\top u_i^\lambda
-
\bigl(\underbrace{2E\lambda(I_{p-m}-R)
+\eta_i(\lambda)I_{p-m}-B}_{=:L_i(\lambda)}\bigr)v_i^\lambda
&=0.
\label{eq:anchor_block_2}
\end{align}

\smallskip
\noindent
\textbf{Step 4: 
Reduce to an eigenvalue equation in $\Sstar$.}
Rearranging~\eqref{eq:anchor_block_2} gives
\[
L_i(\lambda)v_i^\lambda=C^\top u_i^\lambda.
\]

Because $R\preceq \lambda_1(R) I_{p-m}$ with $\lambda_1(R)<1$, we have
$I_{p-m}-R\succeq (1-\lambda_1(R))I_{p-m}$. 
By Weyl's inequality \cite[Theorem 4.3.1]{horn2012matrix}, used in decreasing-order notation and applied to $(2E\lambda\overline{\Pi},~\overline{\Sigma})$, for each \(i\le m\) we have \(\lambda_i(2E\lambda\overline{\Pi})+\lambda_p(\overline{\Sigma})\le \lambda_i(M_\lambda)\le \lambda_i(2E\lambda\overline{\Pi})+\lambda_1(\overline{\Sigma})\). Since \(S_\star\) is the eigenvalue-\(1\) eigenspace of \(\overline{\Pi}\), \(\lambda_i(2E\lambda\overline{\Pi})=2E\lambda\) for \(i\le m\). Hence \(\lambda_p(\overline{\Sigma})\le \lambda_i(M_\lambda)-2E\lambda\le \lambda_1(\overline{\Sigma})\), and therefore 
\begin{equation} \label{eq:eta_i_bound}
|\eta_i(\lambda)|\le \|\overline{\Sigma}\|_{\mathrm{op}}.
\end{equation}
Combining this with $\opnorm{B}\le \opnorm{\barSigma}$, we have
\[
L_i(\lambda)
=
2E\lambda(I_{p-m}-R)+\eta_i(\lambda)I_{p-m}-B
\succeq
\bigl(2E\lambda(1-\lambda_1(R))-2\opnorm{\barSigma}\bigr)I_{p-m}.
\]
Thus, for all $\lambda > \lambda_0$, the matrix $L_i(\lambda)$ is
positive definite and therefore invertible. In addition, for all $\lambda > \lambda_0$,
\begin{equation}\label{eq:Li_lambda_inverse_bound}
    \opnorm{L_i(\lambda)^{-1}}
\le
\frac{1}{2E\lambda(1-\lambda_1(R))-2\opnorm{\barSigma}}
.
\end{equation}
Since $L_i(\lambda)v_i^\lambda=C^\top u_i^\lambda$ and
$\|u_i^\lambda\|_2\le 1$, it follows that
\[
\|v_i^\lambda\|_2
\le
\opnorm{L_i(\lambda)^{-1}}\opnorm{C}\|u_i^\lambda\|_2
\le \opnorm{L_i(\lambda)^{-1}}\opnorm{C}
.
\]
Hence, $\|v_i^\lambda\|_2\to 0$ as $\lambda\to\infty$ by~\eqref{eq:Li_lambda_inverse_bound}.
Because $w_i^{\lambda}=Su_i^\lambda+Zv_i^\lambda$ has unit norm and
$(S,Z)$ is orthogonal, we have
$\|u_i^\lambda\|_2^2+\|v_i^\lambda\|_2^2=1$. Therefore
\begin{equation}\label{eq:ui_limit}
    \|u_i^\lambda\|_2\to1
\end{equation}
as $\lambda\to\infty$.

Substituting
$
v_i^\lambda=L_i(\lambda)^{-1}C^\top u_i^\lambda
$
into~\eqref{eq:anchor_block_1} gives
$
(A-\eta_i(\lambda)I_m)u_i^\lambda
+
CL_i(\lambda)^{-1}C^\top u_i^\lambda
=
0,
$
or equivalently
\begin{equation} \label{eq:reduced_eigenvalue_eq}
(A+CL_i(\lambda)^{-1}C^\top)u_i^\lambda=\eta_i(\lambda)u_i^\lambda.
\end{equation}

\smallskip
\noindent
\textbf{Step 5: Identify 
the limiting eigenvalues and directions along subsequences.}
Fix any sequence $\lambda_n\to\infty$. For all $i\in\{1,\dots,m\}$
, the sequence
$\eta_i(\lambda_n)$ is bounded (see~\eqref{eq:eta_i_bound}). By~\eqref{eq:ui_limit}, after discarding
finitely many terms we have $\|u_i^{\lambda_n}\|_2>0$ for all
$i\in\{1,\dots,m\}$. For such $n$, define
$
\widehat u_i^{\lambda_n}
:=
\frac{u_i^{\lambda_n}}{\|u_i^{\lambda_n}\|_2}
$
, which belongs to the compact unit sphere of $\R^m$. 
By the Bolzano--Weierstrass theorem, each of these sequences has a convergent
subsequence. Passing to a common subsequence (which we continue to index by
$n$) along which all $2m$ sequences converge, it holds that for all
$i\in\{1,\dots,m\}$, there exist $\eta_i^\star\in\R$ and a unit vector
$\widetilde u_i\in\R^m$ such that
\[
\eta_i(\lambda_n)\to\eta_i^\star,
\qquad
\widehat u_i^{\lambda_n}\to\widetilde u_i.
\]
Dividing the reduced eigenvalue equation~\eqref{eq:reduced_eigenvalue_eq} by $\|u_i^{\lambda_n}\|_2$ gives
\[
\bigl(A+CL_i(\lambda_n)^{-1}C^\top\bigr)\widehat u_i^{\lambda_n}
=\eta_i(\lambda_n)\widehat u_i^{\lambda_n}.
\]
Passing to the limit $\lambda_n\to\infty$ along the chosen subsequence, the term $CL_i(\lambda_n)^{-1}C^\top$
vanishes since $\opnorm{CL_i(\lambda)^{-1}C^\top}
\le
\opnorm{C}^2\opnorm{L_i(\lambda)^{-1}} \to 0$ by~\eqref{eq:Li_lambda_inverse_bound}, and
we obtain
\[
A\widetilde u_i=\eta_i^\star\widetilde u_i.
\]
This implies that $\widetilde u_i$ is a unit eigenvector of $A$ with eigenvalue $\eta_i^\star$. Since $A=\operatorname{diag}(\mu_1,\dots,\mu_m)$ with
$\mu_1>\cdots>\mu_m$, there exists a unique index
$j_i\in\{1,\dots,m\}$ such that
\[
\eta_i^\star=\mu_{j_i},
\qquad
\widetilde u_i=\pm e_{j_i}.
\]
It remains to show that $j_i=i$, which we do using orthogonality and
eigenvalue ordering. Let $i,\ell\in\{1,\dots,m\}$ with $i\neq\ell$.
Orthogonality of
$w_i^{\lambda_n}$ and $w_\ell^{\lambda_n}$ 
implies
\[
0
=\langle w_i^{\lambda_n},w_\ell^{\lambda_n}\rangle
=
\langle u_i^{\lambda_n},u_\ell^{\lambda_n}\rangle
+
\langle v_i^{\lambda_n},v_\ell^{\lambda_n}\rangle .
\]
Since $v_i^{\lambda_n},v_\ell^{\lambda_n}\to0$ and
$\|u_i^{\lambda_n}\|_2,\|u_\ell^{\lambda_n}\|_2\to1$, passing to the limit yields
$\langle\widetilde u_i,\widetilde u_\ell\rangle=0$. Hence
$j_1,\dots,j_m$ is a permutation of $\{1,\dots,m\}$.
On the other hand, by the ordering convention,
\[
\lambda_1(M_{\lambda_n})\ge\cdots\ge\lambda_m(M_{\lambda_n})
\quad\Longrightarrow\quad
\eta_1(\lambda_n)\ge\cdots\ge\eta_m(\lambda_n).
\]
Passing to the limit gives
$\mu_{j_1}\ge\cdots\ge\mu_{j_m}$. Because
$\mu_1>\cdots>\mu_m$ are strictly decreasing, the only possible permutation $(j_1,\dots,j_m)$ of $\{1,\dots,m\}$ for which $\mu_{j_1}\ge\cdots\ge\mu_{j_m}$
holds is the identity.
Therefore, for all $i\in\{1,\dots,m\}$
\[
\eta_i^\star=\mu_i,
\qquad
\widetilde u_i=\pm e_i.
\]
\smallskip
\noindent
\textbf{Step 6: Full convergence of eigenvalues and eigenvectors.}
Since the original sequence $\lambda_n\to\infty$ was arbitrary, every
convergent subsequence of $\eta_i(\lambda)$ 
has limit $\mu_i$.
A bounded
sequence whose convergent subsequences all share the same limit converges to
that limit, so for all $i\in\{1,\dots,m\}$
\[
\lambda_i(M_\lambda)-2E\lambda
=
\eta_i(\lambda)
\to
\mu_i
\qquad
\text{as}
\qquad
\lambda\to\infty.
\]
Similarly, every subsequential limit of $\widehat u_i^{\lambda}$ equals
$\pm e_i$. We can choose the sign of $w_i^\lambda$ so that
$\langle w_i^\lambda,s_i\rangle\ge 0$ rules out the $-e_i$ limit and
$\widehat u_i^{\lambda}\to e_i$. 
As $\|u_i^\lambda\|_2\to 1$ by~\eqref{eq:ui_limit}, this gives $u_i^\lambda\to e_i$. Since $v_i^\lambda\to 0$,
\[
w_i^{\lambda}
=
Su_i^\lambda+Zv_i^\lambda
\to
Se_i
=
s_i
\qquad
\text{as}
\qquad
\lambda\to\infty,
\]
which is the asserted direction-level convergence.
\end{proof}

\subsection{Proof of \texorpdfstring{\cref{thm:minimax_reconstruction}}{Theorem~\ref{thm:minimax_reconstruction}}}

We first show~\eqref{eq:worst_case_reconstruction_error_decomposition} and then use it to prove statements~(i) and~(ii).
Let $W\in\Ok$ and 
$\Pi_W := WW^\top$. By definition of $\mathcal C_\lambda$,
for every perturbed covariance $\Sigma'_e$, there exists $\Delta_e \in \Spp$ with
$0\preceq\Delta_e\preceq 2E\lambda\,\Pi_k^{(e)}$ such that 
$\Sigma'_e=\Sigma_e+\Delta_e$. 
Hence,
\[
\max_{(\Sigma'_1,\dots,\Sigma'_E)\in\mathcal C_\lambda}
\mathcal R(W;\Sigma'_1,\dots,\Sigma'_E)
=
\frac1E\sum_{e=1}^E
\left[
\Tr\bigl(\Sigma_e(I_p-\Pi_W)\bigr)
+
\max_{\substack{\Delta_e \in \Spp, \\ 0\preceq\Delta_e\preceq 2E\lambda\,\Pi_k^{(e)}}}
\Tr\bigl(\Delta_e(I_p-\Pi_W)\bigr)
\right].
\]
Applying the trace-monotonicity inequality $\Tr(AC)\le\Tr(BC)$ (valid for $A\preceq B$ and $C\succeq 0$; see, e.g., \citet[Example 2.24]{boyd2004convex}) with $A=\Delta_e$, $B=2E\lambda\,\Pi_k^{(e)}$, and $C=I_p-\Pi_W$, we obtain
\[
\Tr\bigl(\Delta_e(I_p-\Pi_W)\bigr)
\le
\Tr\bigl(2E\lambda\,\Pi_k^{(e)}(I_p-\Pi_W)\bigr),
\]
with equality
for $\Delta_e^\star=2E\lambda\,\Pi_k^{(e)}$. Therefore,
\begin{align*}
    \max_{(\Sigma'_1,\dots,\Sigma'_E)\in\mathcal C_\lambda}
\mathcal R(W;\Sigma'_1,\dots,\Sigma'_E)
&=
\frac1E\sum_{e=1}^E \Tr\bigl(\Sigma_e(I_p-\Pi_W)\bigr)
+
2\lambda\sum_{e=1}^E \Tr\bigl(\Pi_k^{(e)}(I_p-\Pi_W)\bigr) \\
&=\frac1E\sum_{e=1}^E \Tr\bigl(\Sigma_e(I_p-\Pi_W)\bigr)
+
\lambda\sum_{e=1}^E \|\Pi_W-\Pi_k^{(e)}\|_{\mathrm F}^2,
\end{align*}
where the second line follows
since $\Pi_W$ and $\Pi_k^{(e)}$ are rank-$k$ orthogonal projectors, so
\[
2\Tr\bigl(\Pi_k^{(e)}(I_p-\Pi_W)\bigr)
=2k-2\Tr(\Pi_W\Pi_k^{(e)})
=\Tr\bigl((\Pi_W-\Pi_k^{(e)})^2\bigr)
=\|\Pi_W-\Pi_k^{(e)}\|_{\mathrm F}^2.
\]
This proves~\eqref{eq:worst_case_reconstruction_error_decomposition}.

\emph{Part (i).}  A direct computation gives
$
\frac1E\sum_{e=1}^E \Tr\bigl(\Sigma_e(I_p-\Pi_W)\bigr)
=
\Tr(\barSigma)-\Tr(W^\top\barSigma W)
$
so minimizing the worst-case reconstruction error is equivalent to maximizing
\[
\Tr(W^\top\barSigma W)
-
\lambda\sum_{e=1}^E \|\Pi_W-\Pi_k^{(e)}\|_{\mathrm F}^2,
\]
which is Definition~\ref{def:anchorpca}.

\emph{Part (ii).} Using the fact that $\|\Pi_W-\Pi_k^{(e)}\|_{\mathrm F}^2 = 2k - 2\Tr(W^\top \Pi_k^{(e)} W)$ for all $e\in\cE$ and summing over $e$,
\[
\sum_{e=1}^E \|\Pi_W-\Pi_k^{(e)}\|_{\mathrm F}^2
=
2Ek-2E\Tr(W^\top\barPi W).
\]
Hence
\[
\mathcal B
=
\argmin_{\widetilde W\in\Ok}\sum_{e=1}^E
\|\Pi_{\widetilde W}-\Pi_k^{(e)}\|_{\mathrm F}^2
=
\argmax_{\widetilde W\in\Ok}\Tr(\widetilde W^\top\barPi \widetilde W).
\]
Combining this with $\mathcal R(W;\Sigma_1,\dots,\Sigma_E) = \Tr(\barSigma)-\Tr(W^\top\barSigma W)$, minimizing $\mathcal R$ over $\mathcal B$ amounts to maximizing $\Tr(W^\top\barSigma W)$ over the maximizers of $\Tr(W^\top\barPi W)$, which is Definition~\ref{def:anchorpca_infty}.

\subsection{Consistency of Anchor PCA \label{app:consistency}}

We now state the consistency results of the empirical \AnchorPCA{} estimator and of the
block-stabilized empirical \AnchorPCAinfty{} construction (\cref{subsec:finite_sample_implementation}), already stated in
\cref{subsec:consistency} informally. 
\Cref{cor:consistency_empirical_sstar_block} then shows that, when
$\Sstar\ne\{0\}$,
the first block returned by \AnchorPCAinfty{}
consistently estimates $\Sstar$.
Throughout, for all \(e\in\cE\), let  
\(x_{e,1},x_{e,2},\dots,x_{e,n_e}\) be i.i.d.\ copies of \(x_e\), and assume
$n := \min_{e\in \Ecal} n_e \to \infty$.

\begin{theorem}[Consistency of the empirical projector-agreement estimators]
\label{thm:consistency_empirical_methods}
Assume that, for all \(e\in\cE\),
$\EE{[x_e]} = 0$, 
$\EE{\|x_e\|_2^4}<\infty$,
 and $\lambda_k(\Sigma_e)>\lambda_{k+1}(\Sigma_e)$. Then the following statements hold.
\begin{enumerate}[leftmargin=1.75em]
\item[(i)]
Let \(\lambda>0\), and let \(\widehat{\Pi}^{\anchor,\lambda}_k\) be an
empirical rank-\(k\) \AnchorPCA{} solution projector.
If
$\lambda_k(M_\lambda)>\lambda_{k+1}(M_\lambda)$
then
\[
\opnorm{\widehat{\Pi}^{\anchor,\lambda}_k-\Pi^{\anchor,\lambda}_k}\to 0
\qquad
\text{almost surely as } n\to\infty.
\]
\item[(ii)]
Let $\widehat\Pi_k^{\anchor,\infty}$ be the block-stabilized
\AnchorPCAinfty{} estimator with tolerance $\mathrm{tol}_n>0$. 
Assume that there exists $\alpha \in (0, 1/2)$ and $c>0$ such that for all $n$, $\mathrm{tol}_n = cn^{-\alpha}$ 
and the population
rank-$k$ \AnchorPCAinfty{} solution projector is unique,
then
\[
\opnorm{\widehat\Pi_k^{\anchor,\infty}-\Pi_k^{\anchor,\infty}}\to0
\qquad\text{almost surely as } n\to\infty.
\]
\end{enumerate}
\end{theorem}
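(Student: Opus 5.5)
Both parts rest on \cref{lem:local_projector_polynomial_rate}: under the fourth-moment and eigengap assumptions, for every $\alpha'\in(0,1/2)$ we have $\opnorm{\hatSigma-\barSigma}=o(n^{-\alpha'})$ and $\opnorm{\hatPi-\barPi}=o(n^{-\alpha'})$ almost surely. We fix one probability-one event on which these rates hold and argue deterministically on it.

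\emph{Part (i).} Since $\widehat M_\lambda-M_\lambda=(\hatSigma-\barSigma)+2E\lambda(\hatPi-\barPi)$, we get $\opnorm{\widehat M_\lambda-M_\lambda}\to0$. Let $\gamma:=\lambda_k(M_\lambda)-\lambda_{k+1}(M_\lambda)>0$. For all $n$ large enough, $\opnorm{\widehat M_\lambda-M_\lambda}<\gamma/2$. By Weyl's inequality, $\widehat M_\lambda$ then has a strict gap at rank $k$, so its top-$k$ projector is unique and equals $\widehat\Pi^{\anchor,\lambda}_k$. \Cref{lem:top_projector_perturbation} with $d=k$ gives
\[
\opnorm{\widehat\Pi^{\anchor,\lambda}_k-\Pi^{\anchor,\lambda}_k}\le \frac{2\opnorm{\widehat M_\lambda-M_\lambda}}{\gamma}\to0 .
\]

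\emph{Part (ii).} Choose $\alpha'\in(\alpha,1/2)$. Then $\opnorm{\hatPi-\barPi}=o(n^{-\alpha'})=o(\mathrm{tol}_n)$, so eventually $\opnorm{\hatPi-\barPi}<\mathrm{tol}_n/4$. Since $\mathrm{tol}_n\to0$, eventually also $\mathrm{tol}_n<\gamma_{\barPi}/2$. \Cref{lem:barPi_block_recovery} then shows that the empirical blocks have exactly the index sets $I_1,\dots,I_L$. Consequently the cutoff index $r$, the dimensions $d_\ell$, and $q$ computed by \cref{alg:anchorpca_infty} agree with their population counterparts from \cref{app:anchor_pca_infty_basis_convention}.

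For each $\ell$, the population block $I_\ell$ is separated from the remaining spectrum of $\barPi$ by gap at least $\gamma_{\barPi}$. Applying \cref{thm:davis_kahan} with $r,s$ equal to the endpoints of $I_\ell$ (the separation $\delta$ is at least $\gamma_{\barPi}-\opnorm{\hatPi-\barPi}$ by Weyl), together with \cref{prop:projector_distance_principal_angles}, gives $\opnorm{\Pi_{\widehat{\operatorname{Eig}}_\ell}-\Pi_{\operatorname{Eig}_\ell}}\to0$. Hence, for $\ell<r$, the fully selected blocks converge. The estimator equals
\[
\widehat\Pi_k^{\anchor,\infty}=\sum_{\ell<r}\Pi_{\widehat{\operatorname{Eig}}_\ell}+\widehat P_r ,
\]
where $\widehat P_r$ is the top-$q$ eigenprojector of $\widehat T:=\Pi_{\widehat{\operatorname{Eig}}_r}\hatSigma\Pi_{\widehat{\operatorname{Eig}}_r}$. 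It therefore suffices to show $\widehat P_r\to\Pi_{\mathcal U_r}$, the top-$q$ eigenprojector of $T:=\Pi_{\operatorname{Eig}_r}\barSigma\Pi_{\operatorname{Eig}_r}$.

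This last convergence is the main obstacle: we need both an eigengap for $T$ and control of how $\widehat{\operatorname{Eig}}_r$ moves. We handle both as follows.
\begin{itemize}
\item \textbf{Eigengap.} Uniqueness of the population \AnchorPCAinfty{} projector means $\mathcal U_r$ is uniquely determined, i.e.\ $\lambda_q(T|_{\operatorname{Eig}_r})>\lambda_{q+1}(T|_{\operatorname{Eig}_r})$ when $q<d_r$. If $q=d_r$ the claim is immediate, since then $\widehat P_r=\Pi_{\widehat{\operatorname{Eig}}_r}$.
\item \textbf{Moving subspace.} Viewed on all of $\R^p$, $T$ has $p-d_r$ extra zero eigenvalues that could interfere with the top-$q$ spectrum. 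To avoid this, shift: work with $T+c\,\Pi_{\operatorname{Eig}_r}$ and $\widehat T+c\,\Pi_{\widehat{\operatorname{Eig}}_r}$ for $c>\opnorm{\barSigma}+1$. This does not change the top-$q$ eigenprojectors (these lie inside the respective block, and the shift lifts the whole block above the zeros). The rank-$q$ gap of the shifted population matrix equals the within-block gap above.
\item \textbf{Conclusion.} The shifted matrices converge in operator norm, because $\hatSigma\to\barSigma$ and $\Pi_{\widehat{\operatorname{Eig}}_r}\to\Pi_{\operatorname{Eig}_r}$. \Cref{lem:top_projector_perturbation} with $d=q$ then gives $\widehat P_r\to\Pi_{\mathcal U_r}$. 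Summing the convergent pieces yields $\widehat\Pi_k^{\anchor,\infty}\to\Pi_k^{\anchor,\infty}$ almost surely.
\end{itemize}
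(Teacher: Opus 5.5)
Your proposal is correct and follows the paper's proof essentially step for step. Both use the polynomial-rate lemma (\cref{lem:local_projector_polynomial_rate}), then block recovery via \cref{lem:barPi_block_recovery}, then Davis--Kahan on each $\barPi$-block, and finally \cref{lem:top_projector_perturbation} inside block $r$ with the eigengap forced by uniqueness.

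The differences are minor. The paper uses the same $\alpha$ as in $\mathrm{tol}_n$, which already gives $o(\mathrm{tol}_n)$, so your $\alpha'>\alpha$ is not needed. The paper also applies the perturbation lemma directly to $\Pi_r\barSigma\Pi_r$ and $\widehat\Pi_r\hatSigma\widehat\Pi_r$ rather than to your shifted matrices. That is legitimate because $\barSigma,\hatSigma\succeq 0$ and the within-block gap forces $\lambda_q>0$, so the extra zero eigenvalues cannot interfere. Your shift simply makes this point explicit.
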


\begin{remark} 
The uniqueness assumption in part~(ii) concerns the population projector, i.e.\ the optimal subspace, not any particular basis. For example, it holds automatically if there is a $\barPi$-eigenvalue block boundary at rank-$k$.
Otherwise, it requires an eigengap condition~\eqref{eq:uniqueness_gap_not_at_block_boundary}.
\end{remark}

To prove \cref{thm:consistency_empirical_methods}, we use three auxiliary lemmas collected in \cref{app:additional_intermediate_results}:~\cref{lem:top_projector_perturbation}  transfers operator-norm perturbation bounds to leading eigenspace projectors,~\cref{lem:local_projector_polynomial_rate} gives polynomial almost-sure rates for \(\hatSigma\) and \(\hatPi\), and~\cref{lem:barPi_block_recovery} guarantees recovery of the population spectral blocks of \(\barPi\) by the empirical block clustering.

\begin{proof}[Proof of \cref{thm:consistency_empirical_methods}]

Choose any $\alpha_0\in(0,1/2)$. By
\cref{lem:local_projector_polynomial_rate},
\begin{equation}
\label{eq:empirical_average_convergence}
\opnorm{\hatSigma-\barSigma}\to0
\qquad\text{and}\qquad
\opnorm{\hatPi-\barPi}\to0
\qquad\text{almost surely.}
\end{equation}

\smallskip
\noindent
\textbf{Statement~(i): \AnchorPCA{}.} By \eqref{eq:empirical_average_convergence}, there is an event
$\Omega_0$ with $\PP(\Omega_0) = 1$ on which
$\opnorm{\widehat M_\lambda-M_\lambda}
\le
\opnorm{\hatSigma-\barSigma}
+
2E\lambda\,\opnorm{\hatPi-\barPi}
\to 0$
almost surely, where
$\widehat M_\lambda=\hatSigma+2E\lambda\,\hatPi,
M_\lambda=\barSigma+2E\lambda\,\barPi$.
Let 
$\gamma_k(M_\lambda):=\lambda_k(M_\lambda)-\lambda_{k+1}(M_\lambda)
$;
by assumption, $\gamma_k(M_\lambda) > 0$.
Fix a realization in $\Omega_0$.
Then $\opnorm{\widehat M_\lambda - M_\lambda} < \gamma_k(M_\lambda)/2$ for all $n \ge N$
(with $N$ depending on the realization), so
\cref{lem:top_projector_perturbation} (with $A = M_\lambda$,
$\widehat A = \widehat M_\lambda$, $d = k$) gives
\[
\opnorm{\widehat{\Pi}^{\anchor,\lambda}_k - \Pi^{\anchor,\lambda}_k}
\le \frac{2}{\gamma_k(M_\lambda)}\,\opnorm{\widehat M_\lambda - M_\lambda} \to 0.
\]
As this holds on $\Omega_0$, the convergence is almost sure.

\medskip
\noindent
\textbf{Statement~(ii): \AnchorPCAinfty{}.}
Recall that the block-stabilized construction of the solution to $\AnchorPCAinfty$ 
(\cref{subsec:finite_sample_implementation} ) scans the ordered eigenvalues of $\hatPi$ from largest to smallest to group them into empirical 
blocks: each empirical block starts at an eigenvalue and includes subsequent eigenvalues as long as they are at most $\mathrm{tol}_n$ smaller than the first eigenvalue of the block. 

Let $1\ge \rho_1>\cdots>\rho_L\ge0$ be the distinct eigenvalues of $\barPi$, and let $\Pi_\ell$ denote the orthogonal projector onto the eigenspace corresponding to $\rho_\ell$.
Let $d_\ell:=\operatorname{rank}(\Pi_\ell)$ and $c_\ell:=\sum_{j<\ell}d_j$.
Listing the
eigenvalues of $\barPi$ in nonincreasing order with multiplicity, $\rho_\ell$
occupies a block of $d_\ell$ consecutive indices $I_\ell:=\{c_\ell+1,\dots,c_\ell+d_\ell\}$, so that
$I_1,\dots,I_L$ partition $\{1,\dots,p\}$.
Let $\varepsilon_n:=\|\hatPi-\barPi\|_\mathrm{op}$ and $\xi_n:=\|{\hatSigma-\barSigma}\|_\mathrm{op}$.

\smallskip
\noindent
\textbf{Step 1: Match the indices of the empirical and population
blocks.}
By \cref{lem:local_projector_polynomial_rate}, with the same $\alpha$ as in $\mathrm{tol}_n=cn^{-\alpha}$, we have $\varepsilon_n=o(n^{-\alpha})$ and $\xi_n=o(n^{-\alpha})$ almost surely.
Since $\mathrm{tol}_n=cn^{-\alpha}$ with $c>0$, this implies $\varepsilon_n=o(\mathrm{tol}_n)$ almost surely.
We now work on an event $\Omega_1$ with $\PP(\Omega_1)=1$ on which these convergences hold.
Fix a realization in $\Omega_1$; every convergence in the remainder of the proof
is along this realization, and each ``for all sufficiently large $n$'' refers to
a realization-dependent threshold.
Then, for all sufficiently large  
$n$, 
\begin{equation} \label{eq:tol_n_bound}
    \varepsilon_n<\mathrm{tol}_n/4.
\end{equation}
Moreover, since $\mathrm{tol}_n$ decreases monotonically to $0$ deterministically,
for all sufficiently large $n$ we also have 
\begin{equation} \label{eq:gamma_bound}
    \mathrm{tol}_n<\gamma_{\barPi}/2, 
\end{equation}
where $\gamma_{\barPi}$ is the smallest gap between distinct eigenvalues of $\barPi$ as defined in \cref{lem:barPi_block_recovery}.
Hence \cref{lem:barPi_block_recovery} applies, and the block-stabilized construction in \cref{subsec:finite_sample_implementation} 
partitions the indices $1,\dots,p$ into empirical blocks that coincide with
$I_1,\dots,I_L$.
For each $\ell\in\{1,\dots,L\}$, let $\widehat\Pi_\ell$ denote the orthogonal projector onto the span of the eigenvectors of $\hatPi$
with indices in $I_\ell$,
i.e. onto $\widehat{\operatorname{Eig}}_\ell$ in the notation of~\cref{subsec:finite_sample_implementation};
$\widehat{\operatorname{Eig}}_\ell$ estimates 
$\Imop(\Pi_\ell)$,
and it has dimension $d_\ell$.

\smallskip
\noindent
\textbf{Step 2: Show consistency of each block projector.}
We show
$
\opnorm{\widehat\Pi_\ell-\Pi_\ell}\to0
$
for all $\ell\in\{1,\dots,L\}$.
If $L=1$, then $\barPi$ has a single eigenvalue, so $\Pi_1=I_p$.
For all sufficiently large $n$, the empirical block construction also returns a single block, so $\widehat\Pi_1=I_p=\Pi_1$, which proves the claim.
Assume now that $L\ge2$, and let $\ell\in\{1,\dots,L\}$.
Apply \cref{thm:davis_kahan} with $\Sigma=\barPi$, $\widehat\Sigma=\hatPi$, $r=c_\ell+1$, and $s=c_\ell+d_\ell$:
then $V$ spans $\Imop(\Pi_\ell)$, $\widehat V$ spans $\Imop(\widehat\Pi_\ell)$, and 
$[\lambda_s(\barPi),\lambda_r(\barPi)]=
\{\rho_\ell\}$.
It remains to lower bound the separation
$\delta$ between $\rho_\ell$ and the empirical eigenvalues outside $I_\ell$ (defined in \cref{thm:davis_kahan}).
By
Weyl's inequality \citep[Theorem~4.3.1]{horn2012matrix} in decreasing-order notation applied to $\barPi$ and $\hatPi-\barPi$ at indices $c_\ell$ and $c_\ell + d_\ell + 1$,
\[
\widehat\rho_{c_\ell}\ge\rho_{\ell-1}-\varepsilon_n
\qquad\text{and}\qquad
\widehat\rho_{c_\ell+d_\ell+1}\le\rho_{\ell+1}+\varepsilon_n,
\]
where for $\ell=1$ there is no eigenvalue above the block and for $\ell=L$ none
below, so only the relevant one-sided bound is used. In either case, the empirical
eigenvalues outside $I_\ell$ lie at distance at least $\gamma_{\barPi}-\varepsilon_n$
from $\rho_\ell$, so $\delta\ge\gamma_{\barPi}-\varepsilon_n > 0$, where the last inequality follows from $\varepsilon_n < \operatorname{tol}_n/2 < \gamma_{\barPi}/4$ using~\eqref{eq:tol_n_bound} and~\eqref{eq:gamma_bound}. Thus, we get
\[
\opnorm{\widehat\Pi_\ell-\Pi_\ell}
=\opnorm{\sin\Theta(\widehat V,V)}
\le\frac{\varepsilon_n}{\gamma_{\barPi}-\varepsilon_n}\to0,
\]
where we additionally used~\cref{prop:projector_distance_principal_angles} for the first equality as $\widehat\Pi_\ell$ and $\Pi_\ell$ have equal rank $d_\ell$ (Step 1).

\smallskip
\noindent
\textbf{Step 3: Assemble the rank-$k$ projector.}
Let $r\in\{1,\dots,L\}$ be the unique index such that $c_r<k\le c_r+d_r$, and let $q:=k-c_r$, so, in the population case, blocks $1,\dots,r-1$ are taken in full
and $q$ directions are taken from block $r$.

\emph{Case 1: $q=d_r$.} 
The first $r$
blocks are fully taken for the population solution.
By Steps 1a and 1b in~\cref{app:anchor_pca_infty_basis_convention},
the population \AnchorPCAinfty{} projector is $\Pi^{\anchor,\infty}_k=\sum_{\ell=1}^{r}\Pi_\ell$.
For all sufficiently large $n$, the empirical construction selects the first $r$ empirical blocks, so $\widehat\Pi^{\anchor,\infty}_k=\sum_{\ell=1}^{r}\widehat\Pi_\ell$.
By Step~2,
\[
\opnorm{\widehat\Pi^{\anchor,\infty}_k-\Pi^{\anchor,\infty}_k}
\le
\sum_{\ell=1}^{r}\opnorm{\widehat\Pi_\ell-\Pi_\ell}
\to0.
\]

\emph{Case 2: $q<d_r$.}
Blocks $1,\dots,r-1$ are taken in full,
and the secondary variance criterion selects $q$ directions from block $r$.
Let $B_r:=\Pi_r\barSigma\Pi_r$ and $\widehat B_r:=\widehat\Pi_r\hatSigma\widehat\Pi_r$,
and let $Q_r$ and $\widehat Q_r$ denote the orthogonal projectors onto the spans of the top $q$ eigenvectors of $B_r$ and $\widehat B_r$, respectively.
We now show that $\widehat B_r\to B_r$ in operator norm.
Note that
\[
\widehat B_r-B_r
=
\widehat\Pi_r(\hatSigma-\barSigma)\widehat\Pi_r
+
(\widehat\Pi_r-\Pi_r)\barSigma\widehat\Pi_r
+
\Pi_r\barSigma(\widehat\Pi_r-\Pi_r),
\]
together with $\opnorm{\widehat\Pi_r}=\opnorm{\Pi_r}=1$ and~\eqref{eq:empirical_average_convergence}
gives
\begin{equation} \label{eq:B_r_convergence}
    \|{\widehat B_r-B_r}\|_\mathrm{op}\le \xi_n+2\|{\barSigma}\|_\mathrm{op}\|{\widehat\Pi_r-\Pi_r}\|_\mathrm{op}\to0.
\end{equation}
By Steps 1a and 1b in \cref{app:anchor_pca_infty_basis_convention}, the population \AnchorPCAinfty{} projector is $\Pi^{\anchor,\infty}_k=\sum_{\ell=1}^{r-1}\Pi_\ell + Q_r$, where the first sum is a
fixed projector independent of the selection within block $r$. 
Hence
$Q_r=\Pi^{\anchor,\infty}_k-\sum_{\ell=1}^{r-1}\Pi_\ell$ is determined by
$\Pi^{\anchor,\infty}_k$, so the uniqueness assumption forces the top-$q$
eigenspace of $B_r$ to be unique; equivalently, $B_r$ has a positive eigengap at rank $q$:
\begin{equation}
\label{eq:uniqueness_gap_not_at_block_boundary}
\eta_r:=\lambda_q(B_r)-\lambda_{q+1}(B_r)>0.
\end{equation}
Thus, for all sufficiently large $n$, $\|{\widehat B_r-B_r}\|_\mathrm{op}<\eta_r/2$.
Applying \cref{lem:top_projector_perturbation} with $A=B_r$, $\widehat A=\widehat B_r$, and $d=q$, and then using~\eqref{eq:B_r_convergence} gives
\[
\opnorm{\widehat Q_r-Q_r}
\le
\frac{2\|{\widehat B_r-B_r}\|_\mathrm{op}}{\eta_r}
\to0.
\]
For all sufficiently large $n$, the empirical and population \AnchorPCAinfty{} projectors can be written as
\[
\widehat\Pi^{\anchor,\infty}_k
=
\sum_{\ell=1}^{r-1}\widehat\Pi_\ell+\widehat Q_r,
\qquad
\Pi^{\anchor,\infty}_k
=
\sum_{\ell=1}^{r-1}\Pi_\ell+Q_r.
\]
Hence
\[
\opnorm{\widehat\Pi^{\anchor,\infty}_k-\Pi^{\anchor,\infty}_k}
\le
\sum_{\ell=1}^{r-1}\opnorm{\widehat\Pi_\ell-\Pi_\ell}
+
\opnorm{\widehat Q_r-Q_r}
\to0.
\]
All displayed convergences hold for the fixed realization in $\Omega_1$. Since
$\PP(\Omega_1)=1$, it follows that
$\opnorm{\widehat\Pi^{\anchor,\infty}_k-\Pi^{\anchor,\infty}_k}\to0$ almost surely.
\end{proof}

\begin{corollary}[Consistency of the empirical invariant block]
\label{cor:consistency_empirical_sstar_block}
Assume the conditions of \cref{thm:consistency_empirical_methods}~\emph{(ii)}
and suppose that $\Sstar\ne\{0\}$. Let $\widehat\Pi_1$ denote the first empirical block returned by \AnchorPCAinfty{}, i.e., the projector onto
$\widehat{\operatorname{Eig}}_1$ 
Then
\[
\opnorm{\widehat\Pi_1-\Pi_{\Sstar}}\to0
\qquad\text{almost surely.}
\]
Equivalently, $\widehat{\operatorname{Eig}}_1$ consistently estimates
$\Sstar$.
\end{corollary}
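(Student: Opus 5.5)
The corollary follows by reusing the first steps of the proof of \cref{thm:consistency_empirical_methods}~(ii) with $\ell=1$. We do not need the assembly of the rank-$k$ projector (Step~3). The population side is immediate. $\barPi$ has eigenvalues in $[0,1]$, and its eigenvalue-$1$ eigenspace is $\Sstar$ (\cref{sec:definitions}). Since $\Sstar\neq\{0\}$, the largest distinct eigenvalue is $\rho_1=1$ with $\operatorname{Eig}_1=\Sstar$ and $d_1=m\ge1$. In the notation of \cref{lem:barPi_block_recovery}, this gives $\Pi_1=\Pi_{\Sstar}$ and $I_1=\{1,\dots,m\}$.

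For the empirical side, we work on the probability-one event $\Omega_1$ from the proof of \cref{thm:consistency_empirical_methods}. On $\Omega_1$, \cref{lem:local_projector_polynomial_rate} gives $\varepsilon_n:=\opnorm{\hatPi-\barPi}=o(n^{-\alpha})=o(\mathrm{tol}_n)$. Hence, for all sufficiently large $n$, both $\varepsilon_n<\mathrm{tol}_n/4$ and $\mathrm{tol}_n<\gamma_{\barPi}/2$ hold. Then \cref{lem:barPi_block_recovery} applies, and the first empirical block produced by \cref{alg:anchorpca_infty} consists exactly of the indices $I_1$. Therefore $\widehat{\operatorname{Eig}}_1$ is the span of the top-$m$ eigenvectors of $\hatPi$, and $\widehat\Pi_1$ has rank $m$. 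If $L=1$, then $\Sstar=\R^p$, which is impossible because $k<p$. So $L\ge2$ and $\gamma_{\barPi}<\infty$.

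Next we apply \cref{thm:davis_kahan} with $\Sigma=\barPi$, $\widehat\Sigma=\hatPi$, $r=1$ and $s=m$, or equivalently \cref{lem:top_projector_perturbation} with $d=m$ and gap $\gamma_m(\barPi)=1-\rho_2\ge\gamma_{\barPi}$. Combined with \cref{prop:projector_distance_principal_angles}, this gives
\[
\opnorm{\widehat\Pi_1-\Pi_{\Sstar}}\le \frac{2\varepsilon_n}{\gamma_{\barPi}}\to0
\]
on $\Omega_1$. Since $\PP(\Omega_1)=1$, the convergence is almost sure.

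The only step that needs care is that $\widehat{\operatorname{Eig}}_1$ has exactly the correct dimension $m$ for large $n$. This is the block-matching statement of \cref{lem:barPi_block_recovery}, and it depends on $\mathrm{tol}_n$ dominating $\varepsilon_n$ while still falling below $\gamma_{\barPi}/2$. The assumption $\mathrm{tol}_n=cn^{-\alpha}$ with $\alpha<1/2$, together with the $o(n^{-\alpha})$ rate, gives both. Note that the uniqueness assumption from part~(ii) is not needed here.
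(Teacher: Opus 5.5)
Your proposal is correct and follows the same route as the paper. The paper's proof notes $\Pi_1=\Pi_{\Sstar}$ and then invokes Step~2 of the proof of \cref{thm:consistency_empirical_methods}~(ii) with $\ell=1$ (block matching via \cref{lem:barPi_block_recovery}, then Davis--Kahan). You re-derive that step directly via \cref{lem:top_projector_perturbation}, obtaining $2\varepsilon_n/\gamma_{\barPi}$ in place of the paper's $\varepsilon_n/(\gamma_{\barPi}-\varepsilon_n)$, which is an inessential difference; your remarks that $k<p$ forces $L\ge2$ and that the uniqueness assumption is not needed are also correct.
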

\begin{proof}
The eigenvalue-one block of $\barPi$ is $\Sstar$, so $\Pi_1=\Pi_{\Sstar}$. Step~2
in the proof of \cref{thm:consistency_empirical_methods}~\emph{(ii)} gives
$\opnorm{\widehat\Pi_\ell-\Pi_\ell}\to0$ almost surely for every $\ell$; the case
$\ell=1$ is the claim.
\end{proof}

\section{Details on experiments} \label{app:details_experiments}

This appendix provides additional details for the motivating example in \cref{subsec:motivating_example_4d}, the synthetic $\Sstar$ recovery simulations in \cref{subsec:exp_sstar_recovery}, a finite-penalty threshold simulation illustrating \cref{prop:exact_recovery}, and the gas-sensor drift application in \cref{subsec:exp_gas_sensor}. Code to reproduce all simulations and figures is publicly available
at \url{https://github.com/benesei/anchor-pca}.

\subsection{Details of the motivating example}
\label{app:motivating_example_4d_details}

This appendix records the explicit construction used in the motivating example (\cref{subsec:motivating_example_4d}) which is also used for the perturbation-path plot \cref{fig:motivating_example_4d_perturbation_path}.

\paragraph{Construction.}
Recalling \cref{subsec:motivating_example_4d}, let $c_1,\ldots,c_4$ be the canonical basis vectors of $\R^4$, and set
$
a:=c_1,
b:=c_2,
u:=c_3.
$
We further define two directions in the $c_3$--$c_4$ plane,
\[
v:=\cos(50^\circ)c_3+\sin(50^\circ)c_4,
\qquad
w:=\cos(100^\circ)c_3+\sin(100^\circ)c_4,
\]
and let $u^\perp,v^\perp,w^\perp$  denote the unit directions in the $c_3$--$c_4$ plane that are orthogonal to $u,v,w$, respectively. The population covariance matrices of the three domains are then defined as
\begin{align*}
    \Sigma_1
    &=
    220\,uu^\top + 140\,aa^\top + 90\,bb^\top + 25\,u^\perp u^{\perp\top},\\
    \Sigma_2
    &=
    120\,aa^\top + 90\,vv^\top + 70\,bb^\top + 10\,v^\perp v^{\perp\top},\\
    \Sigma_3
    &=
    320\,ww^\top + 120\,bb^\top + 80\,aa^\top + 10\,w^\perp w^{\perp\top};
\end{align*}
these are shown in Table~\ref{tab:motivating_example_4d}.

\paragraph{Recovered subspaces.}
The solution of
\poolPCA{} spans $\operatorname{span}(a,c_3,c_4)$ and therefore drops $b$.
Both \AnchorPCA{25} and \AnchorPCAinfty{} recover $\operatorname{span}(a,b)$.
Their difference lies only in the third
direction:
\AnchorPCAinfty{} selects 
$v$,
and
\AnchorPCA{25} selects $q =0.246 c_3 + 0.969 c_4$, a direction capturing slightly more variance on average than $v$ (126.47 vs. 111.22), but is less invariant ($q^\top \barPi q = 0.57 < 0.61 = v^\top \barPi v$).

\subsection{Details on the perturbation-path illustration}
\label{subsec:perturbation_path_illustration}

To visualize \cref{thm:minimax_reconstruction}, we reuse the population motivating example and plot, for several methods (fit on the population source covariances), the worst-case average reconstruction error over $\mathcal{C}_\lambda$ as the perturbation strength $\rho:=2E\lambda$ increases. By \cref{eq:worst_case_reconstruction_error_decomposition}, for any rank-$k$ projector $\Pi$, this worst-case error is
\[
\frac1E\sum_{e=1}^E \Tr\bigl((I_p-\Pi)\Sigma_e\bigr)
+
\frac{\rho}{2E}\sum_{e=1}^E \fnorm{\Pi-\Pi_k^{(e)}}^2,
\]
so each method traces a line in $\rho$: the intercept is its average reconstruction error on the source covariances, and the slope is its disagreement with the local top-$k$ projectors. The oracle minimizer at perturbation strength $\rho$ is $\AnchorPCA{\rho/(2E)}$. We plot these lines for \poolPCA{}, \AnchorPCA{25}, \AnchorPCAinfty{}, and the wcPCA baselines \texttt{maxRCS}, \texttt{maxRegret}, and \texttt{norm-maxRegret} from~\citep{fries2026worstcaselowrankapproximations}.

\cref{fig:motivating_example_4d_perturbation_path} is 
an
unsupervised
analog of the perturbation-path plot in Anchor
Regression~\citep{rothenhausler2020anchor}: \poolPCA{} is optimal only near
$\rho=0$, a fixed finite-penalty \AnchorPCA{} is optimal over a broad middle
range of perturbations centered around its penalty, and \AnchorPCAinfty{} is the right
choice once robustness to top-$k$ variance inflation matters more than reconstruction accuracy. The figure additionally shows that, in this example, the wcPCA baselines \texttt{norm-maxRegret} and \texttt{maxRCS} perform best only
on short transition windows near the origin, though they target a different form of distribution shift: shifts within the convex hull of the source covariances.

\subsection{Details on the simulations 
}
\label{app:random_subspace_sstar_recovery}

We now describe the details of the experiment 
described in \cref{subsec:exp_sstar_recovery}.

\subsubsection{Data-generating process} \label{sec:dgp}

\paragraph{Notation.}
We write $E=|\Ecal|$ for the number of domains, $p$ for the number of covariates. Each domain $e\in\cE$ is associated with a covariance $\Sigma_e$, whose construction we detail in the following paragraphs. Let $k$ 
be the target rank for the low-dimensional representation, and $m=\dim(\Sstar)$ be the dimension
of the invariant subspace $\Sstar$. Let
\[
    q:=p-k,
    \qquad
    d:=p-m.
\]
So, $q$ is the dimension of
the complement of each top-$k$ subspace,
and $d$ is the dimension of $\Sstar^\perp$.

Throughout this section, a \emph{configuration} corresponds to one choice of
$(E,p,k,m)$; for a given
configuration and eigenvalue regime (see~\cref{para:eigenval_regimes} below), a \emph{distribution draw} is a
multi-domain distributional instance, namely the full collection of covariances $(\Sigma_e)_{e\in\cE}$ with its own $\Sstar$ (if we talk about multiple distribution draws, we assume they are chosen i.i.d.); a \emph{sample} is one finite dataset drawn from a fixed distribution draw; and individual variables $X_{e,i}$ are called observations or data points.

\paragraph{Generating population covariance matrices.}
Each population covariance is constructed so that the 
top-$k$ eigenspaces of the $E$ covariances
have a nontrivial maximal intersection 
of dimension $m$. We build this in three steps.
\begin{enumerate}
    \item \emph{Invariant subspace.} We draw a Haar orthogonal matrix $U=(u_1,\ldots,u_p)\in\R^{p\times p}$ and split its columns into
    \[
        S:=(u_1,\ldots,u_m),
        \qquad
        R:=(u_{m+1},\ldots,u_p),
    \]
    so that $S$ is an orthonormal basis of $\Sstar:=\Imop(S)$ and $R$ is an orthonormal basis of $\Sstar^\perp$. The remaining construction works in $R$-coordinates inside $\Sstar^\perp$.
    \item \emph{Bottom subspace per domain.}
    For each domain $e\in\cE$, we draw a bottom subspace of dimension $q$ inside $\Sstar^\perp$: sample a $d\times q$ matrix with i.i.d. standard Gaussian entries, take the $Q$ factor of its reduced QR decomposition to obtain an orthonormal matrix $H_e\in\R^{d\times q}$, and set $B_e:=R H_e$. If the resulting collection $(B_1,\ldots,B_E)$ does not jointly span $\Sstar^\perp$, we discard this collection and redraw all $E$ bottom subspaces until
    \[
        \dim \operatorname{span}(B_1,\ldots,B_E)=p-m,
    \]
    which is what makes the intersection of the top-$k$ subspaces equal to $\Sstar$ exactly, rather than something larger.
    \item\emph{Top subspace per domain.}
    The remaining $k-m$ top directions in domain $e\in\cE$ lie in the complement of $B_e$ inside $\Sstar^\perp$. Let $G_e\in\R^{d\times(k-m)}$ be an orthonormal basis of $\ker(H_e^\top)$ and set $C_e:=R G_e$. The top-$k$ eigenspace for all $e\in\cE$ is then
    \[
        \mathcal U_e
        :=
        \Imop(S,C_e)
        =
        \Sstar \oplus \Imop(C_e).
    \]
    By construction, $\bigcap_{e\in\cE} \mathcal U_{e} = \bigcap_{e\in\cE} \Imop(B_e)^\perp = \big(\bigcup_{e\in\cE} \Imop(B_e)\big)^\perp = (\Sstar^\perp)^\perp = \Sstar$.
\end{enumerate}
For all $e\in\cE$, given the orthonormal basis $(S,C_e,B_e)$, we define the covariance matrix as
\[
    \Sigma_e
    :=
    S\operatorname{diag}(\theta_{e,\star})S^\top
    +
    C_e\operatorname{diag}(\theta_{e,\mathrm{env}})C_e^\top
    +
    B_e\operatorname{diag}(\eta_e)B_e^\top,
\]
where $\theta_{e,\star}\in\R^m$ are the eigenvalues on the invariant directions, $\theta_{e,\mathrm{env}}\in\R^{k-m}$ on the domain-specific top directions, and $\eta_e\in\R^q$ on the bottom directions. The eigenvalue regimes below ensure a local $k$ versus $k+1$ eigengap in every domain.

\paragraph{Easy and hard eigenvalue regimes.} \label{para:eigenval_regimes}
We use the same subspace construction with two different choices of eigenvalues. In the \emph{easy} regime, the invariant and domain-specific top eigenvalues are well separated from the bottom eigenvalues:
\[
    \theta_{e,\star,j}\sim U[5,8],
    \qquad
    \theta_{e,\mathrm{env},j}\sim U[5,8],
    \qquad
    \eta_{e,j}\sim U[0.5,3].
\]
In the \emph{hard} regime, the invariant directions still sit in each top-$k$ subspace but carry less variance than the domain-specific top directions:
\[
    \theta_{e,\star,j}\sim U[3.2,4.2],
    \qquad
    \theta_{e,\mathrm{env},j}\sim U[6,9],
    \qquad
    \eta_{e,j}\sim U[0.5,2.5].
\]
All distribution draws are generated with master seed $42$. 
For each
configuration $(p,k,m,E)$ and each eigenvalue regime, we generate $100$ independent
distribution draws; $\Sstar$ is redrawn for every replicate.

\paragraph{Gaussian sampling.}
Given a distribution draw and sample size $N$, we sample $N$ data points independently,
\[
    X_{e,i}\sim \mathcal N(0,\Sigma_e),
    \qquad
    i\in\{1,\ldots,N\}, \qquad e\in\cE.
\]
For each distribution draw and sample size $N$, we generate $20$ independent samples, each consisting of $N$ observations per domain. 
The standard sample-size grid is
\[
    N\in\{50,100,200,500,1000,2000,5000\}.
\]
For the small-$E$ 
configuration described below, we extend the grid to include $\{10000,\allowbreak 30000,\allowbreak 100000\}$.

\paragraph{Non-Gaussian covariance-preserving sampling.} \label{sec:gaussian_mixture_def}
As a robustness check, we also use
a symmetric two-component Gaussian mixture with the same mean and covariance as
the Gaussian model. For each domain, let $r_e$ be the leading eigenvector
of $\Sigma_e$ and set $\mu_e=0.75\sqrt{\lambda_{\max}(\Sigma_e)}\,r_e$; if
needed, this vector is shrunk to ensure that $\Sigma_e-\mu_e\mu_e^\top$ is
positive semidefinite. We then sample for $i\in \{1,\ldots, N\}, e\in\cE,$
\[
    X_{e,i}=Z_{e,i}+\xi_{e,i}\mu_e,
    \qquad
    Z_{e,i}\sim\mathcal N(0,\Sigma_e-\mu_e\mu_e^\top),
    \qquad
    \mathbb P(\xi_{e,i}=1)=\mathbb P(\xi_{e,i}=-1)=1/2.
\]
Thus
\[
    \mathbb E X_{e,i}=0,
    \qquad
    \operatorname{Cov}(X_{e,i})=\Sigma_e.
\]
The mixture preserves the population eigenspaces but is not Gaussian.

\subsubsection{Estimators}
In the experiments, we consider \AnchorPCAinfty{} and $\texttt{FindS}_\star$.
\AnchorPCAinfty{} is fit with the default block tolerance of the package\footnote{\url{https://github.com/benesei/anchor-pca}}.
$\mathrm{tol}_N:=\min\{0.05,\;0.5\,n_{\min}^{-0.4}\}.$
In these experiments, each domain has the same sample size, so
$n_{\min}=N$. The first empirical block contains all eigenvalues
$\widehat\rho_j$ satisfying
$\widehat\rho_1-\widehat\rho_j\le \mathrm{tol}_N$,
where $\widehat\rho_i$ is the $i$th largest eigenvalue of $\hatPi$.
Thus the oracle block corresponding to $\Sstar$ is separated whenever
$\widehat\rho_1-\widehat\rho_m
    \le \mathrm{tol}_N
    <
    \widehat\rho_1-\widehat\rho_{m+1}$.
$\texttt{FindS}_\star$ is used with significance level $\alpha=0.05$.

\subsubsection{Recovery configurations} \label{sec:configurations}

We consider three configurations $(E,p,k,m)$.
\begin{enumerate}
    \item \emph{Main random-subspace configuration:} $(E,p,k,m)=(5,10,5,2)$, shown in the main text.
    \item \emph{Small-$m$ configuration:} $(E,p,k,m)=(5,8,5,1)$ shown in \cref{fig:infty_sstar_recovery_g5_p8_k5_m1}. 
    \item \emph{Small-$E$ configuration:} $(E,p,k,m)=(2,8,5,2)$ shown in \cref{fig:infty_sstar_recovery_g2_p8_k5_m2}. 
    This configuration uses the
    minimal feasible invariant dimension. Indeed, since each domain has a
    $q=p-k=3$ dimensional bottom space and the bottom spaces must span
    $\Sstar^\perp$, feasibility requires
    \[
        p-m \le E q,
        \qquad\text{equivalently}\qquad
        m\ge p-E(p-k)=2 .
    \]
\end{enumerate}

\subsubsection{Recovering the invariant subspace \texorpdfstring{$\Sstar$}{S*}} \label{app:sstar_recovery_exp}
We now repeat the experiments of \cref{fig:infty_sstar_recovery_main}
for non-Gaussian data, small-$m$, and small-$E$ configurations.

\paragraph{Performance quantities and aggregation.}
For each distribution draw and sample size $N$, we compute two quantities over the 20 independent samples: the dimension-recovery indicator, equal to 1 if the estimated first-block dimension equals the true $m$, and the operator-norm projector error for the estimated invariant subspace. Within each distribution draw, we average the 20 dimension-recovery indicators and take the median of the 20 projector errors. The displayed dimension-recovery probability is the mean of these distribution-draw-level averages across distribution draws, while the displayed subspace error is the median of the distribution-draw-level median projector errors. Shaded bands, when shown, indicate the $10\%\text{--}90\%$ quantiles across distribution draws.

\paragraph{Agreement-separation gap of the configurations.}
For a distribution draw, 
we define the agreement-separation gap as
\[
    \gamma_\Pi
    :=
    \rho_m(\barPi)-\rho_{m+1}(\barPi)
    =
    \rho_1(\barPi)-\rho_{m+1}(\barPi) =
    1-\rho_{m+1}(\barPi),
\]
where $\rho_i(\barPi)$ denotes the $i$th largest eigenvalue of $\barPi$ and the second and third equality use $\rho_1(\barPi)=\cdots=\rho_m(\barPi)=1$ on $\Sstar$. The gap $\gamma_\Pi$ measures how cleanly $\Sstar$ separates from the rest of the agreement spectrum and reflects one aspect of difficulty of the configuration. 
Pooling the
$100$ easy and $100$ hard distribution draws for each configuration, the median
$\gamma_\Pi$ values, with $10$--$90\%$ ranges, are
\[
\begin{array}{c|c}
\text{configuration} & \text{median }\gamma_\Pi\;(10\%\text{--}90\%)\\
\hline
\text{main random-subspace} & 0.277\;(0.212\text{--}0.343)\\
\text{small-}m & 0.119\;(0.072\text{--}0.184)\\
\text{small-}E & 9.8\cdot10^{-3}\;(1.8\cdot10^{-4}\text{--}5.5\cdot10^{-2})
\end{array}
\]
The gap decreases across the three configurations, illustrating increased hardness. For the small-$E$ configuration, the agreement gap is often very small, so this configuration is best
viewed as a stress test rather than a typical regime.

At the empirical level, we analogously track
$\widehat\gamma_{\Pi}:=\rho_1(\hatPi)-\rho_{m+1}(\hatPi)$. This is the
quantity relevant for the block-stabilized estimator: the first empirical block has correct dimension $m$ 
if
\[
    \rho_1(\hatPi)-\rho_m(\hatPi)
    \le \mathrm{tol}_N
    <
    \rho_1(\hatPi)-\rho_{m+1}(\hatPi).
\]

\paragraph{Non-Gaussianity.}
\Cref{fig:infty_sstar_recovery_main_mixture} repeats the 
main-text experiment from \cref{fig:infty_sstar_recovery_main} ($E=5$, $p=10$, $k=5$, $m=2$),
but draws data from the covariance-preserving Gaussian mixtures introduced in \cref{sec:gaussian_mixture_def} instead of Gaussians. 
The recovery curves are close to the Gaussian ones in \cref{fig:infty_sstar_recovery_main}. This is in line with \cref{subsec:consistency}, which does not require Gaussianity for \AnchorPCAinfty{} under its moment assumptions.
The dashed
curves show performance of $\texttt{FindS}_\star$ and are included to test its robustness to violations of its Gaussian data assumption.
\begin{figure}[tbp]
    \centering
    \includegraphics[width=1\linewidth]{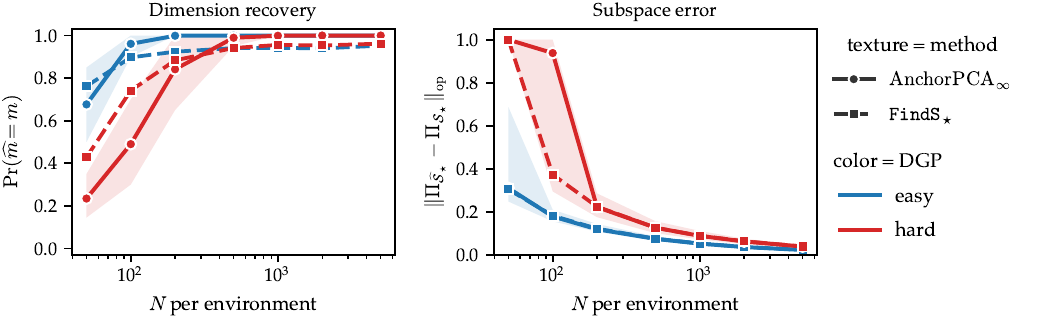}
    \caption{\textbf{Main random-subspace configuration with Gaussian mixtures.}
    Same distribution draws and aggregation as in
    \Cref{fig:infty_sstar_recovery_main}, but observations are drawn from a
    symmetric two-component Gaussian mixture preserving each covariance
    $\Sigma_e$. 
    The panels show 
    the fraction of samples for which $m$ was estimated correctly
    and median projector error. Solid lines use the first
    grouped eigenspace from \AnchorPCAinfty{}, while dashed lines show $\texttt{FindS}_\star$ using significance level $0.05$.
    Shaded bands are $10$--$90\%$ quantiles across distribution draws and are shown for \AnchorPCAinfty{}. 
    As suggested by the consistency results (\Cref{subsec:consistency}), for \AnchorPCAinfty{} the estimated recovery probability converges to one and the subspace error to zero and, for $\texttt{FindS}_\star$, 
    the estimated recovery probability
    approaches $0.95$.
    The close agreement with the Gaussian curves in \cref{fig:infty_sstar_recovery_main} suggests that, in this configuration, the \AnchorPCAinfty{} block-recovery mechanism and $\texttt{FindS}_\star$ are robust to this covariance-preserving non-Gaussian sampling perturbation.}
    \label{fig:infty_sstar_recovery_main_mixture}
\end{figure}

\paragraph{Small-$m$ configuration.}
\Cref{fig:infty_sstar_recovery_g5_p8_k5_m1} shows Gaussian results for the
small-$m$ configuration ($E=5$, $p=8$, $k=5$, $m=1$). This configuration has a
smaller population agreement-separation gap than the main configuration, but the
finite-sample recovery curves are comparable on the standard grid. In both
eigenvalue regimes, \AnchorPCAinfty{} reaches correct first-block dimension
recovery by the largest standard-grid sample sizes.
\begin{figure}[tbp]
    \centering
    \includegraphics[width=1\linewidth]{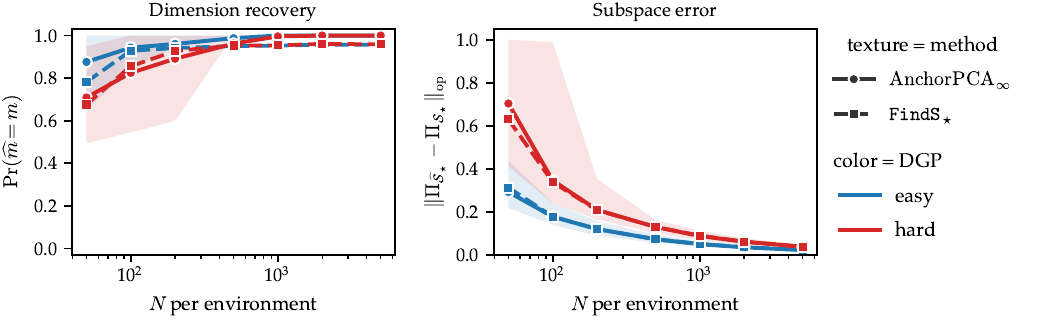}
    \caption{\textbf{Small-$m$ configuration.}
    Plots show the same as \Cref{fig:infty_sstar_recovery_main_mixture} for the small-$m$ configuration. Again, 
    the results are as suggested by \Cref{subsec:consistency}: e.g., in the left plot, 
    the errors of \AnchorPCAinfty{} and $\texttt{FindS}_\star$ approach zero and 0.05, respectively.}

    \label{fig:infty_sstar_recovery_g5_p8_k5_m1}
\end{figure}

\paragraph{Small-$E$ configuration.}
For the small-$E$ configuration ($E=2$, $p=8$, $k=5$, $m=2$), 
we use the extended sample-size grid up to $N=100000$. This configuration has a much smaller population agreement-separation gap than the other configurations, 
so we expect it to be a particularly difficult case.
\Cref{fig:infty_sstar_recovery_g2_p8_k5_m2} 
shows the same quantities as \Cref{fig:infty_sstar_recovery_main_mixture} but also shows
the
tolerance diagnostic.
In the top row, the first-block dimension is correct at $N=100000$ in about $0.68$ of easy-regime runs and $0.57$ of hard-regime runs. $\texttt{FindS}_\star$ improves dimension recovery in the easy regime, reaching about $0.96$ correct recovery, but in the hard regime it tends to overestimate the dimension, with median $\widehat m=3$. 
Intuitively,
this is indeed a difficult case for $\texttt{FindS}_\star$ because it estimates $m$
indirectly, by testing $r=\operatorname{dim} (\operatorname{span} \{I_p - \Pi_1^{(k)}, I_p - \Pi_2^{(k)}\}) = \operatorname{dim}(\SstarPerp) = p-m$ for all $p-k$ to $p-m_\mathrm{min}$; see~\cref{app:wald-test}. Thus, since $p-m = 6$ and $\operatorname{rank}(I_p-\Pi_e^{(k)}) = 3$, 
this configuration has the smallest possible $m=m_\mathrm{min} = 2$; equivalently, $r = p-m$ is maximal, so the correct dimension
$m=2$ is reached only if the test rejects all smaller candidate dimensions
$r=3,4,5$, leading to a small empirical power of the test. If sampling noise makes it fail to reject
$r=5$, it stops there and returns $\widehat m=p-5=3$. \AnchorPCAinfty{} uses a different criterion: it thresholds the empirical eigenvalues directly using $\operatorname{tol}_N$, so it can still return $m=2$ whenever the first two eigenvalues form a separated block, even when sampling noise makes $\SstarPerp$ appear five dimensional. The bottom row 
of \Cref{fig:infty_sstar_recovery_g2_p8_k5_m2}
illustrates why the small-$E$ configuration is also hard for $\AnchorPCAinfty{}$: 
$
\widehat\gamma_\Pi$ is much smaller than in the main configuration and is smaller than the automatic block tolerance over much of the extended grid.
We hypothesize that the performance of
\AnchorPCAinfty{}
could be improved by a more involved choice of the block tolerance. 
\begin{figure}[tbp]
    \centering
    \includegraphics[width=1\linewidth]{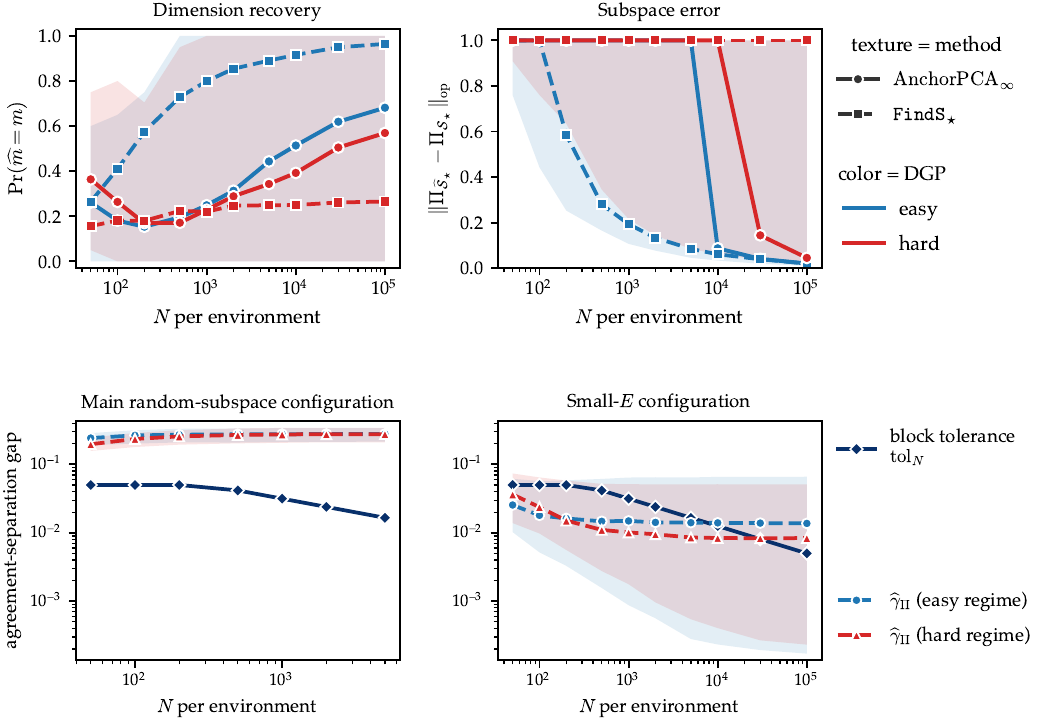}
    \caption{\textbf{Small-$E$ configuration and agreement-separation
    gap. 
    }
    The top row shows the same as \Cref{fig:infty_sstar_recovery_main_mixture} for the small-$E$ configuration. 
    As expected (see the discussion around $\gamma_\Pi$), this is a particularly hard case; indeed, the error of \AnchorPCAinfty{} decreases but this time more slowly. Even for large $N$, some of the distribution draws are difficult in that for none of the $20$ samples the estimated $m$ is correct, which results in wide shaded areas. 
    The 
    bottom row (log scale)
    compares the automatic block tolerance $\mathrm{tol}_N$ (dark blue) with the median $\widehat\gamma_\Pi$ 
    in the easy and hard regimes for the main random-subspace configuration (left) and the small-$E$ configuration (right). 
    The plot shows that the small-$E$ setting is difficult because
$\widehat\gamma_\Pi$ is 
much smaller than in the main configuration and%
, for many of the considered $N$, is larger than $\mathrm{tol}_N$.
    }
    \label{fig:infty_sstar_recovery_g2_p8_k5_m2}
\end{figure}

\subsection{Gas sensor array drift details}
\label{app:gas_sensor_drift_details}

\paragraph{Data and domains.}
The UCI gas-sensor drift archive contains $13{,}910$ measurements from $16$
metal-oxide chemical sensors exposed to six gases at multiple concentration
levels \citep{gas_sensor_array_drift,fonollosa2015chemicalgas}, see \Cref{fig:gas_sensor_class_composition}. Each
measurement comes from a $16$-channel metal-oxide gas sensor array and is summarized by eight features per channel, giving $p=128$ features in total. Two are steady-state features: the maximal resistance change $DR$ with respect to the baseline, and a normalized version of $DR$, obtained by dividing $DR$ by the sensor resistance measured under gas exposure. The remaining six are transient features computed from rising and decaying exponential moving averages of the signal.
The archive is organized into ten temporal batches: B1 contains months 1--2,
B2 months 3,4,8,9,10, B3 months 11--13, B4 months 14--15, B5 month 16,
B6 months 17--20, B7 month 21, B8 months 22--23, B9 months 24 and 30, and
B10 month 36. We use these batches as domains. 
Data are available from the UCI Machine Learning Repository
\citep{gas_sensor_array_drift} at
\url{https://doi.org/10.24432/C5MK6M} 
and are licensed under CC BY 4.0.

\begin{figure}[tbp]
    \centering
    \includegraphics[width=1\linewidth]{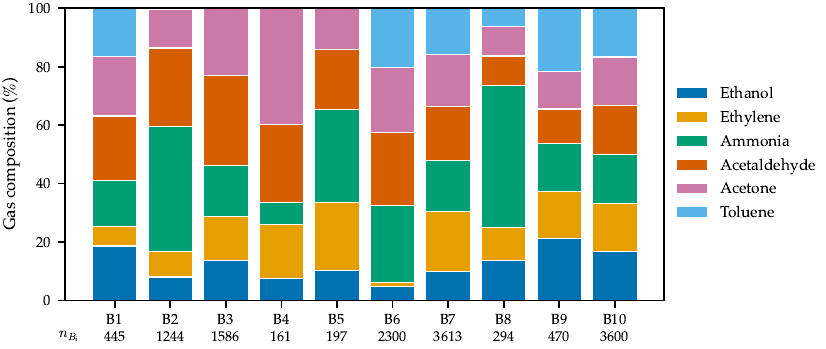}
    \caption{\textbf{Gas composition by temporal batch.}
  Stacked bars show the percentage of each gas class in batches B1--B10; numbers below the bars give the batch sizes $n_{B_i}$. The batches are strongly imbalanced and their gas mixtures vary over time, with some gases absent in several early batches and B10 balanced across the six gases.}
  \label{fig:gas_sensor_class_composition}
\end{figure}

\paragraph{Baselines and hyperparameters.}
We consider the following baselines. First, we fit PCA
on the averaged source covariances and term this \poolPCA{}. We solve
\AnchorPCA{} for $\lambda=1$ and 
\AnchorPCAinfty{} (using the block-stabilized plug-in estimator from \cref{subsec:finite_sample_implementation} with default tolerance)
Lastly, we consider a worst-case PCA variance, 
\texttt{norm-maxRegret} \citep{fries2026worstcaselowrankapproximations}
which is fit with
$10$ restarts, $2000$ iterations, learning rate $0.01$, and deterministic seed
$12020+10000s+101k$.

\paragraph{Protocol.}
For each last source batch $s\in\{3,\ldots,8\}$, and for each $k\in\{10,20,30\}$, we fit the rank-$k$ methods 
on B1--B$s$ and evaluate them on both the source batches and the held-out target
batches B$(s+1)$--B10. 
For each split, feature standardization is 
applied
once using the pooled source observations from B1--B$s$ only, and the resulting transformation is applied unchanged to all source and target batches. After this preprocessing, each batch covariance $\widehat\Sigma_b$ is computed from that batch's observations centered by their own empirical batch mean. For a representation $W\in\mathcal O_{p\times k}$ and batch covariance $\widehat\Sigma_b$, the reported percentage of explained variance (EV) is
\[
    \operatorname{EV}_b(W)
    :=
    100\,\frac{\Tr(W^\top\widehat\Sigma_b W)}{\Tr(\widehat\Sigma_b)}.
\]

\paragraph{Rolling temporal splits.}
\Cref{fig:gas_sensor_rolling_summary} repeats the source--target evaluation of \cref{fig:gas_sensor_b1_b6_k20} across all source--target splits $s\in\{3,\ldots,8\}$. The top row shows mean source EV, the bottom row mean target EV; shaded bands are min--max ranges across batches (shown only for \poolPCA{} and \AnchorPCAinfty{}%
). On source batches, \poolPCA{}
is best as it directly maximizes source average variance. On target batches, \AnchorPCAinfty{} usually outperforms \poolPCA{} and $\texttt{AnchorPCA}_{\lambda=1}$, with the largest gains at $k=20$ and $k=30$ 
and competitive performance at $k=10$. 
Only for $s=8$, \poolPCA{}
seems to perform better but this split is noisier
because its target set contains only B9 and B10. The overall pattern matches the main B1--B6/B7--B10 result in the main text.
\begin{figure}[tbp]
    \centering
    \includegraphics[width=1\linewidth]{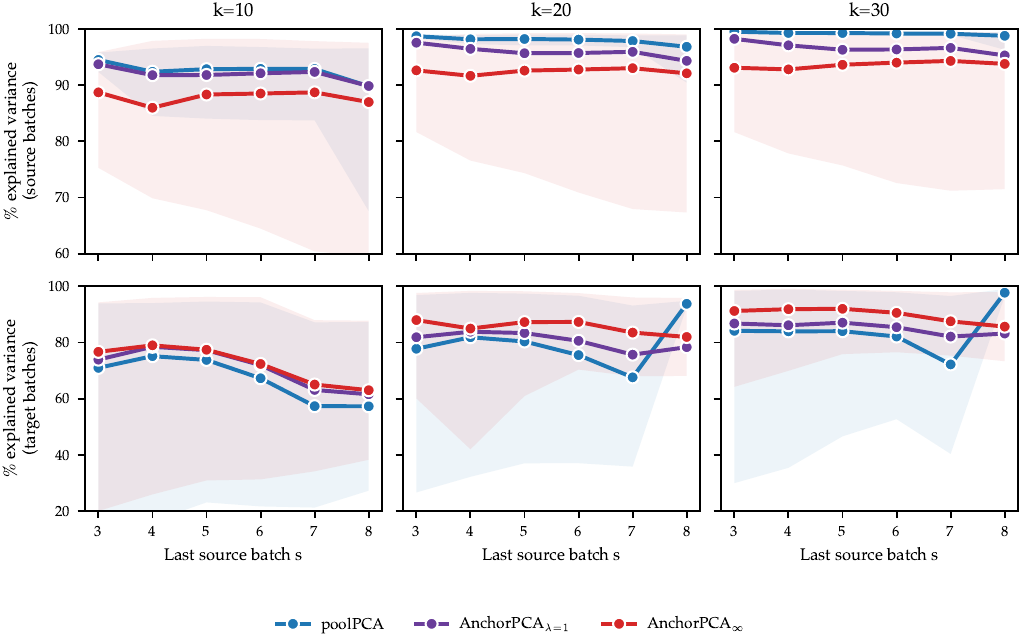}
    \caption{\textbf{Rolling source--target splits on gas-sensor drift data.}
    Each column fixes $k$; the horizontal axis is the last source batch $s$.
    Lines show mean explained variance over the source batches (top row) or
    held-out target batches (bottom row). Shaded bands show min--max ranges over
    batches for \poolPCA{} and \AnchorPCAinfty{}.}
    \label{fig:gas_sensor_rolling_summary}
\end{figure}

\paragraph{Explained variance of estimated $\Sstar$.}
The \AnchorPCAinfty{} solution has rank $k$, but its first 
grouped eigenspace is a finite-sample estimate of the invariant subspace $\Sstar$ with dimension $\widehat m$
(\cref{thm:consistency_empirical_methods}). 
For $s\in\{3,\ldots,8\}$ and $k\in\{10,20,30\}$, \Cref{fig:gas_sensor_sstar_same_dim} compares the percentage of explained variance contributed by this estimated block with source-only \poolPCA{} using the same dimension $\widehat m$ on B9 and B10.
The comparison is evaluated separately on target batches B9 and B10. Across $s=3,\ldots,8$, the estimated dimensions are $\widehat m=(3,3,3,2,2,2)$ for $k=10$, $\widehat m=(5,5,5,5,5,5)$ for $k=20$, and $\widehat m=(9,8,7,7,7,6)$ for $k=30$. On the target batch B9, the estimated block explains more variance than the same-dimensional \poolPCA{} baseline for $k=20$ and $k=30$. On B10, both subspaces explain high variance and are often close. 
\begin{figure}[tbp]
    \centering
    \includegraphics[width=1\linewidth]{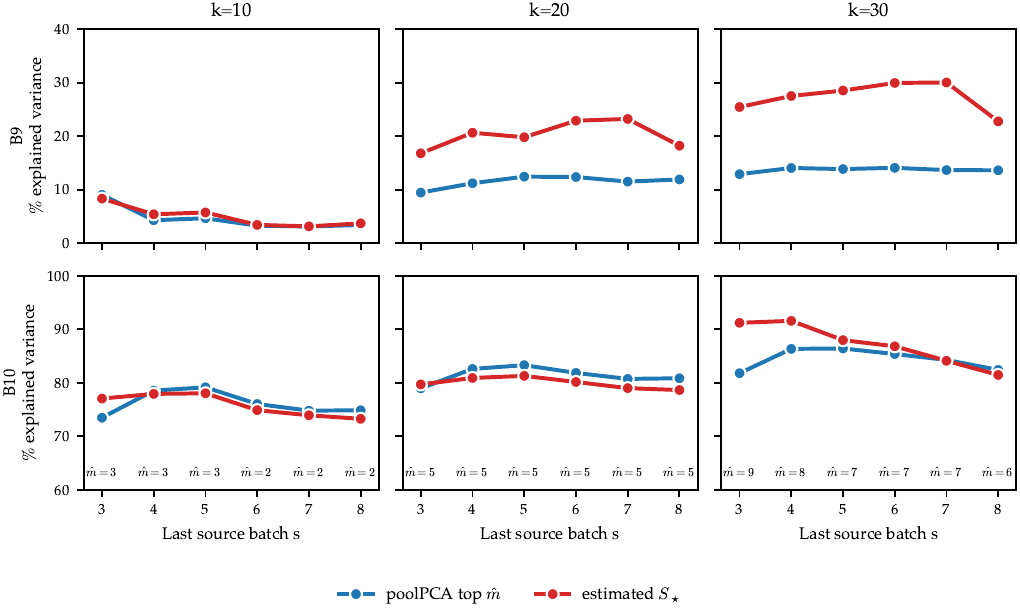}
    \caption{\textbf{Estimated invariant block vs.\ same-dimensional \poolPCA{}.}
    For each $(s,k)$, the red curve evaluates the first grouped eigenspace
    (estimated top eigenspace of $\barPi$, equivalently an estimate of $\Sstar$)
    selected by \AnchorPCAinfty{}; the blue curve evaluates the top
    $\widehat m$ source-only \poolPCA{} directions, where $\widehat m$ is the
    estimated dimension of $\Sstar$. Values are shown separately for target batches
    B9 and B10.}
    \label{fig:gas_sensor_sstar_same_dim}
\end{figure}

\section{Nonlinear extensions} 
\label{app:nonlinear_extensions}
In this appendix, we discuss nonlinear extensions of our approach to finding rank-$k$ robust or invariant subspaces via Anchor PCA translated to autoencoder-based approaches.

Autoencoders (AEs) are a popular approach for nonlinear dimension reduction~\citep{rumelhart1986learning,hinton2006reducing}.
As mentioned in~\cref{sec:setting_and_notation}, the projection matrix $\Pi_W=WW^\top$ can be interpreted as a linear autoencoder which first projects a given $x$ to a $k$-dimensional embedding via $W^\top\in\RR^{k\times p}$ and then reconstructs $x$ by mapping back to the original $p$-dimensional space via $W\in\RR^{p\times k}$.
An autoencoder functions analogously, but with possibly nonlinear encoder and decoder, typically parameterized as (deep) neural networks and optimized via stochastic gradient descent~\citep{robbins1951stochastic}.

\subsection{Vanilla autoencoder (AE)}
An autoencoder $(f,g)$ consists of an encoder $f:\RR^p\to\RR^k$, which produces $k$-dimensional embeddings $z=f(x)$, and a decoder $g:\RR^k\to \RR^p$, which produces reconstructions $\hat x=g(z)=g(f(x))$.
A standard autoencoder is trained to minimize the mean squared reconstruction error 
\begin{equation*}
    \Lcal_\textsc{ae}\left(f,g; \PP \right)=
    \EE_{x\sim\PP}\left[
    \norm{x-g(f(x))}^2_2
    \right],
\end{equation*}
where, in practice, $\PP$ is replaced by its empirical distribution.

Given data from multiple domains, one approach to learning a shared nonlinear $k$-dimensional embedding is to train an autoencoder on the pooled data, similar to \poolPCA{} in the linear setting. 
We refer to this approach as \texttt{poolAE}.
However, like \poolPCA{}, this approach is susceptible to focusing on spurious directions that exhibit high variation in only a few domains. 

\subsection{Anchor autoencoder (\texttt{AnchorAE})}
We now discuss a method we call
\texttt{AnchorAE} as a more robust alternative. 
Let $\{(f_e,g_e)\}_{e=1}^E$ be domain-specific AEs, i.e.,
\begin{equation*}
    (f_e,g_e)\in\argmin_{f,g} \Lcal_\textsc{ae}\left(f,g; \PP_e\right).
\end{equation*}
In analogy to~\cref{eq:anchorPCA}, we define \texttt{AnchorAE}$_\lambda$ as the following unconstrained optimization problem
\begin{equation}
    \label{eq:anchorAE}
    \min_{f,g} 
    \EE_{x\sim\bar\PP}
    \left[
    \norm{x-g(f(x))}^2_2\right]
    +\lambda \sum_{e=1}^E
    \Rcal\left(g\circ f, g_e\circ f_e\right)
\end{equation}
where $\bar\PP=\frac{1}{E}\sum_{e=1}^E\PP_e$ denotes the (empirical) pooled distribution and $\Rcal$ is a suitable invariance penalty between the shared and domain-specific AEs (discussed in more detail below).

The first term in~\cref{eq:anchorAE} is the pooled reconstruction error, which can be viewed as the AE-equivalent or proxy of explained variance in PCA. 
The second term is meant to capture disagreement among the nonlinear projections produced by the shared $g\circ f$ and domain-specific  $g_e\circ f_e$ AEs, similar to the disagreement term \[\fnorm{\Pi_W-\Pi^{(e)}_k}\]
between projection matrices in \AnchorPCA{} in~\cref{eq:anchorPCA}.

\paragraph{On the choice of invariance penalty.}
Whereas in the linear case, it is possible to compare the projection matrices directly, it is less clear how to assess agreement between nonlinear functions, as these are infinite-dimensional objects whose global behavior is not determined by their local behavior. In principle, we can consider measuring invariance w.r.t.\ domain-specific nonlinear projections through suitable norms $\norm{(g\circ f)-(g_e\circ f_e)}$ in function space. 
However, each AE is unconstrained outside the support of the (empirical) distribution used to train it, i.e., $\PP_e$ for $(f_e,g_e)$ and $\bar\PP$ for $(f,g)$. Hence, we should evaluate disagreement only on the respective shared supports.
We therefore propose the following invariance penalty
\begin{equation*}
     \Rcal\left(g\circ f, g_e\circ f_e\right) = \EE_{x\sim\PP_e}
    \left[\norm{g(f(x))-g_{e}(f_{e}(x))}^2_2\right],
\end{equation*}
which can be viewed as the regret incurred by using the shared instead of the domain-specific AE for reconstructing data from domain $e$~\citep{freni2025maximum,agarwal2022minimax}.

\textbf{Limitations.} \texttt{AnchorAE} requires training $E+1$ separate AEs, since to the best of our knowledge --- and
unlike in the linear case---the problem cannot be reduced to a modified single learning objective for $(f,g)$. 
This approach may thus be computationally expensive, especially when the number of domains is large.
In the following subsections, we  discuss an approach for training a single, distributional AE from multi-domain data.

\subsection{Distributional principal autoencoder~(DPA)}
The distributional principal autoencoder~(DPA)~\citep{shen2024distributional} is a nonlinear dimension reduction approach, which can be interpreted as a nonlinear variant of PCA that, in contrast to standard autoencoders, emphasizes distributional rather than mean reconstruction.
We first review DPA in the single-domain setting and then discuss a possible extension for our multi-domain setting.

Given a number $k\leq p$ of components, DPA consists of an encoder $f:\RR^p\to\RR^k$, which produces embeddings $z=f(x)$, and a stochastic decoder or generator $g:\RR^k\times \RR^{p-k} \to \RR^p$, which, in addition to $z$, takes $(p-k)$-dimensional noise $\epsilon\sim\Ncal(0,I)$ as input and allows for sampling reconstructions $\hat x=g(z,\epsilon)$.
The DPA objective is constructed such that, for a fixed encoder~$f$, the optimal DPA decoder $g^*$ maps a given embedding~$z$ to the distribution of $X$ given $f(X)=z$,
\begin{equation}
    \label{eq:optimal_decoder_DPA}
    g^*(z,\epsilon; f)\overset{d}{=}
    \left(X|f(X)=z\right),
\end{equation}
where $\overset{d}{=}$ denotes equality in distribution. In other words, the optimal decoder evaluated at $z$ matches the distribution of realizations of $X$ that are mapped by the encoder to $z$.
At the same time, the DPA encoder minimizes the variability in the distributions in~\eqref{eq:optimal_decoder_DPA} by encoding the first $k$ 
`principal' 
components. 
Both of these goals are achieved by minimizing the following DPA objective
\begin{align*}
 &\Lcal_\textsc{dpa}^\beta(f,g;\PP)\\
 &\qquad =
 \EE_{X\sim\PP,\epsilon\sim\Ncal(0, I)
 }\norm{X-g(f(X),\epsilon)}^\beta_2
 -\frac{1}{2} \EE_{X\sim\PP,\epsilon,\epsilon'\iidsim\Ncal(0, I)
 }\norm{g(f(X),\epsilon)-g(f(X),\epsilon')}^\beta_2,
\end{align*}
where $\beta\in(0,2)$.
This objective corresponds to the negative expected energy score~\citep{gneiting2007strictly} between $X$ and the corresponding stochastic decoder output, conditional on the encoding $f(X)$ of $X$.
Due to this conditioning, the DPA objective differs from an energy distance by a normalization constant which depends on the encoder and encourages capturing principal (i.e., variation-minimizing) components. %

\subsection{Multi-domain DPA}
One could
extend the DPA approach to our task of learning invariant subspaces from multi-domain data as follows.
To map data from different distributions $\{\PP_e\}_{e\in\Ecal}$ to an invariant subspace, we consider a single shared encoder $f:\RR^p\to\RR^k$, whereas to allow for accurate distributional reconstruction across domains, we consider domain-specific decoders $\left\{g_e:\RR^k\times \RR^{p-k} \to \RR^p\right\}_{e\in \Ecal}$.

We could now train $E$ domain-specific DPAs. However, in contrast to \AnchorPCA{} and \texttt{AnchorAE}, it is unclear how to then compare or penalize disagreement between a shared DPA and domain-specific ones since, by construction, each DPA can fully reconstruct any distribution if given sufficient capacity in the form of additional $p-k$-dimensional noise as input to the stochastic decoder.
Moreover, as discussed above, training a separate model for each domain is computationally demanding. 

This motivates considering
 the following objective for training a single multi-domain DPA:
\begin{equation}
    \label{eq:multi_domain_DPA_objective}
    \Lcal_\textsc{md-dpa}^\beta\left(f, \left\{g_e\right\}_{e\in \Ecal}; \left\{\PP_e\right\}_{e\in \Ecal}\right):=\sum_{e\in\Ecal} 
    \Lcal_\textsc{dpa}^\beta(f,g_e;\PP_e).
\end{equation}
Since domain-specific decoders are used, this is different from a  distributional, nonlinear version of \poolPCA{}.
Intuitively, the additional reconstruction capacity from domain-specific decoders and noise 
may enable the encoder to focus on shared directions, rather than explaining pooled variance.

\end{document}